\documentclass[review]{elsarticle}
\usepackage{amsmath,amsfonts}
\usepackage{lineno,hyperref}
\modulolinenumbers[5]
\usepackage{array}
\usepackage{textcomp}
\usepackage{stfloats}
\usepackage{url}
\usepackage{verbatim}
\biboptions{sort&compress}
\usepackage{amsthm}
\usepackage{color}
\usepackage{multirow}
\usepackage[para,online,flushleft]{threeparttable}
\usepackage{booktabs}
\usepackage{xcolor}
\newtheorem{theorem}{Theorem}
\newtheorem{definition}{Definition}
\newtheorem{lemma}{Lemma}

\setlength{\oddsidemargin}{0.1in} 
\textwidth 147mm \textheight 210mm \topmargin=-1.5cm
\leftmargin=-11cm \rightmargin=5cm

\setlength{\textwidth}{6.5in} 


\bibliographystyle{elsarticle-num}
\makeatletter
\def\ps@pprintTitle{%
   \let\@oddhead\@empty
   \let\@evenhead\@empty
   \let\@oddfoot\@empty
   \let\@evenfoot\@oddfoot}
\makeatother
\begin{document}

\begin{frontmatter}

\title{\large Deeper Insights into Deep Graph Convolutional
Networks: Stability and Generalization}

\author[add1]{Guangrui Yang}
\ead{yanggrui@mail2.sysu.edu.cn}
\author[add2]{Ming Li\corref{corresponding}}
\ead{mingli@zjnu.edu.cn}
\author[add3]{Han Feng\corref{corresponding}}
\ead{hanfeng@cityu.edu.hk}
\author[add3]{Xiaosheng Zhuang}
\ead{xzhuang7@cityu.edu.hk}

\cortext[corresponding]{Corresponding authors.}
\address[add1]{Department of Mathematics, College of Mathematics and Informatics, South China Agricultural University, Guangzhou, China.}
\address[add2]{Zhejiang Key Laboratory of Intelligent Education Technology and Application,
Zhejiang Normal University, Jinhua, China}
\address[add3]{Department of Mathematics, City University of Hong Kong, Hong Kong, China}

\begin{abstract}
Graph convolutional networks (GCNs) have emerged as powerful models for graph learning tasks, exhibiting promising performance in various domains. While their empirical success is evident, there is a growing need to understand their essential ability from a theoretical perspective. Existing theoretical research has primarily focused on the analysis of single-layer GCNs, while a comprehensive theoretical exploration of the stability and generalization of deep GCNs remains limited. In this paper, we bridge this gap by delving into the stability and generalization properties of deep GCNs, aiming to provide valuable insights by characterizing rigorously the associated upper bounds. Our theoretical results reveal that the stability and generalization of deep GCNs are influenced by certain key factors, such as the maximum absolute eigenvalue of the graph filter operators and the depth of the network. Our theoretical studies contribute to a deeper understanding of the stability and generalization properties of deep GCNs, potentially paving the way for developing more reliable and well-performing models.

\end{abstract}

\begin{keyword}
Graph convolutional networks (GCNs); Generalization gap; Deep GCNs; Uniform stability.
\end{keyword}
\end{frontmatter}

\section{Introduction}
Graph-structured data is pervasive across diverse domains, including knowledge graphs, traffic networks, and social networks to name a few \cite{LIANG2022103714,ma2021deep}.
Several pioneering works \cite{gori2005new,scarselli2008graph} introduced the initial concept of graph neural networks (GNNs), incorporating recurrent mechanisms and necessitating neural network parameters to define contraction mappings. Concurrently, Micheli \cite{micheli2009neural} introduced the neural network for graphs, commonly referred to as NN4G, over a comparable timeframe. It is worth noting that the NN4G diverges from recurrent mechanisms and instead employs a feed-forward architecture, exhibiting similarities to contemporary GNNs. In recent years, (contemporary) GNNs have gained significant attention as an effective methodology for modeling graph data \cite{yao2022multi,hamilton2020graph, GNNBook2022,bianchi2021graph,jiang2023graph,zhang2024decouple}. To obtain a comprehensive understanding of GNNs and deep learning for graphs, we refer the readers to relevant survey papers for an extensive overview \cite{bacciu2020gentle,Survey_SunMS, Survey_ZhangCQ, Survey_ZhuWW}.

Among the various GNN variants, one of the most powerful and frequently used GNNs is graph convolutional networks (GCNs).
A widely accepted perspective posits that GCNs can be regarded as an extension or generalization of traditional spatial filters, which are commonly employed in Euclidean data analysis, to the realm of non-Euclidean data. Due to its success on non-Euclidean data, GCN has attracted widespread attention on its theoretical exploration.  Recent works on GCNs includes understanding over-smoothing \cite{2018Oversmoothing,2020Oversmoothing,2019GCN_expressive_power,2020DropEdge}, interpretability and explainability\cite{2020interpretability,2021explainability,yuan2022explainability,schnake2021higher,bouritsas2022improving}, expressiveness \cite{2018how_powerful,2019Approximation_GCNs,2019understanding}, and generalization \cite{2019Du,2020fastlearning_generalization,2018VC-dim,2020Rademacher,2020RademacherOptimization,2021Rademacher,2021Rademacher_learning,2023Generalization_GNNs,2019Stable_GCN,
 2023Stable_GCN,2021PAC-generalization,2023PAC-generalization,2018_Neural-Tangent-Kernel,2019_Neural-Tangent-Kernel}. 
 In this paper, we specifically address the generalization of GCNs to provide a bound on their generalization gap.

Investigating the generalization of GCNs is essential in understanding its underlying working principles and capabilities from a theoretical perspective. However, the theoretical establishment in this area is still in its infancy. In recent work \cite{2019Stable_GCN}, Verma and Zhang provided a novel technique based on algorithmic stability to investigate the generalization capability of single-layer GCNs in semi-supervised learning tasks.
 Their results indicate that the stability of a single-layer GCN trained with the stochastic gradient descent (SGD) algorithm is dependent on the largest absolute eigenvalue of graph filter operators. This finding highlights the crucial role of graph filters in determining the generalization capability of single-layer GCNs, providing guidance for designing effective graph filters for these networks. On the other hand, a number of prior studies have shown that deep GCNs possess greater expressive power than their single-layer counterparts. Consequently, it is essential to extend the generalization results of single-layer GCNs to their multi-layer counterparts. This will help us understand the effect of factors (e.g., graph filters, number of layers) on the generalization capability of deep GCNs.

 In this paper, we investigate the generalization properties of deep GCNs. Building on the stability framework of \cite{2019Stable_GCN}, we analyze the uniform stability of deep GCNs in semi-supervised learning, while developing a more refined theoretical treatment suited to deep architectures. Our analysis reveals a strong connection between the generalization gap of deep GCNs and the characteristics of the graph filter, particularly the number of layers. In particular, we show that when the maximum absolute eigenvalue (or the largest singular value) of the graph filter operator remains invariant with respect to graph size, the generalization gap diminishes asymptotically at a rate of $O(1/\sqrt{m})$ as the training sample size $m$ grows. This result explains why normalized graph filters generally outperform non-normalized ones in deep GCNs. Furthermore, our findings indicate that increasing depth can enlarge the generalization gap and consequently degrade performance, thereby offering theoretical guidance for selecting an appropriate number of layers when designing deep GCNs. We then empirically validate our theoretical results through experiments on three benchmark datasets: Cora, Citeseer, and Pubmed, demonstrating strong consistency between theory and practice. \textcolor{black}{In addition, we further discuss how our theoretical framework extends to advanced architectures, including GCNII \cite{chen2020simple} and Graph Transformer \cite{GT}, thereby highlighting its broader applicability and its potential to inspire future theoretical studies on more complex GNN variants.}

The key contributions of our paper are as follows:
\begin{itemize}
    \item We establish the uniform stability of deep GCNs trained with SGD, thereby extending the earlier results on single-layer GCNs presented in \cite{2019Stable_GCN}.
    \item We provide a rigorous upper bound for the generalization gap of deep GCNs and highlight the key factors that govern their generalization ability. \textcolor{black}{Moreover, we further discuss how our theoretical framework extends naturally to advanced GNN architectures, including GCNII and Graph Transformer models.}
    \item We conduct empirical studies on three benchmark datasets for node classification, which strongly validate our theoretical findings regarding the influence of graph filters, as well as the depth and width of deep GCNs.
\end{itemize}


The remainder of this paper is organized as follows. In Section \ref{sec:Related_work}, an overview of prior studies on the generalization of GCNs (or generic GNNs) is presented, along with a comparative analysis highlighting the similarities and distinctions between our work and previous research. Section \ref{sec:Preliminaries} offers an exposition of the essential concepts. The primary findings of this paper are given in Section \ref{sec:Main_result}. Experimental studies designed to validate our theoretical findings are presented in Section \ref{Sec:Exp}.  \textcolor{black}{In Section~\ref{sec:Implications}, we discuss how our findings extend to advanced GNN architectures, including GCNII and Graph Transformer models.} Section \ref{sec:Conclusion} concludes the paper with additional remarks. The detailed proofs of our theoretical results are deferred to the \textbf{Appendix} section.

\section{Related Work}\label{sec:Related_work}
Theoretical studies on the generalization capability of GCNs mainly employ three methodologies: Vapnik–Chervonenkis (VC) dimension \cite{2018VC-dim,2021Rademacher_learning}, Rademacher complexity \cite{2020Rademacher,2020RademacherOptimization,2021Rademacher,2021Rademacher_learning,2023Generalization_GNNs}, and algorithmic stability \cite{2019Stable_GCN,2023Stable_GCN,2023Stable_Lp,Litpami2023}. Other approaches include PAC-Bayesian theory \cite{2021PAC-generalization,2023PAC-generalization}, neural tangent kernels (NTKs) \cite{2018_Neural-Tangent-Kernel,2019_Neural-Tangent-Kernel}, algorithm alignment \cite{xu2019can,xu2020neural}, and methods from statistical physics and random matrix theory \cite{shi2024homophily}. \textcolor{black}{For a broader perspective, we refer readers to the recent survey \cite{vasileiou2025survey}, which provides a comprehensive overview of generalization theory for message-passing GNNs.}

\textbf{VC-Dimension and Rademacher Complexity.} Scarselli et al. \cite{2018VC-dim} study the generalization capability of GNNs by deriving upper bounds on the growth order of their VC-dimension. While VC-dimension is a classical tool for establishing learning bounds, it does not capture the structure of the underlying graph. Similarly, \cite{2021Rademacher_learning} provides VC-dimension–based error bounds for GNNs, but the results are trivial and fail to reflect the benefits of degree normalization. To address graph-specific effects, Esser et al. \cite{2021Rademacher_learning} analyze upper bounds using transductive Rademacher complexity (TRC), highlighting how graph convolutions and network architectures influence generalization. Tang et al. \cite{2023Generalization_GNNs} establish high-probability generalization bounds for popular GNNs via TRC-based analysis of transductive SGD. However, their bounds scale with the parameter dimension, limiting tightness for large models.

\textbf{Algorithmic Stability.} Beyond capacity-based measures, algorithmic stability serves as an important framework for understanding GNN generalization. Building on the work of Hardt et al. \cite{hardt2016train}, Verma and Zhang \cite{2019Stable_GCN} show that one-layer GCNs exhibit uniform stability and provide generalization bounds that scale with the largest absolute eigenvalue of the graph filter operator. Extending this line, Liu et al. \cite{2023Stable_Lp} analyze the stability of single-layer GCNs trained with an SGD-proximal algorithm under $\ell_{p}$-regularization, yielding a more refined theoretical understanding. These studies, however, remain restricted to single-layer architectures. Cong et al. \cite{cong2021provable} examine GNNs under uniform transductive stability, showing that deeper models improve stability and reduce generalization error, whereas our work adopts a different stability formulation. Ng and Yip \cite{2023Stable_GCN} investigate stability and generalization in two-layer GCNs under an eigen-domain formulation, relying on spectral graph convolution \cite{bruna2013spectral}. Because this formulation requires computationally expensive eigendecomposition of the graph Laplacian, it does not scale to large node-classification tasks. Within this methodological line, the closest studies to ours are \cite{2019Stable_GCN} and \cite{2023Stable_GCN}, but our analysis focuses on deep GCNs without assuming a spectral-based formulation.

\textbf{Other Methodologies.} Alternative perspectives on GNN generalization also exist. The pioneering work of \cite{2021PAC-generalization} introduces PAC-Bayesian analysis for GCNs and message-passing neural networks, later extended in \cite{2023PAC-generalization} to provide tighter bounds linked to the graph diffusion matrix. The NTK framework introduced by \cite{2018_Neural-Tangent-Kernel} enables analysis of infinitely wide GNNs trained by gradient descent, with \cite{2019_Neural-Tangent-Kernel} extending this framework to multi-layer settings. However, NTK-based analyses typically focus on graph classification rather than the more challenging transductive node-classification setting. Additional work explores distinct theoretical frameworks, including topology-sampling techniques \cite{li2022generalization}, analysis on large random graphs \cite{keriven2020convergence}, and NTK-based loss landscape analysis of wide GCNs \cite{zhou2025tighter}. For further perspectives, we refer readers to the survey \cite{jegelka2022theory}, which synthesizes emerging theoretical approaches to characterizing GNN capabilities.

\section{Preliminaries and Notations}\label{sec:Preliminaries}
In this section, we describe the problem setup considered in this paper and review fundamental concepts of uniform stability for training algorithms, which form the basis of our subsequent analysis. \textcolor{black}{For clarity, we first summarize the main symbols used in this paper in the table below.}
\begin{table}[htbp!]
    \footnotesize
    \centering    \renewcommand{\arraystretch}{1.0}
	\setlength{\tabcolsep}{1.0pt}
    \textcolor{black}{\caption{Frequently used notations.}\label{Tab:Notations}}
    \setlength{\tabcolsep}{0.3em} 
    \renewcommand{\arraystretch}{1.2}
    \renewcommand{\aboverulesep}{0pt}
    \renewcommand{\belowrulesep}{0pt}
\begin{tabular}{@{}cl@{}}
\toprule
\textcolor{black}{\textbf{Notation}} & \multicolumn{1}{c}{\textcolor{black}{\textbf{Description}}} \\ \midrule
  $\textcolor{black}{g({\mathbf{L}})}$ & \textcolor{black}{graph filter operator used in the considered deep GCNs}\\
\textcolor{black}{$C_{g}$} & \textcolor{black}{the 2-norm of $g(\mathbf{L})$, i.e., $C_{g}:=\|g(\mathbf{L})\|_{2}$} \\
\textcolor{black}{$C_{\mathbf{X}}$} & \textcolor{black}{Frobenius norm of the input feature $\mathbf{\mathbf{X}}$, i.e., $C_{\mathbf{X}}:=\|\mathbf{X}\|_{F}$} \\
\textcolor{black}{$K$} & \textcolor{black}{number of hidden layers of the considered deep GCNs} \\
\textcolor{black}{$\alpha_{\sigma}$, $\upsilon_{\sigma}$} & \textcolor{black}{parameters w.r.t the continuity of activation function $\sigma(\cdot)$} \\
\textcolor{black}{$\nabla\sigma$} & \textcolor{black}{the derivative of activation function $\sigma(\cdot)$} \\
\textcolor{black}{$\alpha_{\ell}$, $\upsilon_{\ell}$} & \textcolor{black}{parameters w.r.t the continuity of the loss function $\ell(\cdot,\cdot)$} \\
\textcolor{black}{$M$} & \textcolor{black}{the upper bound of loss function $\ell(\cdot,\cdot)$} \\
\textcolor{black}{$\mathcal{A}_{\mathcal{S}}$} & \textcolor{black}{the learning algorithm for deep GCNs trained on dataset $\mathcal{S}$} \\
\textcolor{black}{$m$} & \textcolor{black}{the number of samples  in the trained dataset $\mathcal{S}$} \\
\textcolor{black}{$\eta$} & \textcolor{black}{the learning rate of $\mathcal{A}_{\mathcal{S}}$} \\
\textcolor{black}{$T$} & \textcolor{black}{number of iterations for training $\mathcal{A}_{\mathcal{S}}$ using SGD } \\
\textcolor{black}{$\mu_{m}$} & \textcolor{black}{the uniform stability of a learning algorithm $\mathcal{A}_{\mathcal{S}}$} \\
\textcolor{black}{$\boldsymbol{\delta}_{\mathbf{x}}$} & \textcolor{black}{the indicator vector with respect to node $\mathbf{x}$} \\
\textcolor{black}{$\boldsymbol{\delta}_{i}$} & \textcolor{black}{the indicator vector with respect to index $i$} \\
\textcolor{black}{$\mathbf{X}^{(k)}$} & \textcolor{black}{the output feature matrix of the $k$-th layer} \\
\textcolor{black}{$\triangle\mathbf{X}^{(k)}$} & \textcolor{black}{the variation of $\mathbf{X}^{(k)}$ in two GCNs} \\
\textcolor{black}{$\mathbf{W}^{(k)}$} & \textcolor{black}{the parameter matrix specific to the $k$-th layer} \\
\textcolor{black}{$B$} & \textcolor{black}{upper bound for 2-norm of $\{\mathbf{W}^{(1)},\dots,\mathbf{W}^{(K)},\mathbf{w}\}$} \\
\textcolor{black}{$\triangle\mathbf{W}^{(k)}$} & \textcolor{black}{the variation of $\mathbf{W}^{(k)}$ in two GCNs} \\
\textcolor{black}{$\triangle\theta$} & \textcolor{black}{ $\triangle\theta=\{\triangle\mathbf{W}^{(1)},\dots,\triangle\mathbf{W}^{(K)},\triangle\mathbf{w}\}$} \\
\textcolor{black}{$\mathbf{W}^{(k)}_{t}$} & \textcolor{black}{the learnt $\mathbf{W}^{(k)}$ trained after $t$ iterations}\\
\textcolor{black}{$\triangle\mathbf{W}^{(k)}_{t}$} & \textcolor{black}{the variation of $\mathbf{W}^{(k)}_{t}$ of two GCNs trained after $t$ iterations} \\
\textcolor{black}{$\triangle\theta_{t}$} & \textcolor{black}{ $\triangle\theta_{t}=\{\triangle\mathbf{W}_{t}^{(1)},\dots,\triangle\mathbf{W}_{t}^{(K)},\triangle\mathbf{w}_{t}\}$} \\
    \bottomrule
\end{tabular}
\end{table}

\subsection{Deep Graph Convolutional Networks}

Let $\mathcal{G}=(\mathcal{V},\mathcal{E},\mathbf{A})$ denote an undirected graph with a node set $\mathcal{V}$ of size $N$,  an edge set $\mathcal{E}$ and the adjacency matrix $\mathbf{A}\in\mathbb{R}^{N\times N}$. As usual, $\mathbf{L}:=\mathbf{D}-\mathbf{A}$ is denoted as its conventional graph Laplacian, where $\mathbf{D}\in\mathbb{R}^{N\times N}$ signifies the degree diagonal matrix. Furthermore, $g(\mathbf{L})\in\mathbb{R}^{N\times N}$ represents a graph filter  and is defined as a function of $\mathbf{L}$ (or its normalized versions). We denote by $C_{g}=\|g(\mathbf{L})\|_{2}$ the maximum absolute eigenvalue of a symmetric filter $g(\mathbf{L})$ or the maximum singular value of an asymmetric $g(\mathbf{L})$.

We denote by $\mathbf{X}=(\mathbf{x}_{1},\mathbf{x}_{2},\dots,\mathbf{x}_{N})^{\top}\in\mathbb{R}^{N\times d_{0}}$ the input features ($d_{0}$ stands for input dimension) and $\mathbf{x}_{j}\in\mathbb{R}^{d_{0}}$ the node feature of node $j$, while $C_{\mathbf{X}}=\|\mathbf{X}\|_{F}$ represents the Frobenius norm of $\mathbf{X}$. For the input feature $\mathbf{X}$, a deep GCN with $g(\mathbf{L})$ updates the representation  as follows:
\begin{align*}
\mathbf{X}^{(k)}=\sigma(g(\mathbf{L})\mathbf{X}^{(k-1)}\mathbf{W}^{(k)}),~~k=1,2,\dots,K,
\end{align*}
where $\mathbf{X}^{(k)}\in\mathbb{R}^{N\times d_{k}}$ is the output feature matrix of the $k$-th layer with $\mathbf{X}^{(0)}=\mathbf{X}$, the matrix $\mathbf{W}^{(k)}\in\mathbb{R}^{d_{k-1}\times d_{k}}$ represents the trained parameter matrix specific to the $k$-th layer. The function $\sigma(\cdot)$ denotes a nonlinear activation function applied within the GCN model. For simplicity, we set a final output in a single dimension, that is, the final output label of $N$ nodes is given by
\begin{equation}\label{equ:GCN_model}
\mathbf{y}=\sigma\big(g(\mathbf{L})\mathbf{X}^{(K)}\mathbf{w}\big), \\
\end{equation}
where $\mathbf{y}\in\mathbb{R}^{N}$ and $\mathbf{w}\in\mathbb{R}^{d_{K}}$.

As defined above, the deep GCN \eqref{equ:GCN_model} with learnable parameters
$$\theta=\{\mathbf{W}^{(1)},\mathbf{W}^{(2)},\dots,\mathbf{W}^{(K)},\mathbf{w}\}$$
is a $K+1$ layers GCN with $K$ hidden layers and a final output layer, and in the case of $K=0$, it degenerates into the single-layer GCN studied in \cite{2019Stable_GCN}.
\subsection{The SGD Algorithm}\label{sec:3.2}

We denote by $\mathcal{D}$ the unknown joint distribution of input features and output labels. 
Let $$\mathcal{S}:=\big\{(\mathbf{x}_{j},y_{j})\big\}_{j=1}^{m}$$
be the training set i.i.d sampled from $\mathcal{D}$
and $\mathcal{A}_{\mathcal{S}}$ be a learning algorithm for a deep GCN trained on $\mathcal{S}$.
 For a deep GCN model \eqref{equ:GCN_model} with parameters $\theta=\{\mathbf{W}^{(1)},\dots,\mathbf{W}^{(K)},\mathbf{w}\}$, denote $\mathcal{A}_{\mathcal{S}}(\mathbf{x})=f(\mathbf{x}|\theta_S)=\sigma\big(\boldsymbol{\delta}_{\mathbf{x}}^{\top}g(\mathbf{L})\mathbf{X}^{(K)}\mathbf{w}\big)$ as the output of node $\mathbf{x}$, where $\theta_S$ is the corresponding learned parameter and $\boldsymbol{\delta}_{\mathbf{x}}$ is the indicator vector with respect to node $\mathbf{x}$.
 For a loss function $\ell: \mathbb{R}\times \mathbb{R}\to \mathbb{R}_+$,
 the generalization error or risk $R(\mathcal{A}_{\mathcal{S}})$ is defined by
$$R(\mathcal{A}_{\mathcal{S}}):=\mathbb{E}_{\mathbf{z}}\Big[\ell(f(\mathbf{x}|\theta_S),y)\Big],$$
where the expectation is taken over $\mathbf{z}=(\mathbf{x},y)\sim\mathcal{D}$, and the empirical error or risk $R_{emp}(\mathcal{A}_{\mathcal{S}})$ is
$$R_{emp}(\mathcal{A}_{\mathcal{S}}):=\frac{1}{m}\sum_{j=1}^{m}\ell(f(\mathbf{x}_{j}|\theta_S),y_{j}).$$
When considering a randomized algorithm $\mathcal{A}_{\mathcal{S}}$,
\begin{equation}\label{equ:G_gap}
\epsilon_{gen}(\mathcal{A}_{\mathcal{S}}):=\mathbb{E}_{\mathcal{A}}\Big[R(\mathcal{A}_{\mathcal{S}})-R_{emp}(\mathcal{A}_{\mathcal{S}})\Big]
\end{equation}
gives the generalization gap between the generalization error and the empirical error,
where the expectation $\mathbb{E}_{\mathcal{A}}$ corresponds to the inherent randomness of $\mathcal{A}_{\mathcal{S}}$.

In this paper, $\mathcal{A}_{\mathcal{S}}$ is  considered to be the algorithm given by the SGD algorithm. Following the approach employed in \cite{2019Stable_GCN}, our analysis focuses solely on the randomness inherent in $\mathcal{A}_{\mathcal{S}}$ arising from the SGD algorithm, while disregarding the stochasticity introduced by parameter initialization. The SGD algorithm for a deep GCN \eqref{equ:GCN_model}  aims to optimize its empirical error on a dataset $\mathcal{S}$ by updating parameters iteratively. For $t\in\mathbb{N}$ and considering the parameters $\theta_{t-1}$ obtained after $t-1$ iterations, the $t$-th iteration of SGD involves randomly drawing a sample $(\mathbf{x}_{t},y_{t})$ from the dataset $\mathcal{S}$. Subsequently, parameters $\theta$ are iteratively updated as follows:
\begin{equation}\label{equ:update_W_and_w}
\theta_{t}=\theta_{t-1}-\eta\nabla_{\theta}\ell
(f(\mathbf{x}_{t}|\theta_{t-1}),y_{t}),
\end{equation}
with the learning rate $\eta>0$.

\subsection{Uniform Stability}
For the sake of estimating the generalization gap $\epsilon_{gen}(\mathcal{A}_{\mathcal{S}})$ of $\mathcal{A}_{\mathcal{S}}$, we invoke the notion of uniform stability of $\mathcal{A}_{\mathcal{S}}$ as adopted in \cite{2005stability_random,2019Stable_GCN}.

Let
$$S^{\setminus i}=\big\{(\mathbf{x}_{j},y_{j})\big\}_{j=1}^{i-1}\cup\big\{(\mathbf{x}_{j},y_{j})\big\}_{j=i+1}^{m}$$
be the dataset obtained by removing the $i$-th data point in $\mathcal{S}$,
and
$$S^{i}=\big\{(\mathbf{x}_{j},y_{j})\big\}_{j=1}^{i-1}\cup
\{(\mathbf{x}'_{i},y'_{i})\}\cup\big\{(\mathbf{x}_{j},y_{j})\big\}_{j=i+1}^{m}$$
 the dataset obtained by replacing the $i$-th data point in $\mathcal{S}$. Then, the formal definition of uniform stability of a randomized algorithm $\mathcal{A}_{\mathcal{S}}$ is given in the following.
\begin{definition}
[Uniform Stability \cite{2019Stable_GCN}] \label{def:uni_stability}
A randomized algorithm $\mathcal{A}_{\mathcal{S}}=f(\mathbf{x}|\theta_S)$ is considered to be $\mu_{m}$-uniformly stable in relation to a loss function $\ell$ when it fulfills the following condition:
\begin{equation}\label{equ:Uni_stability}
  \sup_{\mathcal{S},\mathbf{z}}\Big|\mathbb{E}_{\mathcal{A}}\big[\ell(\hat{y}, y)\big]-\mathbb{E}_{\mathcal{A}}\big[\ell(\hat{y}',y)\big]\Big|\leq\mu_{m},
  \end{equation}
  where $\mathbf{z}=(\mathbf{x},y)\sim \mathcal{D}$, $\hat{y}=f(\mathbf{x}|\theta_S)$ and $\hat{y}'=f(\mathbf{x}|\theta_{\mathcal{S}^{\setminus i}})$.
\end{definition}
As shown in Definition \ref{def:uni_stability}, $\mu_{m}$ indicates a bound on how much the variation of the training set $\mathcal{S}$ can influence the output of $\mathcal{A}_{\mathcal{S}}$. It further implies the following property:
\begin{equation}\label{equ:Uni_stability1}
\sup_{\mathcal{S},\mathbf{z}}\Big|\mathbb{E}_{\mathcal{A}}\big[\ell(\hat{y}, y)\big]-\mathbb{E}_{\mathcal{A}}\big[\ell(\hat{y}',y)\big]\Big|\leq2\mu_{m},
  \end{equation}
  where $\mathbf{z}=(\mathbf{x},y)\sim \mathcal{D}$, $\hat{y}=f(\mathbf{x}|\theta_S)$ and $\hat{y}'=f(\mathbf{x}|\theta_{\mathcal{S}^{i}})$.

Moreover, it is shown that the uniform stability of a learning algorithm $\mathcal{A}_{\mathcal{S}}$ can yield the following upper bound on the generalization gap $\epsilon_{gen}(\mathcal{A}_{\mathcal{S}})$.
\begin{lemma}
[Stability Guarantees \cite{2019Stable_GCN}]\label{lem:Sta_Guarantees}
Suppose that a randomized algorithm $\mathcal{A}_{\mathcal{S}}$ is $\mu_{m}$-uniformly stable with a bounded loss function $\ell$. Then, with a probability of at least $1-\delta$, considering the random draw of $\mathcal{S},\mathbf{z}$ with $\delta\in(0,1)$, the following inequality holds for the expected value of the generalization gap:
$$ 
\epsilon_{gen}(\mathcal{A}_{\mathcal{S}})
     \leq2\mu_{m}+ \bigg(4m\mu_{m}+M\bigg)\sqrt{\frac{\log\frac{1}{\delta}}{2m}},
$$
\textcolor{black}{where $M$ is an upper bound of the loss function $\ell$, i.e., $0\leq\ell(\cdot,\cdot)\leq M$.}
\end{lemma}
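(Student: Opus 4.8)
The plan is to apply the bounded-differences (McDiarmid) concentration inequality to the scalar function $\Phi(\mathcal{S}) := \mathbb{E}_{\mathcal{A}}[R(\mathcal{A}_{\mathcal{S}}) - R_{emp}(\mathcal{A}_{\mathcal{S}})] = \epsilon_{gen}(\mathcal{A}_{\mathcal{S}})$, viewed as a function of the $m$ i.i.d.\ training samples that constitute $\mathcal{S}$. This reduces the claim to two sub-problems: bounding the expectation $\mathbb{E}_{\mathcal{S}}[\Phi(\mathcal{S})]$, which will supply the leading $2\mu_m$ term, and bounding the sensitivity of $\Phi$ to changing a single coordinate of $\mathcal{S}$, which will feed the McDiarmid tail and produce the second term. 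Both ingredients are to be extracted from the uniform stability hypothesis via Definition~\ref{def:uni_stability} and its replacement form \eqref{equ:Uni_stability1}.

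First I would bound the expectation. By the i.i.d.\ assumption and symmetry across the $m$ indices, $\mathbb{E}_{\mathcal{S}}[R_{emp}(\mathcal{A}_{\mathcal{S}})]$ equals $\mathbb{E}_{\mathcal{S}}\mathbb{E}_{\mathcal{A}}[\ell(f(\mathbf{x}_i|\theta_S),y_i)]$ for any fixed index $i$, while $\mathbb{E}_{\mathcal{S}}[R(\mathcal{A}_{\mathcal{S}})]$ can be rewritten, through a renaming of the fresh test point $\mathbf{z}$ against the $i$-th training point (both being identically distributed and independent), as $\mathbb{E}_{\mathcal{S},\mathbf{z}}\mathbb{E}_{\mathcal{A}}[\ell(f(\mathbf{x}_i|\theta_{\mathcal{S}^{i}}),y_i)]$. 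Subtracting, the integrand is precisely the quantity controlled by the replacement form of uniform stability in \eqref{equ:Uni_stability1}, so $\mathbb{E}_{\mathcal{S}}[\Phi(\mathcal{S})] \leq 2\mu_m$.

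Next I would establish the bounded-difference property for $\Phi$ under a single-coordinate change $\mathcal{S} \mapsto \mathcal{S}^{i}$. The out-of-sample term $R$ shifts by at most $2\mu_m$, obtained by applying \eqref{equ:Uni_stability1} pointwise in the test variable and integrating over $\mathbf{z}$. For the empirical term $R_{emp}$, the $m-1$ unchanged summands each move by at most $2\mu_m$, contributing at most $2\mu_m$ after the $1/m$ averaging, whereas the single replaced summand can change by at most $M/m$ since $0 \leq \ell \leq M$. Collecting these contributions gives $|\Phi(\mathcal{S}) - \Phi(\mathcal{S}^{i})| \leq 4\mu_m + M/m$ uniformly in $i$.

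Finally I would invoke McDiarmid's inequality with the per-coordinate constant $c = 4\mu_m + M/m$, yielding $\Phi(\mathcal{S}) \leq \mathbb{E}_{\mathcal{S}}[\Phi(\mathcal{S})] + c\sqrt{(m/2)\log(1/\delta)}$ with probability at least $1-\delta$ over the draw of $\mathcal{S}$. Substituting the expectation bound $\mathbb{E}_{\mathcal{S}}[\Phi(\mathcal{S})] \leq 2\mu_m$ and using the identity $c\sqrt{m} = (4m\mu_m + M)/\sqrt{m}$ collapses the tail term to $(4m\mu_m + M)\sqrt{\log(1/\delta)/(2m)}$, which is exactly the stated bound. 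I expect the main obstacle to be the careful bookkeeping in the expectation step: the relabeling that turns the out-of-sample risk into an in-sample quantity on the perturbed dataset $\mathcal{S}^{i}$ must respect the i.i.d.\ structure exactly so that the difference collapses onto \eqref{equ:Uni_stability1}, and in the sensitivity step the asymmetric $M/m$ contribution from the replaced sample must be tracked separately from the $\mu_m$ terms rather than absorbed into them.
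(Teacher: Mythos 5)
Your argument is correct and is precisely the standard Bousquet--Elisseeff/McDiarmid route that the cited source \cite{2019Stable_GCN} uses for this lemma; the paper itself imports the result without reproving it. Both ingredients check out: the symmetrization of $\mathbb{E}_{\mathcal{S}}[R-R_{emp}]$ onto the replacement form \eqref{equ:Uni_stability1} gives the $2\mu_m$ expectation bound, and the per-coordinate sensitivity $4\mu_m+M/m$ fed into McDiarmid reproduces the $(4m\mu_m+M)\sqrt{\log(1/\delta)/(2m)}$ tail exactly.
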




\section{Main Results} \label{sec:Main_result}
This section presents  
an established upper bound on the generalization gap $\epsilon_{gen}(\mathcal{A}_{\mathcal{S}})$ as defined in \eqref{equ:G_gap} for deep GCNs trained using the SGD algorithm.
 Notably, this generalization bound, derived from a meticulous analysis of the comprehensive back-propagation algorithm, demonstrates the enhanced insight gained through the utilization of SGD.

\subsection{Assumptions} \label{Sec:Assumption}
First, we make some assumptions about the considered deep GCN model \eqref{equ:GCN_model}, which are necessary to derive our results.

\textbf{Assumption 1.} The activation function $\sigma:\mathbb{R}\rightarrow\mathbb{R}$ is assumed to satisfy the following:
  \begin{enumerate}
    \item $\alpha_{\sigma}$-Lipschitz:
    $$|\sigma(x)-\sigma(y)|\leq\alpha_{\sigma}|x-y|,~~\forall~x,y\in\mathbb{R}.$$
    \item $\nu_{\sigma}$-smooth:
    $$|\nabla\sigma(x)-\nabla\sigma(y)|\leq\nu_{\sigma}|x-y|,~~\forall~x,y\in\mathbb{R}.$$
    \item $\sigma(0)=0$.
  \end{enumerate}
   With these assumptions, the derivative of $\sigma$, denoted by $\nabla\sigma$, is bounded, i.e.,  $|\nabla\sigma(\cdot)|\leq\alpha_{\sigma}$,
   and $\|\sigma(\mathbf{X})\|_{F}\leq\alpha_{\sigma}\|\mathbf{X}\|_{F}$ holds for any matrix $\mathbf{X}$. It can be easily verified that activation functions such as \textrm{ELU} and \textrm{tanh} satisfy the above assumptions.

  \textbf{Assumption 2.} Let $\hat{y}$ and $y$ be the predicted and true labels, respectively. We denote the loss function $\ell: [y_{\min},y_{\max}]\times[y_{\min},y_{\max}]\rightarrow\mathbb{R}$ by $\ell(\hat{y},y)$. Similar to \cite{2023Stable_GCN}, we adopt
  the following assumptions for $\ell$.
\begin{enumerate}
  \item The loss function $\ell$ exhibits continuity with respect to the variables $(\hat{y},y)$
 and possesses continuous differentiability with respect to  $\hat{y}$.
  \item The loss function $\ell$ satisfies $\alpha_{\ell}$-Lipschitz with respect to $\hat{y}$:
  $$|\ell(\hat{y},y)-\ell(\hat{y}',y)|\leq\alpha_{\ell}|\hat{y}-\hat{y}'|,~~\forall~\hat{y},\hat{y}',y\in[y_{\min},y_{\max}].$$
  \item The loss function $\ell$ meets $\nu_{\ell}$-smooth with respect to $\hat{y}$:
  $$\bigg|\frac{\partial\ell}{\partial\hat{y}}(\hat{y},y)-\frac{\partial\ell}{\partial\hat{y}}(\hat{y}',y)\bigg|
  \leq\nu_{\ell}|\hat{y}-\hat{y}'|,~~\forall~\hat{y},\hat{y}',y\in[y_{\min},y_{\max}].$$
\end{enumerate}
 With these assumptions, $|\frac{\partial\ell}{\partial\hat{y}}(\hat{y},y)|\leq\alpha_{\ell}$, and $\ell$ is bounded, i.e., $0\leq\ell(\hat{y},y)\leq M$.

\textbf{Assumption 3.} The learned parameters  $\{\mathbf{W}^{(1)},\dots,\mathbf{W}^{(K)},\mathbf{w}\}$ during the training procedure with limited iterations satisfies

    $$\max\Big\{\|\mathbf{W}^{(1)}\|_{2},\dots,\|\mathbf{W}^{(K)}\|_{2},~\|\mathbf{w}\|_{2}\Big\}\leq B.$$

\subsection{Generalization Gap}\label{sec:general}
This section presents the main results of this paper. Under the assumptions made in Section \ref{Sec:Assumption}, the bound on the generalization gap of deep GCNs is provided in the following theorem.
\begin{theorem}[Generalization gap for deep GCNs]\label{thm:G_gap}
Consider the deep GCN model, defined in equation \eqref{equ:GCN_model},  which comprises $K$ hidden layers and utilizes $g(\mathbf{L})$ as the graph filter operator. The model is trained on $\mathcal{S}$ using SGD for $T$ iterations. Under Assumptions {\bf 1, 2 and 3} stated in Section \ref{Sec:Assumption}, the following expected generalization gap is valid with a probability of at least $1-\delta$, where $\delta\in(0,1)$:
\begin{align}
\epsilon_{gen}(\mathcal{A}_{\mathcal{S}})
\leq &\frac{1}{\sqrt{m}}\Bigg\{O\bigg(\Big((K+1)\eta\kappa_{1}+\eta\kappa_{2}\Big)^{T}\bigg)+M\sqrt{\frac{\log\frac{1}{\delta}}{2}}\Bigg\},\label{equ:model_G_gap}
\end{align}
  where
  \begin{align}
  \kappa_{1}:=& (\upsilon_{\ell}\alpha_{\sigma}^{2}
   +\alpha_{\ell}\nu_{\sigma})(B\alpha_{\sigma}C_{g})^{2K}C^{2}_{g}C^{2}_{\mathbf{X}} +\alpha_{\ell}(B\alpha_{\sigma}C_{g})^{K-1}\alpha_{\sigma}^{2}C_{g}^{2}C_{\mathbf{X}}, \label{equ:kappa1}
  \end{align}
  and
  \begin{equation}\label{equ:kappa2}
\kappa_{2}:=\nu_{\sigma}\big(B\alpha_{\sigma}C_{g}\big)^{K}C_{g}^{2}C^{2}_{\mathbf{X}}
\Big(\sum_{j=0}^{K-1}(j+1)(B\alpha_{\sigma}C_{g})^{j}\Big).
  \end{equation}
\end{theorem}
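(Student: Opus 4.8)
The plan is to invoke the stability-to-generalization reduction of Lemma~\ref{lem:Sta_Guarantees} and thereby reduce the whole theorem to producing a sharp uniform-stability constant $\mu_m$ for the SGD scheme \eqref{equ:update_W_and_w} applied to the deep GCN \eqref{equ:GCN_model}. Concretely, I would establish $\mu_m = O\big(\tfrac{1}{m}\big((K+1)\eta\kappa_1 + \eta\kappa_2\big)^T\big)$, up to the constants absorbed in the $O$, and then substitute into Lemma~\ref{lem:Sta_Guarantees}. The substitution is routine: the term $2\mu_m$ is of order $\tfrac{1}{m}(\cdots)^T$ and is negligible against $\tfrac{1}{\sqrt m}(\cdots)^T$; the dominant contribution $4m\mu_m\sqrt{\log(1/\delta)/(2m)} = 4\sqrt m\,\mu_m\sqrt{\log(1/\delta)/2}$ collapses to $\tfrac{1}{\sqrt m}O\big((\cdots)^T\big)$; and the residual $M\sqrt{\log(1/\delta)/(2m)}$ is precisely the second term inside the braces of \eqref{equ:model_G_gap}. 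Hence everything rests on the stability estimate.

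To bound $\mu_m$ as in Definition~\ref{def:uni_stability}, I would first peel off the loss and activation using Assumptions~1 and~2. By $\alpha_\ell$-Lipschitzness of $\ell$, $|\mathbb{E}_{\mathcal A}\ell(\hat y,y) - \mathbb{E}_{\mathcal A}\ell(\hat y',y)| \le \alpha_\ell\,\mathbb{E}_{\mathcal A}|\hat y - \hat y'|$, where $\hat y = f(\mathbf x|\theta_{\mathcal S})$ and $\hat y' = f(\mathbf x|\theta_{\mathcal S^{\setminus i}})$. Since $f$ is a composition of $K+1$ maps of the form $g(\mathbf L)(\cdot)\mathbf W$ followed by an $\alpha_\sigma$-Lipschitz activation with $\sigma(0)=0$, the forward pass obeys $\|\mathbf X^{(k)}\|_F \le (B\alpha_\sigma C_g)^k C_{\mathbf X}$ and the output is Lipschitz in the parameters with a constant assembled from these layerwise factors; this converts the bound into a bound on the parameter deviation $\mathbb{E}_{\mathcal A}\|\theta_{\mathcal S}^{(T)} - \theta_{\mathcal S^{\setminus i}}^{(T)}\|$ after $T$ SGD steps. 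The remaining task is to control this deviation.

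For the parameter deviation I would run the two SGD trajectories under the standard synchronous coupling that draws the same sample index at every iteration (the factor-$2$ replacement version \eqref{equ:Uni_stability1} is cleanest, since both datasets then have size $m$). At a step selecting an index common to both sets, both updates apply the \emph{same} gradient map to different parameters, so smoothness of the objective in $\theta$ gives an expansion factor $1+\eta L$, where $L$ is the gradient-Lipschitz (``smoothness'') constant of $\ell(f(\cdot|\theta),y)$ in $\theta$. At the single differing index (probability $1/m$ under uniform sampling) the two updates separate by at most $2\eta G$, where $G$ is the gradient-norm (``Lipschitz'') constant of the objective. Taking expectations yields the one-step recursion $\mathbb{E}\Delta_t \le (1+\eta L)\,\mathbb{E}\Delta_{t-1} + \tfrac{2\eta G}{m}$, whose solution $\mathbb{E}\Delta_T \le \tfrac{2G}{mL}\big[(1+\eta L)^T-1\big]$ produces the $\tfrac1m(\cdots)^T$ behaviour, with $L$ playing the role of $(K+1)\kappa_1+\kappa_2$ and $G$ absorbed into the $O$.

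The main obstacle, and the only genuinely GCN-specific part, is the explicit computation of $G$ and $L$ through the full back-propagation recursion for the $K$-hidden-layer model, which is exactly what forces the forms in \eqref{equ:kappa1} and \eqref{equ:kappa2}. Here I would differentiate $\ell(f(\mathbf x|\theta),y)$ with respect to each block $\mathbf W^{(k)}$ and $\mathbf w$, bound every forward activation by $(B\alpha_\sigma C_g)^k C_{\mathbf X}$ and every backward sensitivity by the corresponding product of $B\alpha_\sigma C_g$ factors, and then differentiate once more to reach the Hessian-type bound governing $L$. Summing the first-order gradient contributions over the $K+1$ parameter blocks explains the multiplicity $(K+1)$ in front of $\kappa_1$; the loss-smoothness term $\nu_\ell\alpha_\sigma^2$ and the activation-smoothness cross term $\alpha_\ell\nu_\sigma$, each multiplied by $(B\alpha_\sigma C_g)^{2K}C_g^2C_{\mathbf X}^2$, form the ``diagonal'' second-order pieces collected in $\kappa_1$; and the activation-smoothness contributions that arise when differentiating the gradient through an \emph{intermediate} layer accumulate across layers with weight $(j+1)(B\alpha_\sigma C_g)^j$, producing the sum $\sum_{j=0}^{K-1}(j+1)(B\alpha_\sigma C_g)^j$ in $\kappa_2$. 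The delicate bookkeeping is to track these nested products so that the exponents of $B\alpha_\sigma C_g$, the placement of $C_g$ and $C_{\mathbf X}$, and the combinatorial weights match \eqref{equ:kappa1} and \eqref{equ:kappa2} exactly; this is precisely where the analysis must go beyond the single-layer case of \cite{2019Stable_GCN}.
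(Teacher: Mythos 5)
Your proposal is correct and follows essentially the same route as the paper: reduce via the stability-to-generalization lemma, bound the output perturbation by the parameter deviation using the Lipschitz/forward-pass estimates, control the deviation with the coupled-SGD recursion $\mathbb{E}\Delta_t \le (1+\eta L)\mathbb{E}\Delta_{t-1}+\tfrac{2\eta G}{m}$ where $L=(K+1)\kappa_1+\kappa_2$, and extract $\kappa_1,\kappa_2$ from the layerwise back-propagation bounds (the paper's Lemmas \ref{lemma:Step_1}--\ref{lem:diff_sample} and Theorem \ref{thm:bound_variation_theta_T}). The only work left implicit in your sketch is the explicit recursive bookkeeping for the backward sensitivities $\gamma_k$ and $h_k$ that yields the exact forms \eqref{equ:kappa1}--\eqref{equ:kappa2}, which you correctly identify as the GCN-specific core of the argument.
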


A fundamental correlation between the generalization gap and the parameters governing deep GCNs is induced by Theorem \ref{thm:G_gap}. This correlation implies that the uniform stability of deep GCNs, trained using the SGD algorithm, exhibits an increase with the number of samples when the upper bound approaches zero as the sample size $m$ tends to infinity. Specifically,
it is observed that if the value of $C_{g}$ (presenting the largest absolute eigenvalue of a symmetric $g(\mathbf{L})$ or the maximum singular value of an asymmetric $g(\mathbf{L})$) remains unaffected by the size $N$, a generalization gap decaying at the order of $O(1/\sqrt{m})$ is obtained.
To compare with the result in \cite{2019Stable_GCN}, let us discuss at length the role of  $g(\mathbf{L})$ and the hidden layer number $K$ on the generalization gap.

According to \eqref{equ:kappa1} and \eqref{equ:kappa2}, $\kappa_{1}=O\Big(C_{g}^{2K+2}\Big)$ and $\kappa_{2}=O\Big(C_{g}^{2K+1}\Big)$. Therefore,
  the bound on the generalization gap of deep GCNs in Theorem \ref{thm:G_gap} is
  \begin{equation}\label{equ:G_gap_multi_layer}
\epsilon_{gen}(\mathcal{A}_{\mathcal{S}})
\leq\frac{1}{\sqrt{m}}\left(O\Big(C_g^{2T(K+1)}\Big)+M\sqrt{\frac{\log\frac{1}{\delta}}{2}}\right).
  \end{equation}
 When $K=0$, the GCN model \eqref{equ:GCN_model} degenerates into the single-layer GCN model considered in \cite{2019Stable_GCN}. At this point, according to \eqref{equ:G_gap_multi_layer}, we have
 \begin{equation}\label{equ:G_gap_single_layer}
\epsilon_{gen}(\mathcal{A}_{\mathcal{S}})
\leq\frac{1}{\sqrt{m}}\left(O\Big(C_g^{2T}\Big)+M\sqrt{\frac{\log\frac{1}{\delta}}{2}}\right),
  \end{equation}
 which is the same as the result of \cite{2019Stable_GCN}.

\textbf{Remarks.} Based on \eqref{equ:G_gap_multi_layer}, we present certain observations regarding the impact of filter $g(\mathbf{L})$ and the hidden layer number $K$ on the generalization capacity of deep GCNs in \eqref{equ:GCN_model}.
\begin{itemize}
    \item \textbf{Normalized vs. Unnormalized Graph Filters:} We examine the three most commonly utilized filters: 1) $g_{1}(\mathbf{L})=\mathbf{A}+\mathbf{I}$, 2) $g_{2}(\mathbf{L})=\mathbf{D}^{-1/2}\mathbf{A}\mathbf{D}^{-1/2}
    +\mathbf{I}$, and 3) $g_{3}(\mathbf{L})=\mathbf{D}^{-1}\mathbf{A}
    +\mathbf{I}$. For the unnormalized filter $g_{1}$, its maximum absolute eigenvalue is bounded by $O(N)$. Consequently, as the value of $m$ approaches the magnitude to $N$, the upper bound indicated by \eqref{equ:G_gap_multi_layer} tends towards $O(N^p)$ for some $p>0$, leading to an impractical upper bound when $N$ become infinitely large. On the contrary,  for two normalized filters $g_{2}$ and $g_{3}$, their largest absolute eigenvalues are bounded and independent of graph size $N$. Therefore, both filters yield a diminishing generalization gap at a rate of $O(\frac{1}{\sqrt{m}})$ as $m$ goes to infinity. This discovery underscores the superior performance of normalized filters over unnormalized counterparts in deep GCNs. This observation is consistent with the findings in \cite{2019Stable_GCN,2023Stable_GCN}.
\item \textcolor{black}{\textbf{Low-pass vs. High-pass Graph Filters:} Our theoretical results are not restricted to the choice of $g(\mathbf{L})$ as either a low-pass or a high-pass filter. To illustrate, consider two exponential filters with symmetric $\mathbf{L}$: i) a low-pass filter $g_{\mathrm{low}}(\lambda)=e^{-b\lambda^{2}}$ and ii) a high-pass filter $g_{\mathrm{high}}(\lambda)=1-e^{-a\lambda^{2}}$, where $a,b>0$. In this setting, it is straightforward to verify that
    $$\|g_{\textrm{high}}(\mathbf{L})\|_{2}<\|g_{\textrm{low}}(\mathbf{L})\|_{2}=1.$$ Consequently, both filters lead to a vanishing generalization gap at the rate of $O\left(\tfrac{1}{\sqrt{m}}\right)$ as $m \to \infty$.}
 \item \textbf{The Role of Parameter $K$:} It is evident that, when the values of $C_{g}$ and $T$ are fixed, the upper bound \eqref{equ:G_gap_multi_layer} exhibits an exponential dependence on parameter $K$. This observation implies that a larger value $K$ leads to an increase in the upper bound of the generalization gap, thereby offering valuable insights for the architectural design of deep GCNs. This finding diverges from the ones presented in \cite{2019Stable_GCN,2023Stable_GCN}, as these studies do not account for generic deep GCNs and overlook the significance of the parameter $K$.
\end{itemize}

\textcolor{black}{Furthermore, based on Theorem \ref{thm:G_gap}, we give a brief analysis of the impact of $d_{k}$ (width of the $k$-th layer) on the generalization. Actually, the impact of $d_{k}$ on the generalization is reflected in its impact on $B$. More specifically, let us consider the case where parameters
$\{\mathbf{W}^{(1)},\dots,\mathbf{W}^{(K)},\mathbf{w}\}$ belong to the set $\mathcal{X}_{\xi}$, where
 $$\mathcal{X}_{\xi}:=\{\mathbf{W}:~~\|\mathbf{W}\|_{\infty}\leq\xi\},$$
 i.e., $\mathcal{X}_{\xi}$ is the collection of all matrices whose elements' absolute values are all less than $\xi$. At this point, for $\mathbf{W}^{(k)}\in\mathbb{R}^{d_{k-1}\times d_{k}}$, we have
 $$\sup_{\mathbf{W}^{(k)}\in\mathcal{X}_{\xi}}\|\mathbf{W}^{(k)}\|_{2}\leq\sup_{\mathbf{W}^{(k)}\in\mathcal{X}_{\xi}}\|\mathbf{W}^{(k)}\|_{F}
 \leq\xi\sqrt{d_{k-1}d_{k}}.$$
 Therefore, a larger $d_{k}$ (i.e., width of the $k$-th layer) results in a larger upper bound of $\|\mathbf{W}^{(k)}\|_{2}$, which implies that a larger $d_{k}$ results in a larger $B$ (see Assumption 3 in Section \ref{Sec:Assumption}). Finally, Theorem \ref{thm:G_gap} indicates that a larger $B$ leads to a larger bound on the generalization gap, thus we conclude that a larger $d_{k}$ leads to a larger bound on the generalization gap. To justify this argument, we add some experimental studies in Section \ref{Sec:Exp}. The empirical results are consistent with our analysis.
 }

\begin{table*}[htbp!]
    \footnotesize
    \centering    
    \textcolor{black}{\caption{Comparison of the generalization gap estimated based on uniform stability.\label{Tab:Com}}} 
    \setlength{\tabcolsep}{2.2em} 
    \renewcommand{\arraystretch}{1.2}
    \renewcommand{\aboverulesep}{0pt}
    \renewcommand{\belowrulesep}{0pt}
\begin{threeparttable}
\begin{tabular}{ccc}
\toprule

         \textcolor{black}{Reference}& \textcolor{black}{Model Architecture} & \textcolor{black}{{Estimated Upper Bound of the Generalization Gap}} \\
\hline
  \textcolor{black}{\cite{2019Stable_GCN}}   &\textcolor{black}{shallow}
  & \textcolor{black}{$\frac{1}{\sqrt{m}}\bigg(O\Big((1+\eta\upsilon_{\ell}\upsilon_{\sigma}C_g^{2})^{T}\Big)+M\sqrt{\frac{\log\frac{1}{\delta}}{2}}\bigg)$}
  \\
\hline
  \textcolor{black}{\cite{2023Stable_GCN}}    & \textcolor{black}{{shallow}}
  & \textcolor{black}{$\frac{1}{\sqrt{m}}\bigg(O\Big(\eta\alpha_{\ell}\alpha_{\sigma}c_{2,T}\sum\limits_{t=0}^{T-1}c_{6,t}\prod\limits_{s=t+1}^{T-1}
 (1+\eta c_{5,s})\Big)+M\sqrt{\frac{\log\frac{1}{\delta}}{2}}\bigg)$} \\
\hline
\textcolor{black}{\cite{2023Stable_Lp}} &\textcolor{black}{shallow} & \textcolor{black}{$\frac{1}{\sqrt{m}}\Bigg\{O\bigg(C^2_{g}\eta C_{p,\lambda}\sum\limits_{t=1}^{T}(C_{p,\lambda}(1+(\alpha_{\sigma}^{2}+\alpha_{\ell})\eta C_{g}^{2}))^{t-1}\bigg)
+M\sqrt{\frac{\log\frac{1}{\delta}}{2}}\Bigg\}$}\\
\hline
  \textcolor{black}{Ours}  &\textcolor{black}{ deep} & \textcolor{black}{$\frac{1}{\sqrt{m}}\Bigg\{O\bigg(\Big((K+1)\eta\kappa_{1}+\eta\kappa_{2}\Big)^{T}\bigg)
+M\sqrt{\frac{\log\frac{1}{\delta}}{2}}\Bigg\}$} \\
\hline
	\end{tabular}
\begin{tablenotes}
\scriptsize
\item \textcolor{black}{Note: $m$ is the number of samples in the trained dataset; $M$ is the upper bound of loss function $\ell(\cdot,\cdot)$; $\eta>0$ is the learning rate;  $\delta\in(0,1)$; $T$ is the number of iterations for training $\mathcal{A}_{\mathcal{S}}$ using SGD; $C_{g}$ represents the 2-norm of filter $g(\mathbf{L})$; $\alpha_{\sigma}$, $\upsilon_{\sigma}$ are two parameters w.r.t the continuity of activation function $\sigma(\cdot)$; $\alpha_{\ell}$, $\upsilon_{\ell}$ are two parameters w.r.t the continuity of the loss function $\ell(\cdot,\cdot)$.  $c_{2,t}, c_{6,t}, c_{5,t}>0$ ($t=0,1,\dots,T$) represent some specific parameters defined in \cite{2023Stable_GCN}. $C_{p,\lambda}=\frac{28}{p(p-1)\lambda_{t}}(B_{\ell}/\lambda)^{(3-p)/p}$, where $B_{\ell}>0$ is a parameter related to loss function $\ell(\cdot,\cdot)$, $1<p\leq2$, $\lambda>0$ is the regularization parameter and $\lambda_{t}>0$ is another regularization parameter dependent on $\lambda$ and $t$, as detailed in \cite{2023Stable_Lp}. $K$ is number of hidden layers of the considered deep GCNs; $\kappa_{1}$ and $\kappa_{2}$ are two parameters as defined in \eqref{equ:kappa1} and \eqref{equ:kappa2}.}
\end{tablenotes}
\end{threeparttable}
{\footnotesize }
\end{table*}

\textcolor{black}{
Table \ref{Tab:Com} offers a concise summary of various upper bounds on the generalization gap, derived through the application of uniform stability. From Table \ref{Tab:Com}, we can see that all the works derive a generalization gap decaying at the order of $O(1/\sqrt{m})$. However, compared to the other three works which only consider shallow GCNs, our work explores the case of deep GCNs.  We should point out that the generalization of single-layer GCNs into deep GCNs is not trivial. To derive the results for deep GCNs, we tackle two significant challenges that arise specifically in the context of deep GCNs, which are unique to deep GCNs and are non-existent in single-layer models. {\bf The first challenge} is the derivation of the gradient of the final output with respect to the learnable parameters across multiple layers, which requires determining how the gradient of the overall error of a GCN is shared among neurons in different hidden layers. In particular, in Appendix \textcolor{red}{A}, we provide a recursive formula to compute the related gradients.
  {\bf The second challenge} is the evaluation of gradient variations between GCNs trained on different datasets.  In the single layer case, since the input feature is the same, the variation of the related gradient is only dependent on the variations of learnable parameters. While, in the case of deep GCNs, the variation of the related gradients is also dependent on the variations of the gradients of the final output with respect to the hidden layer outputs. Please see Lemma \ref{lem:bound_variation_df_dW} and its proof for details (see Appendix \textcolor{red}{C}).
}

\subsection{Stability Upper Bound}
In this subsection, we establish the uniform stability of SGD for deep GCNs, which is the key to further proving Theorem \ref{thm:G_gap}.

\begin{theorem}[Uniform stability of deep GCNs]\label{thm:Uni_stability}
Consider the deep GCNs defined by equation \eqref{equ:GCN_model}, which are trained on a dataset $\mathcal{S}$ using the SGD algorithm for a total of $T$ iterations and denoted as $\mathcal{A}_{\mathcal{S}}$. Assume that Assumptions {\bf 1, 2 and 3} stated in Section \ref{Sec:Assumption} are satisfied. Then, $\mathcal{A}_{\mathcal{S}}$ is $\mu_{m}$-uniformly stable, with $\mu_{m}$ satisfying the following condition:
\begin{equation}\label{equ:thm_uni_stability}
  \mu_{m}\leq \frac{C}{m}
 \sum_{t=1}^{T}\Big(1+(K+1)\eta\kappa_{1}+\eta\kappa_{2}\Big)^{t-1},
\end{equation}
 where
 $$C:=(K+1)\eta\alpha^2_{\ell}(B\alpha_{\sigma}C_{g})^{2K}\alpha_{\sigma}^{2}C^{2}_{g}C^{2}_{\mathbf{X}},$$
 $\kappa_{1}$ and $\kappa_{2}$ are defined by \eqref{equ:kappa1} and \eqref{equ:kappa2}, respectively.
  \end{theorem}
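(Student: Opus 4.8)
The plan is to place the argument inside the algorithmic-stability framework for SGD (in the spirit of Hardt \emph{et al.}\ and of the single-layer analysis in \cite{2019Stable_GCN}), reducing the uniform stability of Definition \ref{def:uni_stability} to a bound on how far two SGD trajectories drift apart when one training point is altered. Concretely, I would run the update \eqref{equ:update_W_and_w} in parallel on $\mathcal{S}$ and on $\mathcal{S}^{\setminus i}$ under a coupling of their sample selections, obtaining iterates $\theta_t$ and $\theta_t'$, and track the aggregate divergence
\[
\Delta_t := \|\theta_t-\theta_t'\| = \sum_{k=1}^{K}\big\|\mathbf{W}^{(k)}_t-(\mathbf{W}^{(k)}_t)'\big\|_2+\big\|\mathbf{w}_t-\mathbf{w}_t'\big\|_2,
\]
with $\Delta_0=0$ (the initialization is shared). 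Since $\ell$ is $\alpha_\ell$-Lipschitz in $\hat y$ (Assumption 2) and the output $f(\mathbf{x}|\theta)=\sigma\big(\boldsymbol{\delta}_{\mathbf{x}}^{\top}g(\mathbf{L})\mathbf{X}^{(K)}\mathbf{w}\big)$ is Lipschitz in $\theta$ with constant of order $(B\alpha_\sigma C_g)^{K}\alpha_\sigma C_g C_{\mathbf{X}}$ (unroll the $K$ layers using $\|g(\mathbf{L})\|_2=C_g$, $\|\mathbf{W}^{(k)}\|_2\le B$ from Assumption 3, and the $\alpha_\sigma$-Lipschitz activation), the target collapses to
\[
\mu_m \le \alpha_\ell\,(B\alpha_\sigma C_g)^{K}\alpha_\sigma C_g C_{\mathbf{X}}\,\mathbb{E}_{\mathcal{A}}[\Delta_T].
\]
It thus suffices to set up and solve a one-step recursion for $\mathbb{E}_{\mathcal{A}}[\Delta_t]$.

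The heart of the proof is that recursion. At iteration $t$ the index is drawn uniformly: with probability $1-\tfrac1m$ both runs update on the same sample, so $\Delta_t$ grows only through the expansivity of the gradient map; with probability $\tfrac1m$ the differing sample is selected, in which case the two updates separate by at most $2\eta$ times a bound $G$ on $\|\nabla_\theta\ell\|$. This gives
\[
\mathbb{E}_{\mathcal{A}}[\Delta_t] \le \big(1+(K+1)\eta\kappa_1+\eta\kappa_2\big)\mathbb{E}_{\mathcal{A}}[\Delta_{t-1}]+\frac{2\eta G}{m},
\]
where $(K+1)\kappa_1+\kappa_2$ is precisely the Lipschitz constant of $\theta\mapsto\nabla_\theta\ell(f(\mathbf{x}|\theta),y)$. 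Unrolling from $\Delta_0=0$ yields the geometric sum
\[
\mathbb{E}_{\mathcal{A}}[\Delta_T] \le \frac{2\eta G}{m}\sum_{t=1}^{T}\big(1+(K+1)\eta\kappa_1+\eta\kappa_2\big)^{t-1},
\]
and with $G$ collecting the $K+1$ block-gradient norms (hence the factor $K+1$), each of order $\alpha_\ell(B\alpha_\sigma C_g)^{K}\alpha_\sigma C_g C_{\mathbf{X}}$, the prefactor $\alpha_\ell\,(B\alpha_\sigma C_g)^{K}\alpha_\sigma C_g C_{\mathbf{X}}\cdot 2\eta G$ collapses to a multiple of $C=(K+1)\eta\alpha_\ell^2(B\alpha_\sigma C_g)^{2K}\alpha_\sigma^2 C_g^2 C_{\mathbf{X}}^2$, matching \eqref{equ:thm_uni_stability}. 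The two occurrences of $\alpha_\ell$ enter through the outer loss-Lipschitz conversion and through $\partial\ell/\partial\hat y$ inside $G$.

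Two technical ingredients feed this recursion. First, I need the exact back-propagated gradients $\partial\ell/\partial\mathbf{W}^{(k)}$ and $\partial\ell/\partial\mathbf{w}$ through the $K+1$ layers, via the recursive back-propagation formula, together with a norm bound on each block; summing the blocks produces the factor $(K+1)$ and the powers $(B\alpha_\sigma C_g)^{K}$, $C_g$, $C_{\mathbf{X}}$ populating $G$, $\kappa_1$ and $C$. Second, and this is the main obstacle, I must bound the \emph{variation} of the gradient between the two trajectories, i.e.\ prove the update map is $\big(1+(K+1)\eta\kappa_1+\eta\kappa_2\big)$-expansive. Unlike the single-layer case, where the shared input feature makes the gradient difference depend only on the parameter difference, here $\partial\ell/\partial\mathbf{W}^{(k)}$ depends on both the forward features $\mathbf{X}^{(k)}$ and the backward sensitivities $\partial f/\partial\mathbf{X}^{(k)}$, and both differ between $\theta_{t-1}$ and $\theta_{t-1}'$. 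Controlling this requires a telescoping estimate propagating the parameter perturbation forward and the sensitivity perturbation backward across depth: the $\nu_{\ell}\alpha_\sigma^2+\alpha_\ell\nu_\sigma$ part of $\kappa_1$ captures single-layer loss- and activation-smoothness, while the nested sum $\sum_{j=0}^{K-1}(j+1)(B\alpha_\sigma C_g)^{j}$ in $\kappa_2$ is exactly the accumulation of these cross-layer perturbations.

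Finally I would assemble the pieces: the expansivity constant supplies the base $1+(K+1)\eta\kappa_1+\eta\kappa_2$ of the geometric sum, the block-gradient bound supplies the $1/m$ injection, and the output-Lipschitz factor converts parameter divergence into loss divergence, giving \eqref{equ:thm_uni_stability}; the same estimate transfers to the replace-one set $\mathcal{S}^i$ through \eqref{equ:Uni_stability1}. I expect the delicate point throughout to be keeping the depth-dependent constants sharp in the cross-layer smoothness estimate, since any loose propagation there would destroy the clean $(K+1)\kappa_1+\kappa_2$ form.
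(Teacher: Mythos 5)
Your proposal follows essentially the same route as the paper's proof: it reduces uniform stability to $\mathbb{E}_{\mathcal{A}}[\|\triangle\theta_{T}\|_{*}]$ via the output-Lipschitz bound (the paper's Lemma~\ref{lemma:Step_1}), sets up the same probability-$(1-\tfrac1m)$/$\tfrac1m$ case split feeding a one-step recursion with expansivity $1+(K+1)\eta\kappa_{1}+\eta\kappa_{2}$ and injection $2\eta G/m$ (the paper's Lemmas~\ref{lem:Same_sample} and~\ref{lem:diff_sample} and Theorem~\ref{thm:bound_variation_theta_T}), and correctly identifies the cross-layer forward/backward perturbation propagation behind $\kappa_{2}=\sum_{k}\rho_{k}$ as the main technical step. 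The constants and the final assembly match \eqref{equ:thm_uni_stability}, so the plan is correct and not materially different from the paper's argument.
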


With a straightforward calculation,
one can see that
$$\mu_{m}\leq\frac{1}{m}O\bigg(\Big((K+1)\eta\kappa_{1}+\eta\kappa_{2}\Big)^{T}\bigg),$$
which decays at the rate of $\frac{1}{m}$ as $m$ tends to infinity.
Together with Lemma \ref{lem:Sta_Guarantees}, it yields the result of Theorem \ref{thm:G_gap}.



{\bf Proof Sketch for Theorem \ref{thm:Uni_stability}}. We prove Theorem \ref{thm:Uni_stability} in the following two steps.

\begin{itemize}

\item \textbf{Step 1:} We begin by bounding the stability of deep GCNs with respect to perturbations in the learned parameters caused by changes in the training set. The result is given in Lemma~\ref{lemma:Step_1}.

\item \textbf{Step 2:} Next, we provide a bound for the perturbation of the learned parameters. The result is presented in Theorem \ref{thm:bound_variation_theta_T}.
\end{itemize}

Consider $\mathcal{A}_{\mathcal{S}}$, a set of deepGCNs defined by \eqref{equ:GCN_model}, trained on the dataset $\mathcal{S}$ using SGD for $T$ iterations. Let $\theta_{t}=\{\mathbf{W}^{(1)}_{t},\dots,\mathbf{W}^{(K)}_{t},\mathbf{w}_{t}\}$ and
 $\theta'_{t}=\{\mathbf{W}^{(1)'}_{t},\dots,\mathbf{W}^{(K)'}_{t},\mathbf{w}'_{t}\}$ (with $\theta_{0}=\theta'_{0}$) denote the parameters of two GCNs trained on $\mathcal{S}$ and $\mathcal{S}^{i}$ after $t$ iterations, respectively. We set $\triangle \mathbf{w}_t=\mathbf{w}_t-\mathbf{w}_t'$ and $\triangle \mathbf{W}^{(k)}_t=\mathbf{W}^{(k)}_t-\mathbf{W}^{(k)'}_t$ to be the perturbation of learning parameters and define \begin{equation}\label{equ:norm_theta}
  \|\triangle \theta_t\|_{*}=\|\triangle\mathbf{w}_t\|_{2}+\sum_{k=1}^{K}\|\triangle\mathbf{W}_t^{(k)}\|_{2}.
\end{equation}


 In the following lemma, it is shown that the stability of $\mathcal{A}_{\mathcal{S}}$ can be bounded by $\|\triangle\theta_{T}\|_{*}$.
\begin{lemma}\label{lemma:Step_1}
  Let $\theta_{t}$ and $\theta'_{t}$ be the learnt parameters of two GCNs trained on $\mathcal{S}$ and $\mathcal{S}^{i}$ using SGD in the $t$-th iteration with $\theta_{0}=\theta_{0}'$, and $\triangle\theta_{t}:=\theta_{t}-\theta'_{t}$. Suppose that all the assumptions made in Section \ref{Sec:Assumption} hold. Then, after $T$ iterations, we have that for any $\mathbf{z}=(\mathbf{x},y)$ taken from $\mathcal{D}$,
  \begin{align}
    &\Big|\mathbb{E}_{\mathcal{A}}\big[\ell(\hat{y},y)\big]
  -\mathbb{E}_{\mathcal{A}}\big[\ell(\hat{y}',y)\big]\Big|
 \leq \alpha_{\ell}B^{K}\alpha_{\sigma}^{K+1}C_{g}^{K+1}C_{\mathbf{X}}\cdot\mathbb{E}_{\mathcal{A}}\big[\|\triangle\theta_{T}\|_{*}\big],\label{equ:Step_1}
  \end{align}
 where $\hat{y}=f(\mathbf{x}|\theta_{T})$ and $\hat{y}'=f(\mathbf{x}|\theta'_{T})$.
  \end{lemma}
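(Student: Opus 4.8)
The plan is to reduce the claim to a deterministic, per-realization estimate on $|\hat y-\hat y'|=|f(\mathbf{x}\mid\theta_T)-f(\mathbf{x}\mid\theta'_T)|$ and to take the expectation $\mathbb{E}_{\mathcal A}$ only at the very end. Since $\ell$ is $\alpha_\ell$-Lipschitz in its first argument (Assumption 2), $|\ell(\hat y,y)-\ell(\hat y',y)|\le\alpha_\ell|\hat y-\hat y'|$, and by $|\mathbb{E}_{\mathcal A}[A]-\mathbb{E}_{\mathcal A}[B]|\le\mathbb{E}_{\mathcal A}|A-B|$ it suffices to show $\mathbb{E}_{\mathcal A}|\hat y-\hat y'|\le\alpha_\sigma^{K+1}B^K C_g^{K+1}C_{\mathbf X}\,\mathbb{E}_{\mathcal A}\|\Delta\theta_T\|_*$. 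Recalling $f(\mathbf{x}\mid\theta)=\sigma(\boldsymbol\delta_{\mathbf x}^\top g(\mathbf L)\mathbf X^{(K)}\mathbf w)$ and peeling off the outer activation via its $\alpha_\sigma$-Lipschitz property (Assumption 1) reduces the task to bounding $|\boldsymbol\delta_{\mathbf x}^\top g(\mathbf L)(\mathbf X^{(K)}\mathbf w_T-\mathbf X^{(K)'}\mathbf w'_T)|$, where $\mathbf X^{(K)}$ and $\mathbf X^{(K)'}$ are the top-layer representations produced by $\theta_T$ and $\theta'_T$ on the shared input $\mathbf X$.

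Next I would split this via the identity $\mathbf X^{(K)}\mathbf w_T-\mathbf X^{(K)'}\mathbf w'_T=\mathbf X^{(K)}\Delta\mathbf w_T+(\mathbf X^{(K)}-\mathbf X^{(K)'})\mathbf w'_T$, producing one term governed by $\|\Delta\mathbf w_T\|_2$ and one governed by $\|\Delta\mathbf X^{(K)}\|_F$, where $\Delta\mathbf X^{(K)}:=\mathbf X^{(K)}-\mathbf X^{(K)'}$. Using $\|\boldsymbol\delta_{\mathbf x}\|_2=1$, $\|g(\mathbf L)\|_2=C_g$, $\|\mathbf w'_T\|_2\le B$, and submultiplicativity of the spectral/Frobenius norms, each term is controlled once I have a forward bound on $\|\mathbf X^{(K)}\|_F$ and a bound on $\|\Delta\mathbf X^{(K)}\|_F$. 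For the former, since $\sigma(0)=0$ gives $\|\sigma(\mathbf M)\|_F\le\alpha_\sigma\|\mathbf M\|_F$, a forward induction on $\mathbf X^{(k)}=\sigma(g(\mathbf L)\mathbf X^{(k-1)}\mathbf W^{(k)})$ with $\|\mathbf W^{(k)}\|_2\le B$ (Assumption 3) yields $\|\mathbf X^{(k)}\|_F\le(\alpha_\sigma C_g B)^k C_{\mathbf X}$.

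The genuinely new work, relative to the single-layer case, lies in bounding $\|\Delta\mathbf X^{(k)}\|_F$. Applying the $\alpha_\sigma$-Lipschitz bound to the two layer maps and inserting the telescoping cross term $g(\mathbf L)\mathbf X^{(k-1)}\mathbf W^{(k)}-g(\mathbf L)\mathbf X^{(k-1)'}\mathbf W^{(k)'}=g(\mathbf L)\Delta\mathbf X^{(k-1)}\mathbf W^{(k)}+g(\mathbf L)\mathbf X^{(k-1)'}\Delta\mathbf W^{(k)}$ gives the recursion $\|\Delta\mathbf X^{(k)}\|_F\le\alpha_\sigma C_g B\,\|\Delta\mathbf X^{(k-1)}\|_F+\alpha_\sigma C_g(\alpha_\sigma C_g B)^{k-1}C_{\mathbf X}\|\Delta\mathbf W^{(k)}\|_2$, where the second coefficient invokes the forward bound on $\|\mathbf X^{(k-1)'}\|_F$. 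With $\Delta\mathbf X^{(0)}=0$ (identical inputs), unrolling this linear recursion — and observing that the geometric factors $(\alpha_\sigma C_g B)^{K-k}\cdot(\alpha_\sigma C_g B)^{k-1}$ collapse to $(\alpha_\sigma C_g B)^{K-1}$ independently of the summation index — produces $\|\Delta\mathbf X^{(K)}\|_F\le(\alpha_\sigma C_g B)^{K-1}\alpha_\sigma C_g C_{\mathbf X}\sum_{k=1}^{K}\|\Delta\mathbf W^{(k)}\|_2$.

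Finally I would assemble the pieces: the $\Delta\mathbf w_T$ term contributes $\alpha_\sigma^K B^K C_g^{K+1}C_{\mathbf X}\|\Delta\mathbf w_T\|_2$, while the $\Delta\mathbf X^{(K)}$ term contributes the \emph{same} prefactor $\alpha_\sigma^K B^K C_g^{K+1}C_{\mathbf X}$ times $\sum_{k=1}^{K}\|\Delta\mathbf W^{(k)}\|_2$; this coincidence of prefactors is precisely what lets the estimate collapse onto the mixed norm $\|\Delta\theta_T\|_*$ of \eqref{equ:norm_theta}. Reinstating the outer Lipschitz constant $\alpha_\sigma$ and the loss Lipschitz constant $\alpha_\ell$, then taking $\mathbb{E}_{\mathcal A}$, gives exactly \eqref{equ:Step_1}. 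I expect the third step to be the main obstacle: it requires the correct telescoping split at each layer, the interlocking use of the forward bound on $\|\mathbf X^{(k-1)'}\|_F$ inside the recursion, and careful verification that the accumulated geometric factors cancel so that the $\mathbf X^{(K)}$-branch prefactor matches the one arising from the $\mathbf w$-perturbation branch.
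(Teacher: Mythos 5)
Your proposal is correct and follows essentially the same route as the paper: reduce via the $\alpha_\ell$-Lipschitz property of $\ell$ to bounding $|f(\mathbf{x}|\theta_T)-f(\mathbf{x}|\theta'_T)|$, telescope the product $\mathbf{X}^{(K)}\mathbf{w}_T-\mathbf{X}^{(K)'}\mathbf{w}'_T$, and combine the forward bound $\|\mathbf{X}^{(k)}\|_F\leq (B\alpha_\sigma C_g)^k C_{\mathbf{X}}$ with the unrolled recursion for $\|\triangle\mathbf{X}^{(k)}\|_F$, exactly as in the paper's Lemmas \ref{lem:bound_GCN} and \ref{lem:bound_variation}. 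The only (immaterial) difference is the order of the telescoping split — you pair $\triangle\mathbf{W}^{(k)}$ with $\mathbf{X}^{(k-1)'}$ while the paper pairs it with $\mathbf{X}^{(k-1)}$ — and all constants match.
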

  We provide the proof of Lemma \ref{lemma:Step_1} in Appendix \textcolor{red}{B}.

  Combining \eqref{equ:Uni_stability1} and \eqref{equ:Step_1}, the stability of $\mathcal{A}_{\mathcal{S}}$ has a bound

  \begin{equation}\label{equ:bound_betam_vs_theta}
  \mu_{m}\leq\frac{\alpha_{\ell}B^{K}\alpha_{\sigma}^{K+1}C_{g}^{K+1}C_{\mathbf{X}}}{2}
  \sup_{\mathcal{S}}\Big\{\mathbb{E}_{\mathcal{A}}\big[\|\triangle\theta_{T}\|_{*}\big]\Big\}.
  \end{equation}

So, to estimate the uniform stability of $\mathcal{A}_{\mathcal{S}}$, we need to bound $\mathbb{E}_{\mathcal{A}}\big[\|\triangle\theta_{T}\|_{*}\big]$. Now, let us recall \eqref{equ:update_W_and_w} for parameter updating, for training on $\mathcal{S}$,
$$\mathbf{w}_{t}=\mathbf{w}_{t-1}-\eta\nabla_{\mathbf{w}}\ell(f(\mathbf{x}_{t}|\theta_{t-1}),y_{t}),$$
$$\mathbf{W}_{t}^{(k)}=\mathbf{W}^{(k)}_{t-1}-\eta\nabla_{\mathbf{W}^{(k)}}\ell(f(\mathbf{x}_{t}|\theta_{t-1}),y_{t}),$$
$k=1,2,\dots,K$, and for training on $\mathcal{S}^{i}$,
$$\mathbf{w}'_{t}=\mathbf{w}'_{t-1}-\eta\nabla_{\mathbf{w}}\ell(f(\mathbf{x}'_{t}|\theta'_{t-1}),y'_{t}),$$
$$\mathbf{W}_{t}^{(k)'}=\mathbf{W}^{(k)'}_{t-1}-\eta\nabla_{\mathbf{W}^{(k)}}\ell(f(\mathbf{x}'_{t}|\theta'_{t-1}),y'_{t}),$$
$k=1,2,\dots,K$, where $(\mathbf{x}_{t},y_{t})\in\mathcal{S}$ and $(\mathbf{x}'_{t},y'_{t})\in\mathcal{S}^{i}$ are the samples drawn at the $t$-th SGD iteration. Therefore, $\triangle\theta_{t}=\{\triangle\mathbf{W}^{(1)}_{t},\dots,\triangle\mathbf{W}^{(K)}_{t},\triangle\mathbf{w}_{t}\}$ has the following iterations:
\begin{align*}
&\triangle\mathbf{w}_{t}=\triangle\mathbf{w}_{t-1}-\eta\Big(\nabla_{\mathbf{w}}\ell(f(\mathbf{x}_{t}|\theta_{t-1}),y_{t})
-\nabla_{\mathbf{w}}\ell(f(\mathbf{x}'_{t}|\theta_{t-1}'),y'_{t})\Big),
\end{align*}
and for $k=1,2,\dots,K$,
\begin{align*}
&\triangle\mathbf{W}_{t}^{(k)}=\triangle\mathbf{W}^{(k)}_{t-1}-\eta\Big(\nabla_{\mathbf{W}^{(k)}}\ell(f(\mathbf{x}_{t}|\theta_{t-1}),y_{t})
    -\nabla_{\mathbf{W}^{(k)}}\ell(f(\mathbf{x}'_{t}|\theta_{t-1}'),y'_{t})\Big),
\end{align*}
with $\|\triangle\theta_{0}\|_{*}=0$.

\textcolor{black}{So, we need to bound 
$$\nabla_{\mathbf{w}}\ell(f(\mathbf{x}_{t}|\theta_{t-1}),y_{t})
-\nabla_{\mathbf{w}}\ell(f(\mathbf{x}'_{t}|\theta_{t-1}'),y'_{t})$$
and
$$\nabla_{\mathbf{W}^{(k)}}\ell(f(\mathbf{x}_{t}|\theta_{t-1}),y_{t})
    -\nabla_{\mathbf{W}^{(k)}}\ell(f(\mathbf{x}'_{t}|\theta_{t-1}'),y'_{t})$$ to obtain a bound of $\|\triangle\theta_{t}\|_{*}$. There are two
     scenarios to consider: i) At step $t$, SGD picks  a sample $\mathbf{z}_{t}=(\mathbf{x}_{t},\mathbf{y}_{t})$ which is identical in $\mathcal{S}$ and $\mathcal{S}^{i}$, and occurs with probability $(m-1)/m$; and ii) At step $t$, SGD picks the only samples that $\mathcal{S}$ and $\mathcal{S}^{i}$ differ, $\mathbf{z}_{t}=(\mathbf{x}_{t},\mathbf{y}_{t})$ and $\mathbf{z}'_{t}=(\mathbf{x}'_{t},\mathbf{y}'_{t})$ which occurs with probability $1/m$. We provide the results in the following Lemma \ref{lem:Same_sample} and Lemma \ref{lem:diff_sample}.}

\begin{lemma}\label{lem:Same_sample}
  Consider two GCNs with parameters $\theta_{t}$ and $\theta'_{t}$, respectively. Then,  the following holds for any sample $\mathbf{z}_{t}=(\mathbf{x}_{t},y_{t})$:
  \begin{align}
 &\|\nabla_{\mathbf{w}}\ell(f(\mathbf{x}_{t}|\theta_{t-1}),y_{t})-\nabla_{\mathbf{w}}\ell(f(\mathbf{x}_{t}|\theta'_{t-1}),y_{t})\big\|_{F}
\leq\kappa_{1}\|\triangle\theta_{t-1}\|_{*},\label{equ:bound_variation_dl_dw_same_sample}
  \end{align}
  and for $k=1,2,\dots,K$,
  \begin{align}
 &\|\nabla_{\mathbf{W}^{(k)}}\ell(f(\mathbf{x}_{t}|\theta_{t-1}),y_{t})-\nabla_{\mathbf{W}^{(k)}}\ell(f(\mathbf{x}_{t}|\theta'_{t-1}),y_{t})\|_{F}
 \leq  (\kappa_{1}+\rho_{k})\|\triangle\theta_{t-1}\|_{*}, \label{equ:bound_variation_dl_dW_same_sample}
\end{align}
where $\kappa_{1}$ and $\rho_{k}$ are defined by \eqref{equ:kappa1} and (\ref{equ:rho_k}).
\end{lemma}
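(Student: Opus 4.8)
The plan is to compute both gradients explicitly by back-propagation and then bound the difference between their values at $\theta_{t-1}$ and $\theta'_{t-1}$ factor by factor. Writing $z=\boldsymbol{\delta}_{\mathbf{x}_t}^{\top}g(\mathbf{L})\mathbf{X}^{(K)}\mathbf{w}$ so that $\hat{y}=\sigma(z)$, the chain rule gives $\nabla_{\mathbf{w}}\ell=\frac{\partial\ell}{\partial\hat{y}}\,\nabla\sigma(z)\,\big(\boldsymbol{\delta}_{\mathbf{x}_t}^{\top}g(\mathbf{L})\mathbf{X}^{(K)}\big)^{\top}$, a product of three quantities each of which changes when the parameters change. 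For the hidden-layer gradients I would invoke the recursive back-propagation formula established in Appendix A.1 to express $\nabla_{\mathbf{W}^{(k)}}\ell$ through the back-propagated error signal descending from the output layer.

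Before bounding differences, I would record two auxiliary estimates that recur throughout. First, a forward bound: using $\sigma(0)=0$, $\alpha_{\sigma}$-Lipschitzness, $\|g(\mathbf{L})\|_{2}=C_{g}$ and Assumption 3, an induction on $k$ yields $\|\mathbf{X}^{(k)}\|_{F}\le (B\alpha_{\sigma}C_{g})^{k}C_{\mathbf{X}}$, which supplies all the $(B\alpha_{\sigma}C_{g})^{K}$ and $C_{\mathbf{X}}$ factors. Second, a perturbation bound: subtracting the layer recursions for $\mathbf{X}^{(k)}$ and $\mathbf{X}'^{(k)}$, splitting $\mathbf{X}^{(k-1)}\mathbf{W}^{(k)}-\mathbf{X}'^{(k-1)}\mathbf{W}'^{(k)}$ into a feature-change term and a weight-change term, and solving the resulting linear recursion (with $\mathbf{X}^{(0)}=\mathbf{X}'^{(0)}$, so the base case vanishes) gives $\|\mathbf{X}^{(K)}-\mathbf{X}'^{(K)}\|_{F}\le \alpha_{\sigma}C_{g}C_{\mathbf{X}}(B\alpha_{\sigma}C_{g})^{K-1}\sum_{k=1}^{K}\|\triangle\mathbf{W}^{(k)}_{t-1}\|_{2}$. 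Propagating this through the output layer then produces $|z-z'|\le C_{g}(B\alpha_{\sigma}C_{g})^{K}C_{\mathbf{X}}\|\triangle\theta_{t-1}\|_{*}$ and, by $\alpha_{\sigma}$-Lipschitzness, $|\hat{y}-\hat{y}'|\le\alpha_{\sigma}|z-z'|$.

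For the output weight, I would apply the identity $abc-a'b'c'=(a-a')bc+a'(b-b')c+a'b'(c-c')$ to $\nabla_{\mathbf{w}}\ell$ and bound the three terms using the $\upsilon_{\ell}$-smoothness of $\ell$ (for $a-a'$), the $\nu_{\sigma}$-smoothness of $\sigma$ (for $b-b'$), and the perturbation bound above (for $c-c'$), together with the uniform estimates $|\frac{\partial\ell}{\partial\hat{y}}|\le\alpha_{\ell}$, $|\nabla\sigma|\le\alpha_{\sigma}$ and $\|\boldsymbol{\delta}_{\mathbf{x}_t}^{\top}g(\mathbf{L})\mathbf{X}^{(K)}\|_{2}\le C_{g}(B\alpha_{\sigma}C_{g})^{K}C_{\mathbf{X}}$. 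The first two terms collect into $(\upsilon_{\ell}\alpha_{\sigma}^{2}+\alpha_{\ell}\nu_{\sigma})(B\alpha_{\sigma}C_{g})^{2K}C_{g}^{2}C_{\mathbf{X}}^{2}\|\triangle\theta_{t-1}\|_{*}$ and the third into $\alpha_{\ell}(B\alpha_{\sigma}C_{g})^{K-1}\alpha_{\sigma}^{2}C_{g}^{2}C_{\mathbf{X}}\|\triangle\theta_{t-1}\|_{*}$, summing to exactly $\kappa_{1}\|\triangle\theta_{t-1}\|_{*}$ and establishing \eqref{equ:bound_variation_dl_dw_same_sample}. The hidden-layer bound \eqref{equ:bound_variation_dl_dW_same_sample} follows the same scheme, except that $\nabla_{\mathbf{W}^{(k)}}\ell$ carries an additional back-propagated factor through layers $k+1,\dots,K$; the variation of that factor contributes the extra term $\rho_{k}\|\triangle\theta_{t-1}\|_{*}$ with $\rho_{k}$ as in \eqref{equ:rho_k}.

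I expect the main obstacle to be the bookkeeping in the hidden-layer case: controlling how the perturbation of the back-propagated error signal accumulates across the layers above $k$, since (unlike the single-layer setting) this signal depends on all of $\mathbf{W}^{(k+1)},\dots,\mathbf{W}^{(K)},\mathbf{w}$ and on the intermediate features, so its variation must itself be expanded by a telescoping decomposition and then reduced to the forward and perturbation estimates. This is precisely the deep-GCN-specific difficulty flagged as the first challenge in the paper, and pinning down the per-layer constant $\rho_{k}$ — rather than the cleaner $\kappa_{1}$ governing the top layer — is where the careful recursion is needed.
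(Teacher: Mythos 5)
Your proposal is correct and follows essentially the same route as the paper: the paper packages your three-factor telescoping of $\nabla_{\mathbf{w}}\ell$ as a two-term split ($\upsilon_{\ell}$-smoothness of $\ell$ times $|f(\mathbf{x}|\theta)-f(\mathbf{x}|\theta')|$, plus $\alpha_{\ell}$ times the variation of $\nabla_{\mathbf{w}}f$, the latter bounded in a separate lemma), and the constants combine to the same $\kappa_{1}$. The hidden-layer bookkeeping you flag as the main obstacle is exactly what the paper resolves with a backward recursion for $\gamma_{k}=\big\|\frac{\partial f(\mathbf{x}|\theta)}{\partial\mathbf{X}^{(k)}}\odot\mathbf{R}^{(k)}-\frac{\partial f(\mathbf{x}|\theta')}{\partial\mathbf{X}^{(k)}}\odot\mathbf{R}^{(k)'}\big\|_{F}$, whose closed-form solution produces the extra per-layer constant $\rho_{k}$.
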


\begin{lemma}\label{lem:diff_sample}
  Consider two GCNs with parameters $\theta_{t}$ and $\theta'_{t}$, respectively. Then,  the following holds for any two samples $\mathbf{z}_{t}=(\mathbf{x}_{t},y_{t})$ and $\mathbf{z}'_{t}=(\mathbf{x}'_{t},y'_{t})$:
  \begin{align}
 &\|\nabla_{\mathbf{W}^{(k)}}\ell(f(\mathbf{x}_{t}|\theta_{t-1}),y_{t})-\nabla_{\mathbf{W}^{(k)}}\ell(f(\mathbf{x}'_{t}|\theta'_{t-1}),y'_{t})\|_{F}
 \leq  2\alpha_{\ell}B^{K}\alpha_{\sigma}^{K+1}C^{K+1}_{g}C_{\mathbf{X}}, \label{equ:bound_variation_dl_dW_diff_sample}
  \end{align}
  for $k=1,2,\dots,K+1$. Note that $\mathbf{W}^{(K+1)}=\mathbf{w}$.
\end{lemma}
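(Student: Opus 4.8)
The plan is to observe that the two gradients appearing in \eqref{equ:bound_variation_dl_dW_diff_sample} are evaluated at \emph{different} samples $(\mathbf{x}_t,y_t)$ and $(\mathbf{x}_t',y_t')$ \emph{and} at different parameter sets $\theta_{t-1}$, $\theta_{t-1}'$, so there is no structural cancellation to exploit; unlike in Lemma \ref{lem:Same_sample}, we cannot hope to bound the difference by a small perturbation term $\|\triangle\theta_{t-1}\|_*$. Instead I would bound each of the two gradient terms separately by one and the same uniform quantity, and then invoke the triangle inequality, which is exactly what produces the factor $2$ in the stated estimate. The whole task therefore reduces to establishing the single uniform bound
\begin{equation*}
\big\|\nabla_{\mathbf{W}^{(k)}}\ell(f(\mathbf{x}|\theta),y)\big\|_F \leq \alpha_{\ell}B^{K}\alpha_{\sigma}^{K+1}C_{g}^{K+1}C_{\mathbf{X}},
\end{equation*}
valid for every node $\mathbf{x}$, every label $y$, and every parameter set $\theta$ obeying Assumption 3, and for all $k=1,\dots,K+1$ (with $\mathbf{W}^{(K+1)}=\mathbf{w}$).

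To prove this uniform bound I would first record the forward-pass growth estimate. Using $\sigma(0)=0$ together with $\alpha_{\sigma}$-Lipschitzness (so that $\|\sigma(\mathbf{M})\|_F\le\alpha_{\sigma}\|\mathbf{M}\|_F$), submultiplicativity of the spectral and Frobenius norms, $\|g(\mathbf{L})\|_2=C_g$, and $\|\mathbf{W}^{(k)}\|_2\le B$, a short induction on $k$ yields $\|\mathbf{X}^{(k)}\|_F\le(\alpha_{\sigma}C_gB)^k C_{\mathbf{X}}$. Next I would write $\nabla_{\mathbf{W}^{(k)}}\ell=\frac{\partial\ell}{\partial\hat{y}}\,\nabla_{\mathbf{W}^{(k)}}f(\mathbf{x}|\theta)$, bound $|\partial\ell/\partial\hat{y}|\le\alpha_{\ell}$ by Assumption 2, and expand $\nabla_{\mathbf{W}^{(k)}}f$ through the chain rule using the recursive back-propagation formula established in Appendix A.1. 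For the output-layer weight ($k=K+1$) that formula gives $\frac{\partial\ell}{\partial\hat{y}}\,\sigma'(z)\,\big(\boldsymbol{\delta}_{\mathbf{x}}^{\top}g(\mathbf{L})\mathbf{X}^{(K)}\big)^{\top}$ with $z=\boldsymbol{\delta}_{\mathbf{x}}^{\top}g(\mathbf{L})\mathbf{X}^{(K)}\mathbf{w}$; bounding $|\sigma'|\le\alpha_{\sigma}$, $\|\boldsymbol{\delta}_{\mathbf{x}}\|_2=1$, $\|g(\mathbf{L})\|_2=C_g$, and inserting the forward estimate for $\|\mathbf{X}^{(K)}\|_F$ reproduces exactly $\alpha_{\ell}\alpha_{\sigma}^{K+1}C_g^{K+1}B^K C_{\mathbf{X}}$.

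For a general interior index $k\le K$ the chain rule threads the gradient back from the scalar output through the output layer and through hidden layers $K,K-1,\dots,k+1$, so each of these layers contributes one factor of $C_g$ (the filter), one factor of $\alpha_{\sigma}$ (the activation derivative), and one factor of $B$ (the intervening weight matrix), while the layer being differentiated replaces its own weight factor by the incoming feature $\mathbf{X}^{(k-1)}$, whose norm is already controlled by the forward estimate. The decisive bookkeeping observation is that differentiating with respect to one of the $K+1$ weight matrices removes exactly one factor of $B$ from the product, which is why the exponent of $B$ is $K$ rather than $K+1$, whereas the counts of $C_g$ and $\alpha_{\sigma}$ stay at $K+1$ each. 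I expect the main obstacle to be precisely this backward-chain bookkeeping for the interior indices: one must verify carefully, via the Appendix A.1 recursion, that assembling the per-layer factors together with the displaced input feature reproduces the \emph{same} product $\alpha_{\ell}B^{K}\alpha_{\sigma}^{K+1}C_g^{K+1}C_{\mathbf{X}}$ uniformly in $k$, and that the indicator vector $\boldsymbol{\delta}_{\mathbf{x}}$ inflates no norm (it does not, since $\|\boldsymbol{\delta}_{\mathbf{x}}\|_2=1$). Once the uniform single-term bound is in hand, the triangle inequality applied to the two gradients---each evaluated at its own sample and parameters---immediately yields \eqref{equ:bound_variation_dl_dW_diff_sample}.
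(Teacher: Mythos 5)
Your proposal is correct and follows essentially the same route as the paper: the paper likewise writes $\nabla_{\mathbf{W}^{(k)}}\ell=\frac{\partial\ell}{\partial\hat{y}}\nabla_{\mathbf{W}^{(k)}}f$, applies the triangle inequality, bounds $|\partial\ell/\partial\hat{y}|\leq\alpha_{\ell}$, and invokes the uniform gradient bound $\|\nabla_{\mathbf{W}^{(k)}}f(\mathbf{x}|\theta)\|_{F}\leq B^{K}\alpha_{\sigma}^{K+1}C_{g}^{K+1}C_{\mathbf{X}}$ (its Lemma~\ref{lem:bound_GCN}, proved via the same forward-pass estimate and back-propagation recursion you describe). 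Your bookkeeping for the exponents, including why $B$ appears with power $K$ rather than $K+1$, matches the paper's derivation.
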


The proofs of Lemma \ref{lem:Same_sample} and Lemma \ref{lem:diff_sample} are given in  Appendix \textcolor{red}{C}. We now provide a bound for $\mathbb{E}_{\mathcal{A}}\big[\|\triangle\theta_{T}\|_{*}\big]$.
\begin{theorem}\label{thm:bound_variation_theta_T}
  Let $\theta_{t}$ and $\theta'_{t}$ be the learnt parameters of two GCNs trained on $\mathcal{S}$ and $\mathcal{S}^{i}$ using SGD in the $t$-th iteration with $\theta_{0}=\theta'_{0}$. The assumptions made in Section \ref{Sec:Assumption} hold. Then, after $T$ iterations, $\triangle\theta_{T}$ satisfies
  \begin{equation}\label{equ:bound_variation_theta_T}
  \mathbb{E}_{\mathcal{A}}\Big[\|\triangle\theta_{T}\|_{*}\Big]
  \leq c
  \sum_{t=1}^{T}\Big(1+(K+1)\eta\kappa_{1}+\eta\kappa_{2}\Big)^{t-1},
  \end{equation}
where $c:=\frac{2(K+1)\eta\alpha_{\ell}B^{K}\alpha_{\sigma}^{K+1}C^{K+1}_{g}C_{\mathbf{X}}}{m},$ and $\kappa_{1}$ and $\kappa_{2}$ are defined by \eqref{equ:kappa1} and \eqref{equ:kappa2}, respectively.
\end{theorem}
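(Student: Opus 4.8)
The plan is to derive a one-step recursion for $\mathbb{E}_{\mathcal{A}}\big[\|\triangle\theta_t\|_*\big]$ and then unroll it. Since $\theta_0=\theta'_0$ we have $\triangle\theta_0=0$, and the perturbations evolve through the coupled SGD updates recalled just before the statement. The key structural observation is that $\mathcal{S}$ and $\mathcal{S}^{i}$ differ in a single data point (the $i$-th), so when SGD draws its sample at iteration $t$ through a shared random index, with probability $(m-1)/m$ the two trajectories see an identical sample, and with probability $1/m$ they see the two differing samples.

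First I would take norms in the updates for $\triangle\mathbf{w}_t$ and each $\triangle\mathbf{W}_t^{(k)}$, apply the triangle inequality to peel off $\triangle\mathbf{w}_{t-1}$ and $\triangle\mathbf{W}_{t-1}^{(k)}$, then take the conditional expectation over the index drawn at step $t$ (given the history up to $t-1$) and pass to the full expectation by the tower property. On the $(m-1)/m$ same-sample event I invoke Lemma~\ref{lem:Same_sample}, which bounds the gradient differences by $\kappa_1\|\triangle\theta_{t-1}\|_*$ for $\mathbf{w}$ and by $(\kappa_1+\rho_k)\|\triangle\theta_{t-1}\|_*$ for $\mathbf{W}^{(k)}$; on the $1/m$ different-sample event I invoke Lemma~\ref{lem:diff_sample}, which supplies the uniform constant $2\alpha_{\ell}B^{K}\alpha_{\sigma}^{K+1}C_g^{K+1}C_{\mathbf{X}}$. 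Summing over the $K$ hidden-layer blocks plus the output block to reconstitute $\|\triangle\theta_t\|_*$, and using $(m-1)/m\leq 1$, yields
$$\mathbb{E}_{\mathcal{A}}\big[\|\triangle\theta_t\|_*\big]\leq\Big(1+\eta\big((K+1)\kappa_1+\textstyle\sum_{k=1}^{K}\rho_k\big)\Big)\,\mathbb{E}_{\mathcal{A}}\big[\|\triangle\theta_{t-1}\|_*\big]+c,$$
where $c$ is exactly the constant in the statement, its additive role being to collect the $K+1$ different-sample contributions each weighted by $1/m$.

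The step that carries the real content is identifying $\sum_{k=1}^{K}\rho_k$ with $\kappa_2$ of \eqref{equ:kappa2}; this is where the layer-by-layer structure of the deep network enters, since each $\rho_k$ encodes the sensitivity contributed by the $k$-th block, and the sum over layers must reproduce the $\sum_{j=0}^{K-1}(j+1)(B\alpha_{\sigma}C_g)^j$ factor. Granting this aggregation, the recursion reduces to the scalar affine map $a_t\leq q\,a_{t-1}+c$ with ratio $q=1+(K+1)\eta\kappa_1+\eta\kappa_2$ and initial value $a_0=0$.

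Finally I would unroll this affine recurrence: iterating $T$ times from $a_0=0$ gives $a_T\leq c\big(1+q+\cdots+q^{T-1}\big)=c\sum_{t=1}^{T}q^{t-1}$, which is precisely the claimed bound \eqref{equ:bound_variation_theta_T}. I expect the only genuine obstacles to be bookkeeping: correctly weighting the same-/different-sample cases by $(m-1)/m$ and $1/m$, and verifying the $\rho_k$-to-$\kappa_2$ summation; once these are in place, the conclusion is the standard geometric-sum solution of a linear recursion.
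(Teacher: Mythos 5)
Your proposal follows essentially the same route as the paper: condition on whether the $t$-th SGD draw hits the differing index (probability $1/m$) or not, apply Lemma~\ref{lem:Same_sample} in the former case's complement and Lemma~\ref{lem:diff_sample} in the latter, sum the $K+1$ parameter blocks to form $\|\triangle\theta_t\|_{*}$, identify $\sum_{k=1}^{K}\rho_k=\kappa_2$, and unroll the resulting affine recursion from $\triangle\theta_0=0$ into the geometric sum. All the steps you flag as bookkeeping, including the $\rho_k$-to-$\kappa_2$ aggregation, are exactly what the paper carries out.
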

The proof of Theorem \ref{thm:bound_variation_theta_T}, using  Lemma \ref{lem:Same_sample} and Lemma \ref{lem:diff_sample}, is provided in  Appendix \textcolor{red}{D}.
Combining \eqref{equ:bound_betam_vs_theta} and Theorem \ref{thm:bound_variation_theta_T}, we obtain that the uniform stability $\mu_{m}$ of $\mathcal{A}_{\mathcal{S}}$ has a bound as
  \begin{align}
\nonumber    \mu_{m}&\leq\alpha_{\ell}B^{K}\alpha_{\sigma}^{K+1}C_{g}^{K+1}C_{\mathbf{X}}
\sup_{\mathcal{S}}\Big\{\mathbb{E}_{A}\big[\|\triangle\theta_{T}\|_{*}\big]\Big\}\\
\nonumber  &\leq\frac{C}{m}
  \sum_{t=1}^{T}\Big(1+(K+1)\eta\kappa_{1}+\eta\kappa_{2}\Big)^{t-1},
  \end{align}
which completes the proof of Theorem \ref{thm:Uni_stability}.

\section{Experiments} \label{Sec:Exp}
In this section, we conduct some empirical studies using three benchmark datasets commonly utilized for the node classification task, namely Cora, Citeseer, and Pubmed \cite{sen2008collective,yang2016revisiting}. Table \ref{tab:stats:node_classification} summarizes the basic statistics of these datasets. 
\vspace{-4mm}
\begin{table}[htbp!]
\footnotesize
\centering
\textcolor{black}{\caption{Statistics of the three benchmark datasets.}\label{tab:stats:node_classification}}
\begin{tabular}{lcccc}
\toprule
 & \textcolor{black}{\textbf{Cora}} & \textcolor{black}{\textbf{Citeseer}} & \textcolor{black}{\textbf{Pubmed}} \\
\midrule
\textcolor{black}{\# Nodes} & \textcolor{black}{$2,708$} & \textcolor{black}{$3,327$} & \textcolor{black}{$19,717$} \\
\textcolor{black}{\# Edges }& \textcolor{black}{$5,429$} & \textcolor{black}{$4,732$} & \textcolor{black}{$44,338$}  \\
\textcolor{black}{\# Features} & \textcolor{black}{$1,433$} & \textcolor{black}{$3,703$} & \textcolor{black}{$500$}  \\
\textcolor{black}{\# Classes} &\textcolor{black}{$7$ }& \textcolor{black}{$6$} & \textcolor{black}{$3$ } \\
\textcolor{black}{Label Rate} & \textcolor{black}{$0.052$} & \textcolor{black}{$0.036$} & \textcolor{black}{$0.003$}  \\
\bottomrule
\end{tabular}
\end{table}

In our experiments, we follow the standard transductive learning problem formulation and the training/test setting used in \cite{gcn}. To rigorously test our theoretical insights, our experiments aim to answer the following key questions:
\begin{itemize}
  \item Q1: How does the design of graph filters (i.e., $g(\mathbf{L})$) influence the generalization gap?
  \item Q2: How does the generalization gap change with the number of hidden layers (i.e., $K$)?
  \item Q3: How does the width (i.e., the number of hidden units: $d$) affect the generalization gap?
\end{itemize}
To address each question, we empirically estimate the generalization gap by calculating the absolute difference in loss between training and test samples. We adopt the official TensorFlow implementation (\url{https://github.com/tkipf/gcn}) for GCN \cite{gcn} and the Adam optimizer with default settings. The number of iterations is fixed to $T =200$ for all the simulations.

\noindent \textbf{Results and Discussion for Q1.} We analyze two types of graph filters in our study: 1) the normalized graph filter, defined as $g(\mathbf{L})=\mathbf{\tilde{D}}^{-1/2}\mathbf{\tilde{A}}\mathbf{\tilde{D}}^{-1/2}$ with $\mathbf{\tilde{A}}=\mathbf{A}+\mathbf{I}$ and $\mathbf{\tilde{D}}_{ii}=\sum_{j}\mathbf{\tilde{A}}_{ij}$ (which was first employed in the vanilla GCN \cite{gcn} and has subsequently become widely used in follow-up works on GCNs), and 2) the random walk filter, $g(\mathbf{L})=\mathbf{D}^{-1}\mathbf{A} +\mathbf{I}$. To fit our theoretical finding, we compare the performance of two 5-layer GCN models (with width $d=32$ for each layer), each employing one of these filters. Table \ref{Tab:exp_diff_filter} presents the numerical records of $R_{emp}(\mathcal{A}_{\mathcal{S}})$, $R(\mathcal{A}_{\mathcal{S}})$, $\epsilon_{gen}(\mathcal{A}_{\mathcal{S}})$, $C_{g}$ for both filters. The results indicate clearly that the 5-layer GCN with the normalized graph filter exhibits a smaller generalization gap compared to the one with the random walk filter. Furthermore, Fig. \ref{fig:exo_diff_filter_ite} illustrates the performance of each filter across different datasets over iterations, demonstrating the superior performance of the normalized graph filter. Overall, the empirical findings in Table \ref{Tab:exp_diff_filter} and Fig. \ref{fig:exo_diff_filter_ite} align well with our theoretical finding regarding the impact of $C_{g}$ on the generalization gap. 
\begin{figure*}[t!]
  \centering   \includegraphics[width=0.325\textwidth]{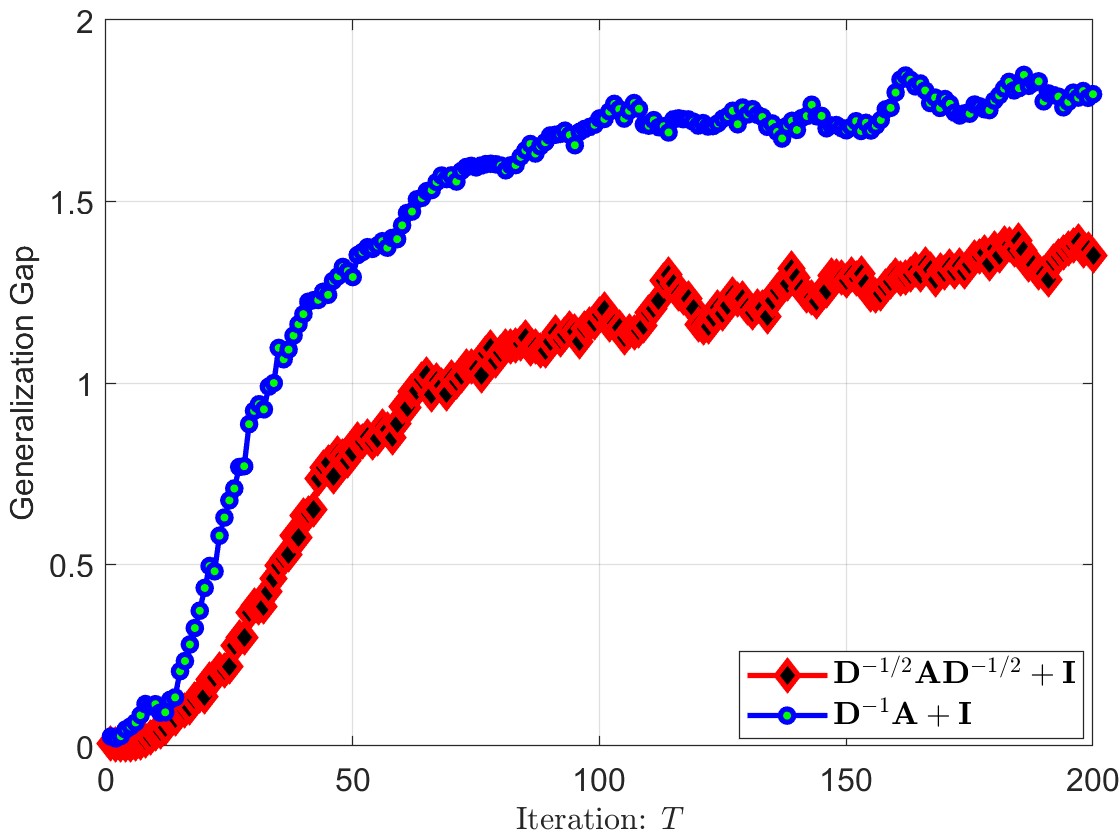}
 \includegraphics[width=0.325\textwidth]{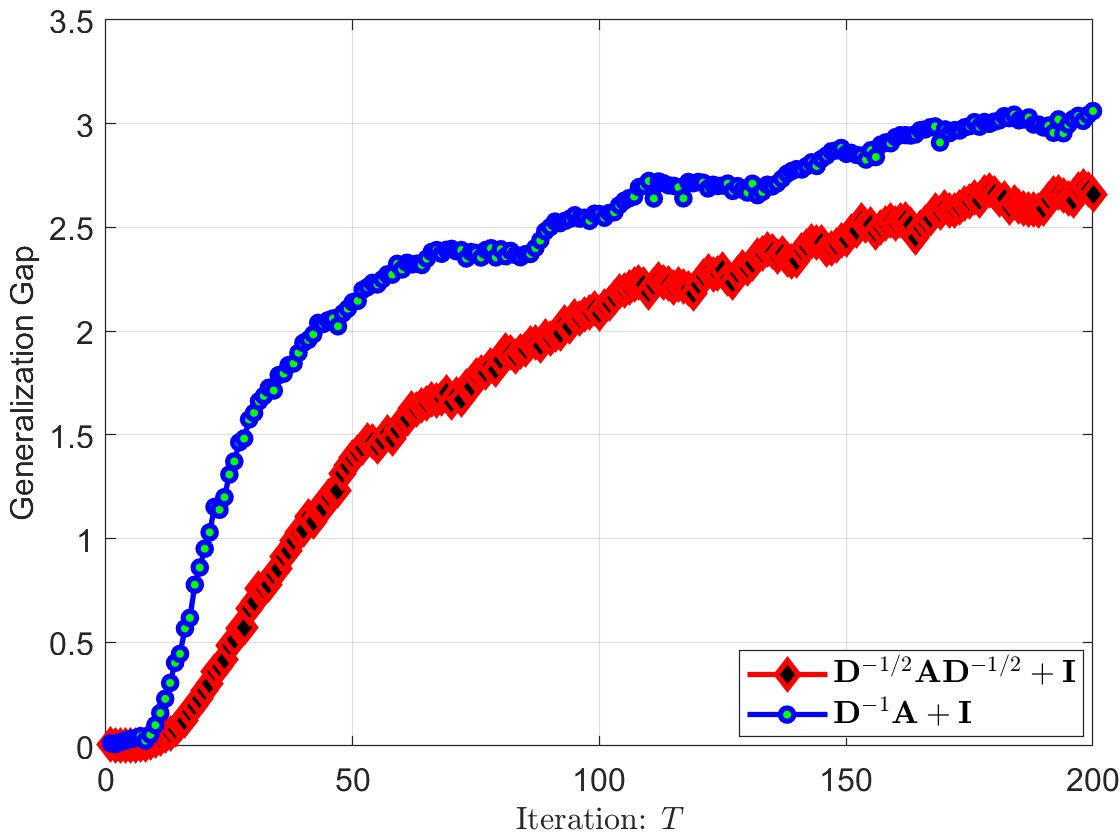}
   \includegraphics[width=0.325\textwidth]{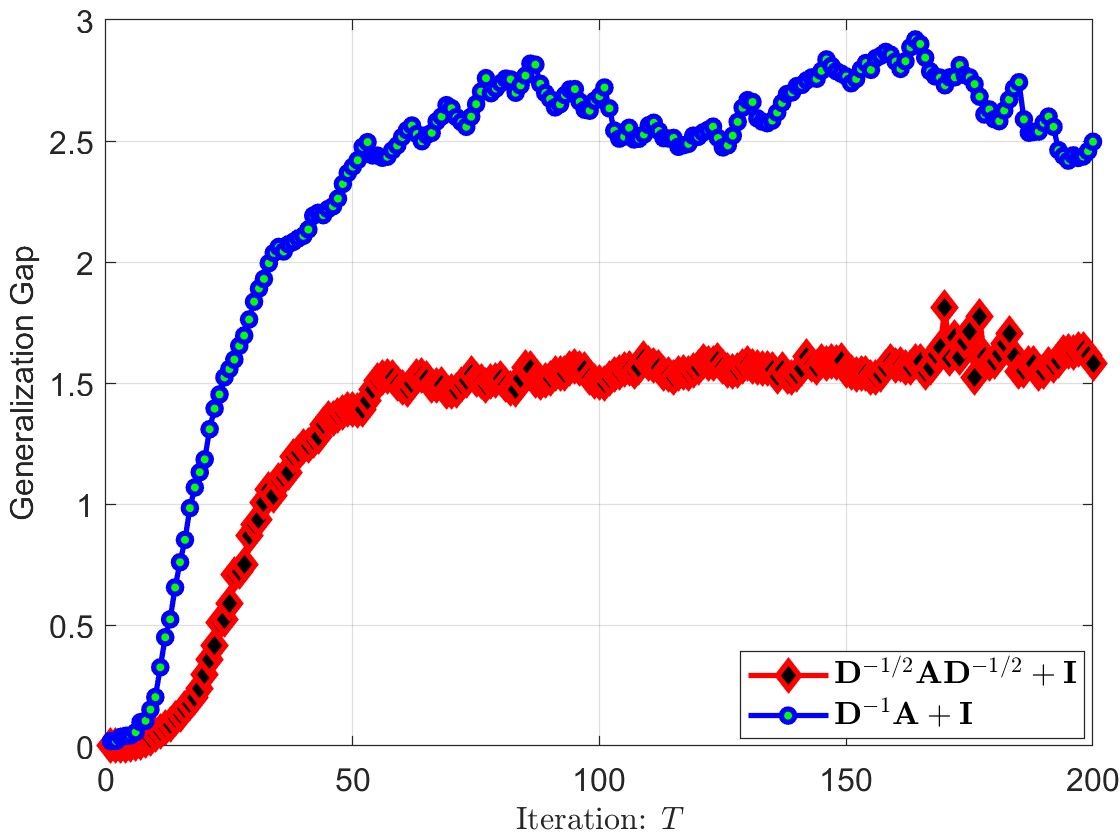}
   \vspace{-3mm}
 \textcolor{black}{ \caption{Comparison of trends in the generalization gap: Cora (left), Citeseer (middle), Pubmed (right).}\label{fig:exo_diff_filter_ite}}\vspace*{-1mm}
\end{figure*}
\begin{figure*}[htbp!]
  \centering
   \includegraphics[width=0.325\textwidth]{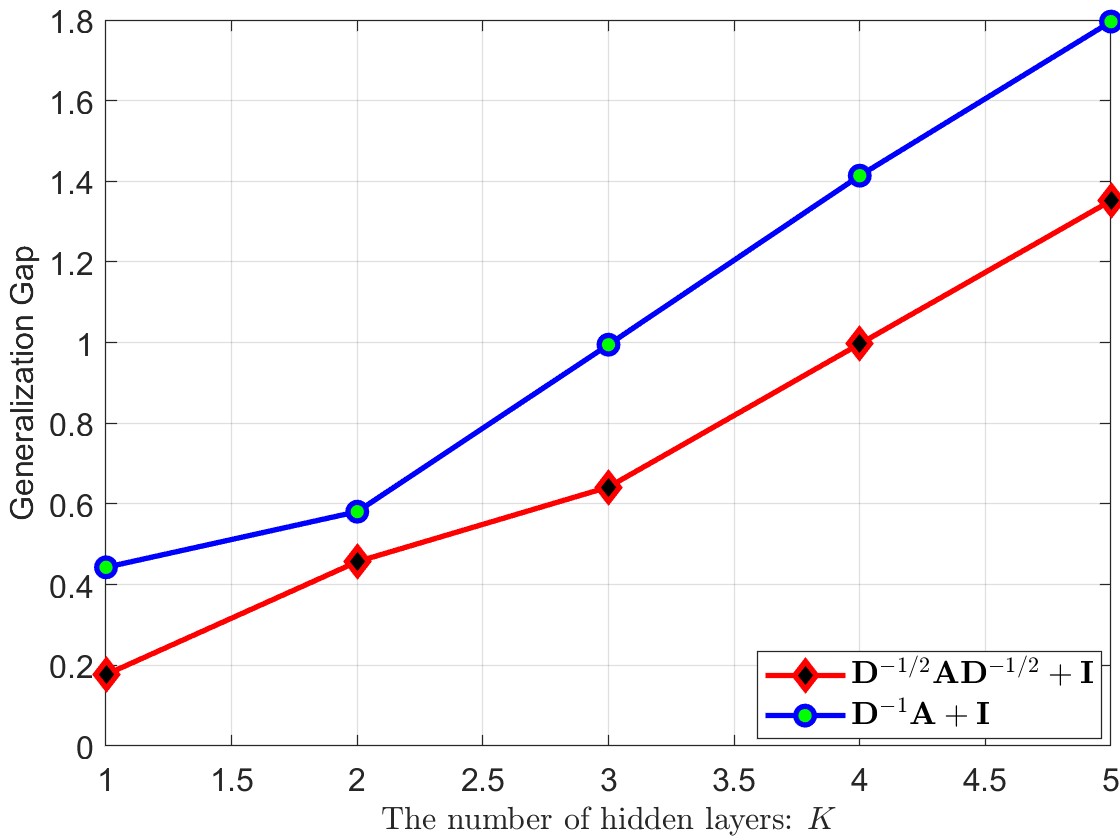}
 \includegraphics[width=0.325\textwidth]{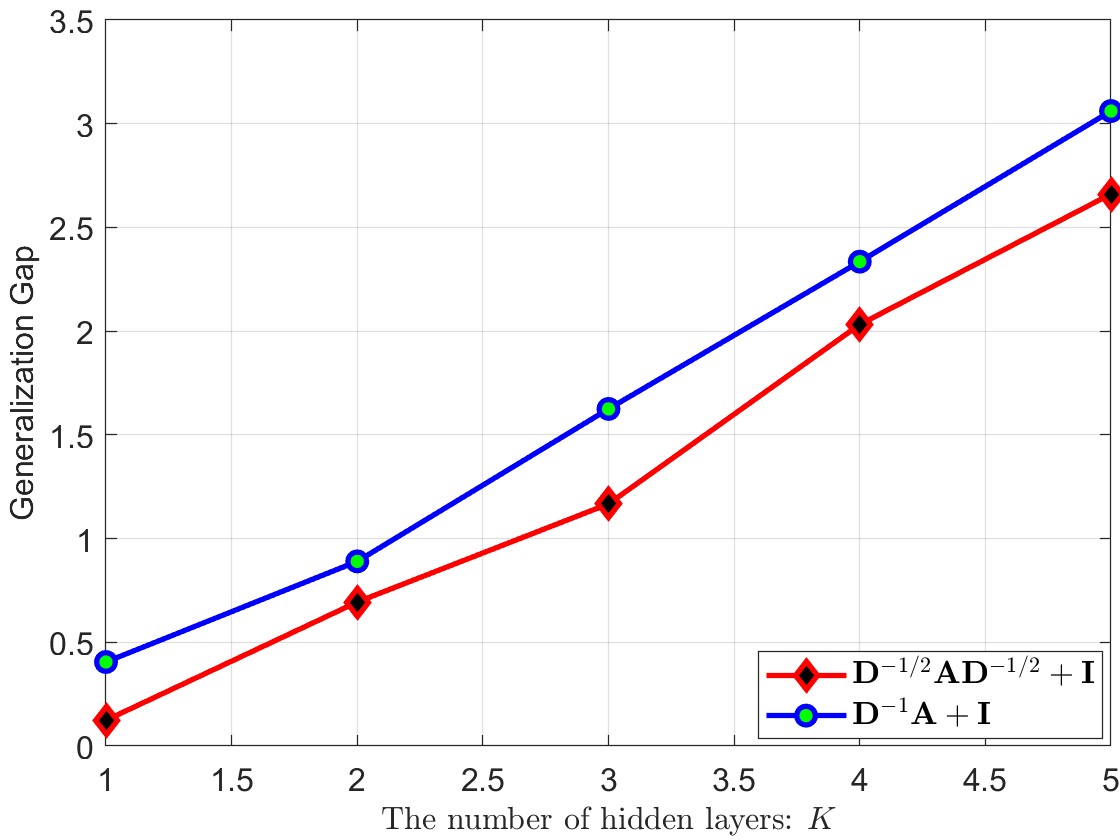}
   \includegraphics[width=0.325\textwidth]{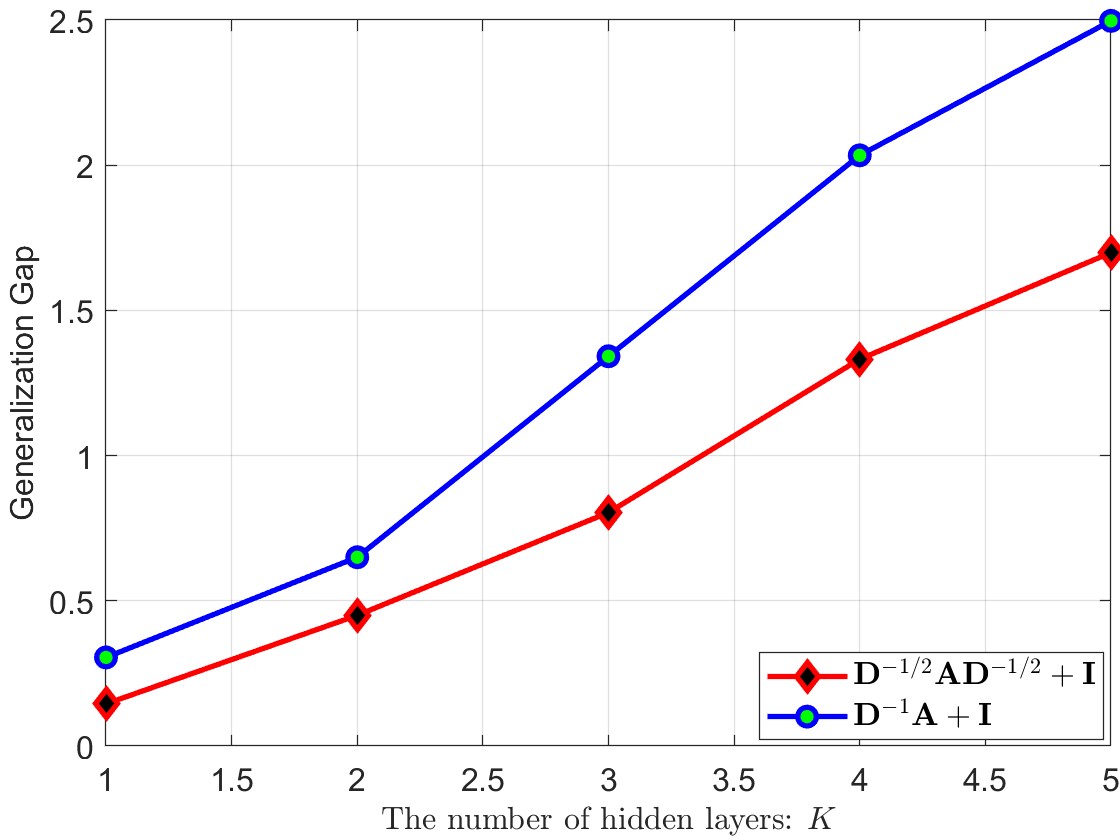}
  \vspace*{-3mm}
  \textcolor{black}{ \caption{Comparison of the generalization gap with different settings of network depth $K$: Cora (left), Citeseer (middle), Pubmed (right).}\label{fig:Role_of_K}}\vspace*{-1mm}
\end{figure*}
\begin{figure*}[t!]
  \centering   \includegraphics[width=0.326\textwidth]{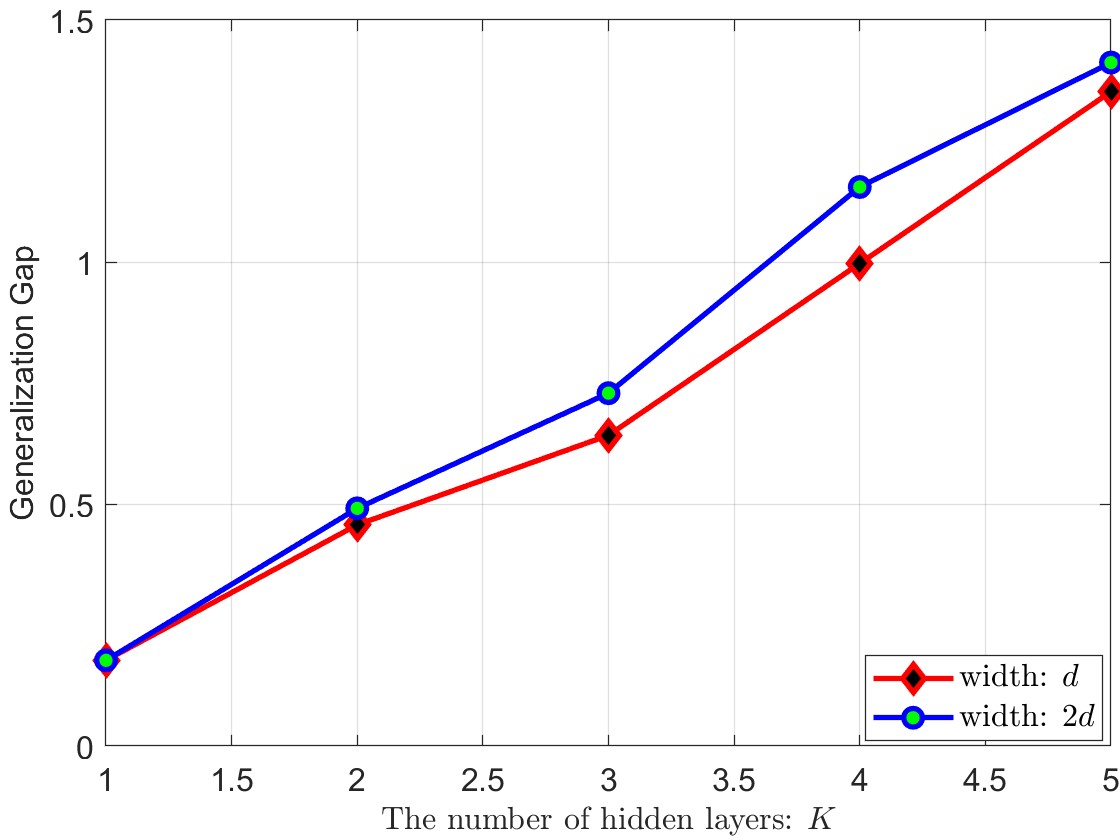}
 \includegraphics[width=0.326\textwidth]{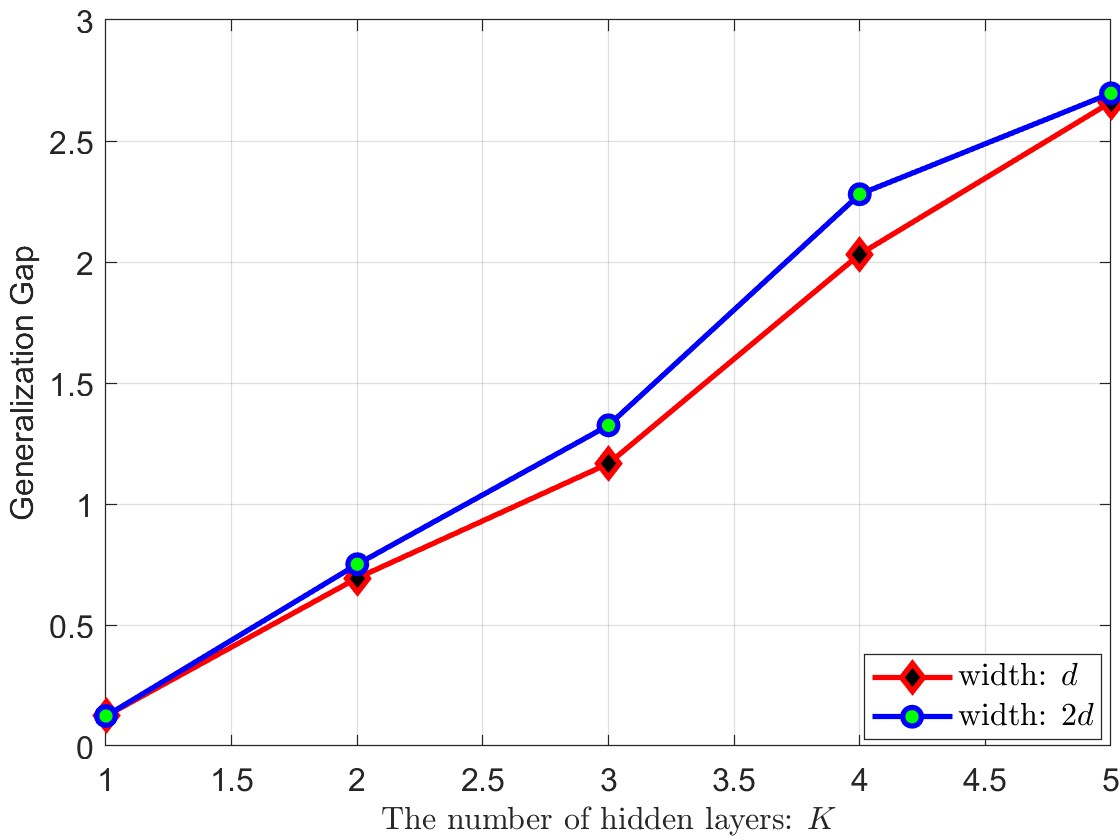}
   \includegraphics[width=0.326\textwidth]{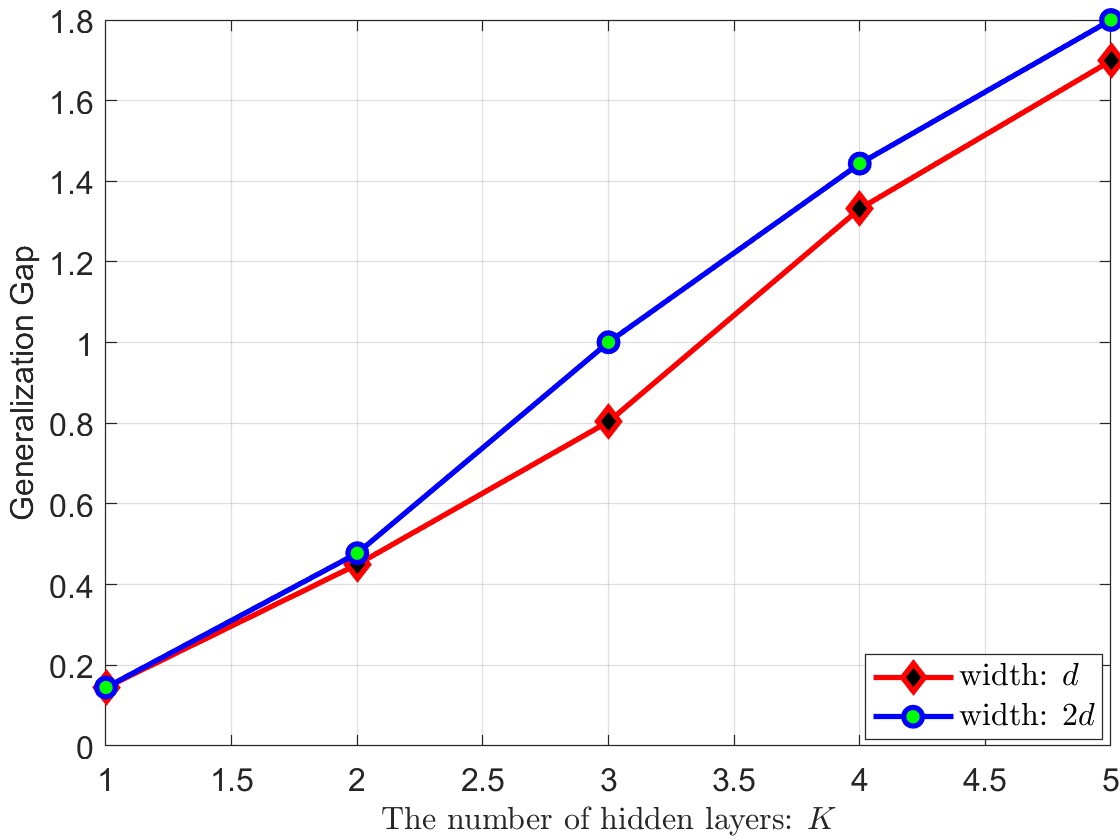}
\vspace*{-3mm}
 \textcolor{black}{ \caption{Comparison of the generalization gap with different settings of network width $d$: Cora (left), Citeseer (middle), Pubmed (right).}\label{fig:Role_of_d}}
\end{figure*}
\begin{table*}[htbp!]
\footnotesize
\centering
\textcolor{black}{\caption{The generalization gap with different graph filter for three datasets.}\label{Tab:exp_diff_filter} }
	\begin{tabular}{cccccc}
\toprule
         \textcolor{black}{Dataset} & \textcolor{black}{Graph filter $g(\mathbf{L})$} & \textcolor{black}{$R_{emp}(\mathcal{A}_{\mathcal{S}})$}
          & \textcolor{black}{$R(\mathcal{A}_{\mathcal{S}})$}  & \textcolor{black}{$\epsilon_{gen}(\mathcal{A}_{\mathcal{S}})$} & \textcolor{black}{$C_{g}$}\\
\midrule
 
 \multirow{2}{*}{\textcolor{black}{Cora}} &\textcolor{black}{$\mathbf{\tilde{D}}^{-1/2}\mathbf{\tilde{A}}\mathbf{\tilde{D}}^{-1/2}$}
    &\textcolor{black}{1.488}&\textcolor{black}{0.136}&\textcolor{black}{\textbf{1.352}}&\textcolor{black}{1}\\
&\textcolor{black}{$\mathbf{D}^{-1}\mathbf{A}
    +\mathbf{I}$}
    &\textcolor{black}{1.914}&\textcolor{black}{0.118}&\textcolor{black}{1.796}&\textcolor{black}{4.746}\\
\midrule
 
 \multirow{2}{*}{\textcolor{black}{Citeseer}}&\textcolor{black}{$\mathbf{\tilde{D}}^{-1/2}\mathbf{\tilde{A}}\mathbf{\tilde{D}}^{-1/2}$}
    &\textcolor{black}{2.896}&\textcolor{black}{0.235}&\textcolor{black}{\textbf{2.661}}&\textcolor{black}{1}\\
&\textcolor{black}{$\mathbf{D}^{-1}\mathbf{A}
    +\mathbf{I}$}
    &\textcolor{black}{3.206}&\textcolor{black}{0.145}&\textcolor{black}{3.061}&\textcolor{black}{4.690}\\
\midrule
 
 \multirow{2}{*}{\textcolor{black}{Pubmed}}&\textcolor{black}{$\mathbf{\tilde{D}}^{-1/2}\mathbf{\tilde{A}}\mathbf{\tilde{D}}^{-1/2}$}
    &\textcolor{black}{1.594}&\textcolor{black}{0.023}&\textcolor{black}{\textbf{1.571}}&\textcolor{black}{1}\\
&\textcolor{black}{$\mathbf{D}^{-1}\mathbf{A}
    +\mathbf{I}$}
    &\textcolor{black}{2.534}&\textcolor{black}{0.037}&\textcolor{black}{2.497}&\textcolor{black}{7.131}\\
\bottomrule
\end{tabular}
\end{table*}

\vspace{2mm}\noindent \textcolor{black}{\textbf{Results and Discussion for Q2.} In this experimental study, we try different settings of $K$, i.e., the number of hidden layers. Specifically, for $K=\{1,2,3,4,5\}$, we compare the performance of two $K$-layer GCNs (with width $d=32$ for each layer): one employing the normalized graph filter $g(\mathbf{L})=\mathbf{\tilde{D}}^{-1/2}\mathbf{\tilde{A}}\mathbf{\tilde{D}}^{-1/2}$, and one using the random walk filter $g(\mathbf{L})=\mathbf{D}^{-1}\mathbf{A} +\mathbf{I}$. Fig.~\ref{fig:Role_of_K} shows the performance comparison results for each $K$. It demonstrates clearly that, consistent with the aforementioned results for \textbf{Q1}, GCN with a normalized graph filter (with smaller $C_{g}$) consistently exhibits smaller generalization gaps compared to those with the random walk filter. Also, it is observed that the generalization gap becomes larger as $K$ increases, further validating our theoretical assertions regarding the influence of $K$ on the model's generalization gap.}

\noindent \textcolor{black}{\textbf{Results and Discussion for Q3.} To empirically investigate the impact of width $d$ (i.e., the number of hidden units) on the generalization gap, we conduct additional experiments using a 5-layer GCN equipped with a normalized graph filter. The experiments specifically involve a comparison between a 5-layer GCN configured with a width of $2d$ for each layer and the previously studied model with $d$ width ($d=32$), as illustrated in Fig. \ref{fig:Role_of_d}. This setup allows for a direct comparison under varying network configurations, providing insights into how changes in the number of hidden units influence the generalization gap. As demonstrated in Fig. \ref{fig:Role_of_d}, across all the datasets examined, a $d$-width GCN consistently exhibits smaller generalization gaps compared to one with a $2d$-width. This observation is in harmony with our theoretical explanation presented after Theorem 1, that is, the factor $B$ (i.e., the upper bound of 2-norm of the parameters $\{\mathbf{W}^{(1)},\dots,\mathbf{W}^{(K)},\mathbf{w}\}$) directly influences factors $\kappa_{1}$ and $\kappa_{2}$ in the upper bound of the generalization gap.}

\section{\textcolor{black}{Theoretical Implications}}\label{sec:Implications}
\textcolor{black}{Our work establishes a theoretical framework for analyzing the generalization gap of traditional deep GCNs, which further provides insights into extending the analysis to other classes of graph neural networks, including Graph Transformers. As illustrative examples, we briefly discuss how the theoretical proof methodology developed in our framework can be applied to GCNII and Graph Transformer, which are representative models of more advanced GNNs, thereby demonstrating the broader applicability of our theoretical framework.}

\subsection{\textcolor{black}{Extension to GCNII}}\label{sec:61}
\textcolor{black}{With input features $\mathbf{X}^{(0)}=\mathbf{X}\in\mathbb{R}^{N\times d}$, GCNII defines its $k$-th layer as
                  \begin{align*}
                  \mathbf{X}^{(k)}=\sigma\bigg(&\Big((1-a_{k})g(\mathbf{L})\mathbf{X}^{(k-1)}
                  +a_{k}\mathbf{X}^{(0)}\Big)\cdot\Big((1-b_{k})\mathbf{I}_{d}+b_{k}\mathbf{W}^{(k)}\Big)\bigg),
                  \end{align*}
                  for $k=1,2,\ldots,K$, where $a_{k},b_{k}\in(0,1)$ are two hyperparameters, $\mathbf{X}^{(k)}$ is the output feature matrix of the  $k$-th layer, $\mathbf{W}^{(k)}$ is the trained parameter matrix specific to  the $k$-th layer, graph filter $g(\mathbf{L})=\tilde{\mathbf{D}}^{-1/2}\tilde{\mathbf{A}}\tilde{\mathbf{D}}^{-1/2}$, and $\mathbf{I}_{d}$ is the $d\times d$ identity matrix. The output  for node $\mathbf{x}$ is
                  $$f(\mathbf{x} | \theta) = \sigma\left( \boldsymbol{\delta}_\mathbf{x}^\top
                  \left( (1 - a_{K+1}) g(\mathbf{L}) \mathbf{X}^{(K)} + a_{K+1} \mathbf{X}^{(0)} \right) \mathbf{w} \right),  $$
  where $ \theta = \{ \mathbf{W}^{(1)}, \mathbf{W}^{(2)},\dots, \mathbf{W}^{(K)}, \mathbf{w} \} $ (all trainable parameters, with $ \mathbf{w} \in \mathbb{R}^d $ the output layer parameter);
   $ \boldsymbol{\delta}_\mathbf{x} \in \mathbb{R}^N $ is the indicator vector for node $ \mathbf{x} $; $ a_{K+1} \in (0, 1) $ is a hyperparameter for the output layer residual connection.
   Let $\theta_{t}$ and $\theta'_{t}$ be the learned parameters of two GCNs trained on $\mathcal{S}$ and $\mathcal{S}^{i}$ using SGD in the $t$-th iteration with $\theta_{0}=\theta_{0}'$, and $\triangle\theta_{t}:=\theta_{t}-\theta'_{t}$.}

    \textcolor{black}{For each layer $ k $, the perturbation of layer outputs $ \|\triangle \mathbf{X}^{(k)}\|_F = \| \mathbf{X}^{(k)} - \mathbf{X}^{(k)'} \|_F $ satisfies the recursive bound:
  \begin{equation}\label{equ:GCNII_iterative_deltaXk}
  \|\triangle \mathbf{X}^{(k)}\|_F \leq c_1^{(k)} \|\triangle \mathbf{X}^{(k-1)}\|_F + c_2^{(k)} \|\triangle \mathbf{W}^{(k)}\|_2,
  \end{equation}
  where $c_{1}^{(k)}=(1-a_{k})(1-b_{k}+b_{k}B)\alpha_{\sigma}C_{g}$ and $c_{2}^{(k)}=\alpha_{\sigma}b_{k}\big((1-a_{k})C_{g}B_{\mathbf{X}}^{(k-1)}+a_{k}C_{\mathbf{X}}\big)$ with $ B_\mathbf{X}^{(k-1)} $ the bound of $ \| \mathbf{X}^{(k-1)} \|_F$ (see (\ref{equ:GCNII_iterative_bound_var_Xk}) in the Appendix \textcolor{red}{E}). The first term on the right side of the iterative formula captures propagation of perturbations from the previous layer, while the second term captures perturbation from $\mathbf{W}^{(k)}$.}

  \textcolor{black}{
   By induction, it yields that
  \begin{equation}\label{equ:GCNII_var_boundof_deltaXk}
  \|\triangle\mathbf{X}^{(k)}\|_{F}\leq e^{(k)}(\sum_{j=1}^{k}\|\triangle\mathbf{W}^{(k)}\|_{2}),
 \end{equation}
 where $e^{(k)}=\max\{c_{1}^{(k)}e^{(k-1)},c_{2}^{(k)}\}$ with $e^{(0)}=0$. We provide the proof of \eqref{equ:GCNII_iterative_deltaXk} and \eqref{equ:GCNII_var_boundof_deltaXk} in Appendix \textcolor{red}{E}. Then,  combining layer-wise bounds and using the Lipschitz property of $ \sigma $, one can have the output perturbation $ |f(\mathbf{x}| \theta) - f(\mathbf{x} | \theta')| $  bounded by the total parameter perturbation $ \|\Delta \theta\|_* = \sum\limits_{j=1}^K \| \mathbf{W}^{(j)} - \mathbf{W}^{(j)'} \|_2 + \| \mathbf{w} - \mathbf{w}' \|_2 $ (see Appendix \textcolor{red}{E} for technical details) as
   \begin{equation}\label{equ:GCNII_bound_var_finaloutput}
 |f(\mathbf{x} | \theta) - f(\mathbf{x} | \theta')|
  \leq \alpha_{\sigma}\cdot\varrho\|\triangle \theta\|_*,
  \end{equation}
  where $\varrho=\max\Big\{(1-a_{K+1})BC_g\cdot e^{(K)}, (1-a_{K+1})C_{g}B_{\mathbf{X}}^{(K)}+a_{K+1}C_{\mathbf{X}}\Big\}$.
  Then,
  \begin{align*}
 &\Big|\mathbb{E}_{\mathcal{A}}\big[\ell(\hat{y},y)\big]
  -\mathbb{E}_{\mathcal{A}}\big[\ell(\hat{y}',y)\big]\Big|  = \Big|\mathbb{E}_{\mathcal{A}}\big[\ell\big(f(\mathbf{x}|\theta_{T}),y\big)
 -\ell\big(f(\mathbf{x}|\theta'_{T}),y\big)\big]\Big|
 \leq\alpha_{\ell}\mathbb{E}_{\mathcal{A}}\Big[\big|f(\mathbf{x}|\theta_{T})-f(\mathbf{x}|\theta'_{T})\big|\Big]
\leq \varrho\alpha_{\ell}\cdot\mathbb{E}_{\mathcal{A}}\big[\|\triangle\theta_{T}\|_{*}\big].
 \end{align*}
 This implies that the stability of $\mathcal{A}_{\mathcal{S}}$ for GCNII has a bound
 $$\mu_{m}\leq\frac{\varrho\alpha_{\ell}}{2}\sup_{\mathcal{S}}\Big\{\mathbb{E}_{\mathcal{A}}[\|\triangle\theta_{T}\|_{*}]\Big\}.$$
 Note that when $a_{k}=0,b_{k}=1$ for all $k$, GCNII degenerates into the traditional GCN, we have $\varrho=B^{K}\alpha_{\sigma}^{K}C_g^{K+1}C_{\mathbf{X}}$, and thus
$$\mu_{m}\leq\frac{\alpha_{\ell}B^{K}\alpha_{\sigma}^{K}C_g^{K+1}C_{\mathbf{X}}}{2}\sup_{\mathcal{S}}\Big\{\mathbb{E}_{\mathcal{A}}[\|\triangle\theta_{T}\|_{*}]\Big\},$$
 which is consistent with \eqref{equ:bound_betam_vs_theta}.
 }

\textcolor{black}{
 To further bound $\|\triangle\theta_{T}\|_{*}$,  the  crucial step is to bound the perturbation of the gradient of $f(\mathbf{x}|\theta)$ with respect to the parameters $\theta=\{\mathbf{W}_{1},\mathbf{W}_{2},\dots,\mathbf{W}_{K},\mathbf{w}\}$ and obtain the result similar to Lemma \ref{lem:bound_variation_df_dW} in Appendix \textcolor{red}{C}, which can be achieved by following the technique in our paper. Here, we provide the result for $\|\nabla_{\mathbf{w}}f(\mathbf{x}|\theta)-\nabla_{\mathbf{w}}f(\mathbf{x}|\theta')\|_{F}$:
 \begin{align}
\|\nabla_{\mathbf{w}}f(\mathbf{x}|\theta)-\nabla_{\mathbf{w}}f(\mathbf{x}|\theta')\|_{F}\leq \Big(\nu_{\sigma}\varrho\cdot
\big((1-a_{K+1})C_{g}B_{\mathbf{X}}^{(K)} +a_{K+1}C_{\mathbf{X}}\big)+\alpha_{\sigma}\cdot(1-a_{K+1})C_ge^{(K)}\Big)
\cdot\|\triangle\theta\|_{*}, \label{equ:GCNII_bound_var_df_dw_sec6_1}
 \end{align}
where $\varrho=\max\Big\{(1-a_{K+1})BC_g\cdot e^{(K)}, (1-a_{K+1})C_{g}B_{\mathbf{X}}^{(K)}+a_{K+1}C_{\mathbf{X}}\Big\}$. Note that when $a_{k}=0,b_{k}=1$ for all $k$, GCNII degenerates into the traditional GCN, we have $\varrho=B^{K}\alpha_{\sigma}^{K}C_g^{K+1}C_{\mathbf{X}}$, $B_{\mathbf{X}}^{(K)}=B^{K}\alpha_{\sigma}^{K}C_{g}^{K}
C_{\mathbf{X}}$ and $e^{(K)}=B^{K-1}\alpha_{\sigma}^{K}C_{g}^{K}C_{\mathbf{X}}$. At this point,
  \begin{align*}
  \big\|\nabla_{\mathbf{w}}f(\mathbf{x}|\theta) -\nabla_{\mathbf{w}}f(\mathbf{x}|\theta')\big\|_{F}
 \leq \Big(\upsilon_{\sigma}B^{2K}\alpha^{2K}_{\sigma}C^{2K+2}_{g}C^{2}_{\mathbf{X}}
     +B^{K-1}\alpha^{K+1}_{\sigma}C^{K+1}_{g}C_{\mathbf{X}}\Big)\|\triangle\theta\|_{*},
  \end{align*}
  which is consistent with (\ref{equ:bound_variation_df_dw}) in Appendix \textcolor{red}{C}. For the bound of $\|\nabla_{\mathbf{W}^{(k)}} f(\mathbf{x} | \theta) - \nabla_{\mathbf{W}^{(k)}} f(\mathbf{x} | \theta')\|_{F}$, we refer the readers to  the proof process of (\ref{equ:GCNII_bound_var_df_dWk0}) in Appendix \textcolor{red}{E}.}

  \textcolor{black}{
  Finally, these structured analysis results can lead to the results corresponding Lemma \ref{lem:Same_sample} and Lemma \ref{lem:diff_sample}, and thus enable bounding the stability of GCNII.
}

\subsection{\textcolor{black}{Extension to Graph Transformer}}\label{sec:62}
\textcolor{black}{
  To extend our theoretical framework to more complex  models like Graph Transformer,  the key is to bound the generalization gap of Graph Transformer by quantifying how perturbations in the training set (e.g., removing or replacing a node) propagate to changes in model outputs. Graph Transformer introduce new learnable parameters: query $(\mathbf{W}_Q)$, key $(\mathbf{W}_K)$, and value $(\mathbf{W}_V)$ projection matrices, alongside attention scalers and feed-forward layers, for which a self-attention layer is defined \cite{GT} as
\begin{align*}
F\left(\mathbf{x}_n\right)= \mathbf{a}^{\top} \operatorname{Relu}\Big(\mathbf{W}_O \sum_{i \in \mathcal{T}^n} \mathbf{W}_V \mathbf{x}_i
\cdot \operatorname{softmax}_{n}\big( 
(\mathbf{W}_K \mathbf{x}_i)^{\top} \mathbf{W}_Q \mathbf{x}_n\big)\Big),
\end{align*}
  where $\mathbf{x}_i$ denotes features of node $i$, $\mathcal{T}^n$ is the set of nodes for the aggregation computing of node $n$, and $\operatorname{softmax}_{n}(h(i,n))=\exp(h(i,n))/\sum_{j\in\mathcal{T}^n}\exp(h(j,n)).$ Despite their architectural complexity (e.g., self-attention mechanisms, query/key/value projections), gradient decomposition still remains to be conducted via the product rule and chain rule, accounting for the propagation of attention-weight variations to the final output. Besides, a Lipschitz-type inequality for softmax may be critically needed, for which we claim that for $\mathbf{z}=(z_1,z_2,\dots, z_p)$, $\mathbf{z}'=(z'_1,z'_2,\dots, z'_p)$ with $\|\mathbf{z}-\mathbf{z}'\|_\infty\leq 1$,
\begin{equation}\label{softmax-2}
\|\operatorname{softmax}(\mathbf{z})-\operatorname{softmax}(\mathbf{z}')\|_{1}\leq 2e\|\mathbf{z}-\mathbf{z}'\|_\infty.
\end{equation}
Actually, the proof is not hard to set up by straight forward boundedness and the mean value theorem of exponential functions (see the technical details in Appendix \textcolor{red}{F}).}

\textcolor{black}{
For trainable parameters $ \mathbf{W}_Q, \mathbf{W}_K, \mathbf{W}_V $, set the attention output is:
$$ F(\mathbf{x}_n) = \mathbf{a}^{\top} \text{ReLu}\Big(\mathbf{W}_O\sum_{i\in \mathcal{T}^n} \mathbf{W}_V \mathbf{x}_i \cdot \operatorname{Attn}(\mathbf{x}_n)_i \Big),$$
where $ S_{i,n} = (\mathbf{W}_K\mathbf{x}_i)^T (\mathbf{W}_Q\mathbf{x}_n)$ is the scaled dot-product score,  $A_{i,n}= \operatorname{softmax}_{n}(S_{i,n})$ are attention weights, and $\operatorname{Attn}(\mathbf{x}_n)=\sum_{i\in \mathcal{T}^n} \mathbf{W}_V \mathbf{x}_i \cdot A_{i,n}$ the attention output. Then the gradient decomposition with respect to $\mathbf{W}_K$ is given by
\begin{align*}
\nabla_{\mathbf{W}_K} F(\mathbf{x}_n) = & \underbrace{\nabla_{\text{ReLU}(\mathbf{Z})} F(\mathbf{x}_n)}_{\textcircled{1}} \cdot \underbrace{\nabla_\mathbf{Z} \text{ReLU}(\mathbf{Z})}_{\textcircled{2}} \cdot  \underbrace{\nabla_{\text{Attn}(\mathbf{x}_n)} \mathbf{Z}}_{\textcircled{3}} \cdot \underbrace{\nabla_\mathbf{A} \text{Attn}(\mathbf{x}_n)}_{\textcircled{4}} 
\cdot \underbrace{\nabla_S \mathbf{A}}_{\textcircled{5}} \cdot \underbrace{\nabla_{\mathbf{W}_K} S}_{\textcircled{6}}
\end{align*}
where $ \mathbf{Z} = \mathbf{W}_O \cdot \text{Attn}(\mathbf{x}_n) $, $ \mathbf{A} = \{A_{i,n}\} $, and $ \mathbf{S} = \{S_{i,n}\} $. Then calculating each item gives that
\begin{align*}
\nabla_{\mathbf{W}_K} F(\mathbf{x}_n) = &\mathbf{a}^{\top} \mathbb{I}_{\geq 0}(\mathbf{W}_O \cdot \text{Attn}(\mathbf{x}_n)) \cdot \mathbf{W}_Q \cdot \mathbf{W}_V \cdot \left( \sum_{i \in \mathcal{T}^n} \mathbf{A}_{i,n} (\mathbf{x}_i - \bar{\mathbf{x}}_n) \mathbf{x}_i^{\top} \right) \cdot (\mathbf{W}_Q \mathbf{x}_n)^{\top}.
\end{align*}
}

\textcolor{black}{
 By  leveraging the Lipschitz continuity  of the gradient with respect to its trainable parameters, it can lead to  bounding the gradient perturbation  in terms of the total parameter perturbation $ \|\triangle\theta\|_{*} = \|\mathbf{W}_K - \mathbf{W}'_K\|_{2} + \|\mathbf{W}_V - \mathbf{W}'_V\|_{2} + \|\mathbf{W}_O - \mathbf{W}'_O\|_{2} + \|\mathbf{W}_Q - \mathbf{W}'_Q\|_{2} + \|\mathbf{a} - \mathbf{a}'\|_{2} $ by
 \begin{equation}\label{attention-1}
\|\nabla_{\mathbf{W}_K} F(\mathbf{x}_n|\theta)-\nabla_{\mathbf{W}_K} F(\mathbf{x}_n|\theta')\|_{2} \leq2e K_{\max} B^3 C_\mathbf{X}^3\|\Delta\theta\|_{*},
\end{equation}
where $K_{\max}\geq |\mathcal{T}^{n}| $ is the maximum neighborhood size, $B$ is the upper bound of weight matrices (technical details in Appendix \textcolor{red}{F}). It mirrors the Lemma \ref{lem:bound_variation_df_dW} in our approach for deep GCNs, where we recursively decomposed gradients across layers (see Lemma \ref{lem:bound_variation_df_dW}). For Graph Transformer, similar recursive relations can be derived for attention layers, with additional terms capturing interactions between $\mathbf{W}_Q\mathbf{X}, \mathbf{W}_K\mathbf{X}, \mathbf{W}_V \mathbf{X}$. For GCNs, we bounded gradient variations using norms of graph filters and layer parameters (e.g., $\|g(\mathbf{L})\|_2$, $\|\mathbf{W}^{(k)}\|_2$). For Graph Transformer, this will be extended to:
  singular values of $\mathbf{W}_Q, \mathbf{W}_K, \mathbf{W}_V$ (analogous to $C_g$ in GCNs), as they control the "strength" of feature projections and Lipschitz constants of softmax and feed-forward activations (replacing $\alpha_\sigma$ for GCN activations, and leads to an analogous to Theorem \ref{thm:Uni_stability} for deep GCNs.}

\section{Conclusion and Further Remarks}\label{sec:Conclusion}
This paper explores the generalization of deep GCNs by providing an upper bound on their generalization gap. Our generalization bound is obtained based on the algorithmic stability of deep GCNs trained by the SGD algorithm. Our analysis demonstrates that the algorithmic stability of deep GCNs is contingent upon two factors: the largest absolute eigenvalue (or maximum singular value) of graph filter operators and the number of layers utilized. In particular, if the aforementioned eigenvalue (or singular value) remains invariant regardless of changes in the graph size, deep GCNs exhibit robust uniform stability, resulting in an enhanced generalization capability. Additionally, our results suggest that a greater number of layers can increase the generalization  gap and subsequently degrade the performance of deep GCNs. This provides guidance for designing well-performing deep GCNs with a proper number of layers \cite{li2023deepergcn}. Most importantly, the result of single-layer GCNs in \cite{2019Stable_GCN} can be regarded as a special case of our results in deep GCNs without hidden layers.

\textcolor{black}{While our study is primarily focused on exploring the fundamental principles of generalizability and stability in the context of a simple deep GCN model framework, the theoretical insights obtained here can also offer preliminary perspectives on several research topics that have drawn increasing attention in the graph neural network community. These include, among others, the over-smoothing problem in deep architectures \cite{rusch2023survey,chen2022bag}, the design of models tailored for heterophilic graphs \cite{zheng2022graph,zhu2023heterophily}, and the emerging topic of graph out-of-distribution (OOD) generalization \cite{li2022ood,li2025out}. Our theoretical study can provide potential hints toward these directions, but more fine-grained and comprehensive work is still needed to fully address them. Below, we elaborate on these aspects in turn, aiming to clarify their conceptual connections with our work, outline possible directions for extending our theoretical framework, and highlight three open and challenging questions that can serve as seeds for future exploration.}

\textcolor{black}{\textit{How can the impact of over-smoothing in deep GCNs be mitigated?} We first note that, given a trivial deep GCN model characterized by over-smoothed node embeddings (which typically result in significant training errors), our theoretical upper bound still holds — that is, for a given graph filter, an increase in layers could potentially increase this upper bound in a probabilistic sense. This also motivates the exploration of advanced deep GCN models that incorporate mechanisms to counteract over-smoothing, such as the skip connection technique used in GCNII \cite{chen2020simple} and its follow-up works. As detailed in Section 5, our theoretical results can in fact be extended to the setting of GCNII, thereby providing analytical support for architectures that integrate skip connections. In both theory and practice, reducing the maximum absolute eigenvalue of graph filter operators is achievable through the strategic implementation of skip connections across layers, which can potentially reduce the generalization gap. From this perspective, our findings may inspire further studies into sophisticated deep GCN architectures designed to mitigate over-smoothing, offering a promising direction for both theoretical and practical advancements.}

\textcolor{black}{\textit{What is the role of heterophily in GCN generalization?} It is also valuable to consider extending our theoretical analysis to models specifically designed for heterophilic graphs, where nodes often connect to neighbors with dissimilar labels. This would require incorporating the homophily/heterophily ratio of the input graph signal into the upper bound estimation, thereby capturing how graph signal characteristics influence generalization. Although our empirical study here considers two types of low-pass filters on homophilic benchmark datasets (Cora, Citeseer, Pubmed), our theoretical framework is not restricted to low-pass scenarios alone. As remarked in Section 4.2, the analysis framework is in principle applicable to a broader range of filtering schemes; however, the derivations in our proofs do not explicitly examine the impact of specific quantities such as the homophily/heterophily ratio, leaving this as an open aspect for further refinement. To ensure a consistent and fair empirical evaluation, as demonstrated in \cite{2019Stable_GCN}, we adopt homophilic datasets that are standard in prior stability and generalization analyses of GCNs. For analyses involving high-pass filters, it would be appropriate to engage with heterophilic benchmark datasets (e.g., Texas, Wisconsin, Cornell). Relevant to this discussion is the recent work \cite{shi2024homophily}, which employs analytical tools from statistical physics and random matrix theory to precisely characterize generalization in simple GCNs on the contextual stochastic block model (CSBM). Such studies, although based on specific graph signal assumptions, could inspire refinements to our theoretical framework by jointly considering graph signal characteristics (homophily/heterophily) and model complexities (filter types, depth, and width).}

\textcolor{black}{\textit{Can insights from in-distribution generalization inform OOD generalization?} Beyond the above considerations, another relevant line of research that has recently attracted considerable attention is graph out-of-distribution (OOD) generalization \cite{li2022ood,li2025out}. It is worth clarifying that the problem setting and theoretical assumptions in OOD generalization are distinct from those in the in-distribution generalization framework considered in this work. In-distribution generalization focuses on scenarios where both training and test data are drawn from the same underlying distribution, enabling rigorous analysis under well-defined stochastic assumptions, such as those adopted in our stability-based framework. In contrast, OOD generalization addresses cases involving distribution shifts, which often require additional modeling principles (e.g., invariance to spurious correlations, causal structure modeling, or domain adaptation techniques) and seek performance guarantees that hold across domains. Despite these differences, the two areas can be mutually beneficial: in-distribution analyses, such as our characterization of bias–variance trade-offs and the influence of spectral properties of graph filters on generalization, may offer insights for developing more OOD-robust architectures; conversely, OOD-oriented approaches, such as invariant risk minimization or causal subgraph intervention, may inspire new regularization schemes or architectural components that also enhance in-distribution performance. Related to this discussion, the authors in \cite{baranwal2021graph} analyze a one-layer GCN trained on the CSBM via logistic regression, providing theoretical insights into improved linear separability and out-of-distribution generalization in semi-supervised node classification. Extending the current stability-based framework to accommodate mild forms of distribution shift thus presents an appealing research direction that could bridge these two lines of work and advance the understanding of generalization in graph neural networks. }

\textcolor{black}{Taken together, these discussions highlight that our theoretical framework, while developed under a specific in-distribution setting, has the potential to be extended and adapted to address a broader range of challenges in graph learning.}

\textcolor{black}{Building on the above open questions, which outline core challenges for future exploration, it is also important to consider more concrete research directions and methodological extensions. For example, the theoretical analysis presented in this study could be extended to encompass other commonly used learning algorithms in graph neural networks, moving beyond the scope of SGD. Our theoretical results may also inform the exploration of strategies to enhance the generalization capability of deep graph neural networks, such as investigating the efficacy of regularization techniques, conducting advanced network architecture searches, or developing adaptive graph filters. In addition, establishing the potential connection between model stability, generalization, and the issues of over-smoothing and over-squashing represents another promising avenue. Understanding these interrelationships could contribute to the development of novel techniques and algorithms that address these challenges, thereby complementing the broader problem-oriented directions discussed above and improving the overall effectiveness of deep graph neural networks in dealing with more complex tasks.}
\setcounter{lemma}{4}
\setcounter{equation}{0}
\renewcommand{\theequation}{A.\arabic{equation}}

\section*{Acknowledgment}
This work was supported in part by the National Natural Science Foundation of China (No. U21A20473, No. 62536006, No. 62172370). M. Li also acknowledged the support from the ``Pioneer'' and ``Leading Goose'' R\&D Program of Zhejiang (No. 2024C03262). G. Yang acknowledged the support from the Opening Project of Guangdong Province Key Laboratory of Computational Science at the Sun Yat-sen University (No. 2024008). H. Feng was supported in part  by the Research Grants Council of Hong Kong (Project no. CityU 11303821 ,and CityU 11315522). X. Zhuang was supported in part  by the Research Grants Council of Hong Kong (Project no. CityU 11309122, CityU 11302023, CityU 11301224, and CityU 11300825). The authors also wish to thank Dr. Yi Wang (City University of Hong Kong, Hong Kong SAR, China) and Dr. Xianchen Zhou (National University of Defense Technology, China) for their insightful discussions and dedicated assistance with the experimental studies.
\section*{Appendix: Preliminaries}\label{appendix_section}

The proofs of our main results are given in this section. We first make some statements about the notations used in the paper. $\mathbf{W}^{\top}$ denotes the transpose of a matrix $\mathbf{W}$; the $(i,j)$-entry of $\mathbf{W}$ is denoted as $\mathbf{W}_{ij}$; however when contributing to avoid confusion,
the alternative notation $\mathbf{W}(i,j)$ will be used. $\|\cdot\|_{2}$ denotes the 2-norm of a matrix or vector and $\|\cdot\|_{F}$ denotes the Frobenius norm. $\boldsymbol{\delta}_{i}$ denotes the unit pulse signal at node $i$ that all elements are 0 except the $i$-th one, which is 1. Let $f:\mathbb{R}^{m\times n}\rightarrow\mathbb{R}$ be a real-valued function of variable $\mathbf{W}\in\mathbb{R}^{m\times n}$. Then, the gradient of $f$ with respect to $\mathbf{W}$ is denoted as
 $$\nabla_{\mathbf{W}}f=\frac{\partial f}{\partial \mathbf{W}}=(\frac{\partial f}{\partial \mathbf{W}_{ij}})\in\mathbb{R}^{m\times n}.$$

 To make it easier to understand the derivation of our results, we first provide the following inequalities, which will be used frequently in the derivation.

For any matrix $\mathbf{A}_{1}$, $\mathbf{A}_{2}$, $\mathbf{A}'_{1}$ and $\mathbf{A}'_{2}$, we have:
 \begin{itemize}
 \item  $\|\mathbf{A}_{1}\mathbf{A}_{2}\|_{F}\leq\|\mathbf{A}_{1}\|_{2}\|\mathbf{A}_{2}\|_{F}$. To prove this, let $\mathbf{A}_{1}=\mathbf{U}\Sigma\mathbf{V}^{\top}$ be the SVD of $\mathbf{A}_{1}$, where $\mathbf{U}$ and $\mathbf{V}$ are both orthogonal matrix. Then,
 $$\|\mathbf{A}_{1}\mathbf{A}_{2}\|_{F}=\|\mathbf{U}\Sigma\mathbf{V}^{\top}\mathbf{A}_{2}\|_{F}
 =\|\Sigma\mathbf{V}^{\top}\mathbf{A}_{2}\|_{F}\leq\|\Sigma\|_{2}\|\mathbf{V}^{\top}\mathbf{A}_{2}\|_{F}
 =\|\mathbf{A}_{1}\|_{2}\|\mathbf{A}_{2}\|_{F}.$$
Similarly, we also have $\|\mathbf{A}_{1}\mathbf{A}_{2}\|_{F}\leq\|\mathbf{A}_{1}\|_{F}\|\mathbf{A}_{2}\|_{2}$.

 \item $\|\mathbf{A}_{1}\mathbf{A}_{2}-\mathbf{A}'_{1}\mathbf{A}'_{2}\|_{F}
       \leq\|\mathbf{A}_{1}-\mathbf{A}_{1}'\|_{F}\|\mathbf{A}_{2}\|_{2}
       +\|\mathbf{A}'_{1}\|_{F}\|\mathbf{A}_{2}-\mathbf{A}_{2}'\|_{2}.$ To show this, note that
       \begin{align*}
          \|\mathbf{A}_{1}\mathbf{A}_{2}-\mathbf{A}'_{1}\mathbf{A}'_{2}\|_{F}
         =&\|(\mathbf{A}_{1}-\mathbf{A}'_{1})\mathbf{A}_{2}+\mathbf{A}'_{1}(\mathbf{A}_{2}-\mathbf{A}'_{2})\|_{F}\\
        \leq&\|(\mathbf{A}_{1}-\mathbf{A}'_{1})\mathbf{A}_{2}\|_{F}+\|\mathbf{A}'_{1}(\mathbf{A}_{2}-\mathbf{A}'_{2})\|_{F}.
       \end{align*}
       Then, the proof is complete using the first inequality $\|\mathbf{A}_{1}\mathbf{A}_{2}\|_{F}\leq\|\mathbf{A}_{1}\|_{F}\|\mathbf{A}_{2}\|_{2}$,
  \item $\|\mathbf{A}_{1}\odot\mathbf{A}_{2}\|_{F}\leq \alpha\|\mathbf{A}_{1}\|_{F}\leq\|\mathbf{A}_{1}\|_{F}\|\mathbf{A}_{2}\|_{F}$, where $\alpha$ is the maximum absolute value of the entries of $\mathbf{A}_{2}$. Note that $\alpha\|\mathbf{A}_{1}\|_{F}\leq\|\mathbf{A}_{1}\|_{F}\|\mathbf{A}_{2}\|_{F}$ holds true because $\alpha\leq\|\mathbf{A}_{2}\|_{F}$. Furthermore,
      \begin{align*}
        &\|\mathbf{A}_{1}\odot\mathbf{A}_{2}\|_{F}=\sqrt{\sum_{ij}\Big(\mathbf{A}_{1}(i,j)\mathbf{A}_{2}(i,j)\Big)^2}\\
        \leq&\sqrt{\sum_{ij}\Big(\alpha\mathbf{A}_{1}(i,j)\Big)^2}
        \leq \alpha\sqrt{\sum_{ij}\Big(\mathbf{A}_{1}(i,j)\Big)^2}=\alpha\|\mathbf{A}_{1}\|_{F}.
      \end{align*}
 \end{itemize}

 \section*{APPENDIX A: Gradient computation for SGD} \label{Sec:appendix_gradient}
 To work with the SGD algorithm, we provide a recursive formula for the gradient of the final output $f(\mathbf{x}|\theta)$ at node $\mathbf{x}$ in the GCNs model (\ref{equ:GCN_model}) with respect to the learnable parameters.
\begin{itemize}
  \item For the final layer,
  \begin{equation}\label{equ:Gradient_w}
  \nabla_{\mathbf{w}}f(\mathbf{x}|\theta)=
  \nabla\sigma\big(\boldsymbol{\delta}_{\mathbf{x}}^{\top}g(\mathbf{L})\mathbf{X}^{(K)}\mathbf{w}\big)
  \big[\boldsymbol{\delta}^{\top}_{\mathbf{x}}g(\mathbf{L})\mathbf{X}^{(K)}\big]^{\top},
  \end{equation}
  \item For the hidden layer $k=1,2,\dots,K$,
  \begin{equation}\label{equ:Gradient_W}
     \nabla_{\mathbf{W}^{(k)}}f(\mathbf{x}|\theta)
     =\big[g(\mathbf{L})\mathbf{X}^{(k-1)}\big]^{\top}\Big(\frac{\partial
f(\mathbf{x}|\theta)}{\partial\mathbf{X}^{(k)}}\odot\mathbf{R}^{(k)}\Big),
  \end{equation}
  where $\mathbf{R}^{(k)}:=\nabla\sigma\big(g(\mathbf{L})\mathbf{X}^{(k-1)}\mathbf{W}^{(k)}\big)$ and
  \begin{equation}\label{equ:Gradient_X}
    \frac{\partial f(\mathbf{x}|\theta)}{\partial\mathbf{X}^{(k-1)}}=g(\mathbf{L})^{\top}\Big(\frac{\partial
f(\mathbf{x}|\theta)}{\partial\mathbf{X}^{(k)}}\odot\mathbf{R}^{(k)}\Big)\big[\mathbf{W}^{(k)}\big]^{\top},
  \end{equation}
  with
  \begin{equation}\label{equ:Gradient_XK}
   \frac{\partial f(\mathbf{x}|\theta)}{\partial\mathbf{X}^{(K)}} =\nabla\sigma\big(\boldsymbol{\delta}_{\mathbf{x}}^{\top}g(\mathbf{L})\mathbf{X}^{(K)}\mathbf{w}\big)
  \big[\boldsymbol{\delta}^{\top}_{\mathbf{x}}g(\mathbf{L})\big]^{\top}\mathbf{w}^{\top}.
  \end{equation}
\end{itemize}
The notation $\odot$ represents the Hadamard product of two matrices. \eqref{equ:Gradient_w} and \eqref{equ:Gradient_XK} are easy to verify, while \eqref{equ:Gradient_W} and \eqref{equ:Gradient_X} are not. In the following, a detailed procedure is provided to derive \eqref{equ:Gradient_W} and \eqref{equ:Gradient_X}.

First, since
   $\mathbf{X}^{(k)}_{ij}=\sigma\big(\boldsymbol{\delta}_{i}^{\top}g(\mathbf{L})\mathbf{X}^{(k-1)}\mathbf{W}^{(k)}\boldsymbol{\delta}_{j}\big)$,
    \begin{align*}
     \frac{\partial\mathbf{X}^{(k)}_{ij}}{\partial\mathbf{W}^{(k)}}
  & = \frac{\partial \sigma\big(\boldsymbol{\delta}_{i}^{\top}g(\mathbf{L})\mathbf{X}^{(k-1)}\mathbf{W}^{(k)}\boldsymbol{\delta}_{j}\big)}{\partial\mathbf{W}^{(k)}} \\  &=\nabla\sigma\big(\boldsymbol{\delta}_{i}^{\top}g(\mathbf{L})\mathbf{X}^{(k-1)}\mathbf{W}^{(k)}\boldsymbol{\delta}_{j}\big)
\frac{\partial\big\{\boldsymbol{\delta}_{i}^{\top}g(\mathbf{L})\mathbf{X}^{(k-1)}\mathbf{W}^{(k)}\boldsymbol{\delta}_{j}\big\}}{\partial\mathbf{W}^{(k)}}\\
  & =\nabla\sigma\big(\boldsymbol{\delta}_{i}^{\top}g(\mathbf{L})\mathbf{X}^{(k-1)}\mathbf{W}^{(k)}\boldsymbol{\delta}_{j}\big)
  \big[g(\mathbf{L})\mathbf{X}^{(k-1)}\big]^{\top}\boldsymbol{\delta}_{i}\boldsymbol{\delta}_{j}^{\top},
    \end{align*}
and

 \begin{align*}
      \frac{\partial\mathbf{X}^{(k)}_{ij}}{\partial\mathbf{X}^{(k-1)}}
  & =\frac{\partial\sigma\big(\boldsymbol{\delta}_{i}^{\top}g(\mathbf{L})\mathbf{X}^{(k-1)}\mathbf{W}^{(k)}\boldsymbol{\delta}_{j}\big)}
  {\partial\mathbf{X}^{(k-1)}} 
=\nabla\sigma\big(\boldsymbol{\delta}_{i}^{\top}g(\mathbf{L})\mathbf{X}^{(k-1)}\mathbf{W}^{(k)}\boldsymbol{\delta}_{j}\big)
  g(\mathbf{L})^{\top}\boldsymbol{\delta}_{i}\boldsymbol{\delta}_{j}^{\top}\big[\mathbf{W}^{(k)}\big]^{\top}.
\end{align*}
Let $\mathbf{R}^{(k)}=\nabla\sigma\big(g(\mathbf{L})\mathbf{X}^{(k-1)}\mathbf{W}^{(k)}\big)$. Then,
\begin{align*}
   \frac{\partial f(\mathbf{x}|\theta)}{\partial\mathbf{W}^{(k)}}
&=\sum_{i,j}\frac{\partial
f(\mathbf{x}|\theta)}{\partial\mathbf{X}^{(k)}_{ij}}\cdot\frac{\partial\mathbf{X}^{(k)}_{ij}}{\partial\mathbf{W}^{(k)}}
      =\sum_{i,j}\frac{\partial
f(\mathbf{x}|\theta)}{\partial\mathbf{X}^{(k)}}(i,j)\cdot\frac{\partial\mathbf{X}^{(k)}_{ij}}{\partial\mathbf{W}^{(k)}}\\
 &=\sum_{i,j}\frac{\partial
f(\mathbf{x}|\theta)}{\partial\mathbf{X}^{(k)}}(i,j)\cdot\mathbf{R}^{(k)}(i,j)
\big[g(\mathbf{L})\mathbf{X}^{(k-1)}\big]^{\top}\boldsymbol{\delta}_{i}\boldsymbol{\delta}_{j}^{\top}\\
 &=\big[g(\mathbf{L})\mathbf{X}^{(k-1)}\big]^{\top}\sum_{i,j}\frac{\partial
f(\mathbf{x}|\theta)}{\partial\mathbf{X}^{(k)}}(i,j)\cdot\mathbf{R}^{(k)}(i,j)\boldsymbol{\delta}_{i}\boldsymbol{\delta}_{j}^{\top}\\
 &=\big[g(\mathbf{L})\mathbf{X}^{(k-1)}\big]^{\top}\Big(\frac{\partial
f(\mathbf{x}|\theta)}{\partial\mathbf{X}^{(k)}}\odot\mathbf{R}^{(k)}\Big),
    \end{align*}
and
    \begin{align*}
   \frac{\partial f(\mathbf{x}|\theta)}{\partial\mathbf{X}^{(k-1)}}
&=\sum_{i,j}\frac{\partial
f(\mathbf{x}|\theta)}{\partial\mathbf{X}^{(k)}_{ij}}\cdot\frac{\partial\mathbf{X}^{(k)}_{ij}}{\partial\mathbf{X}^{(k-1)}}\\
  &=g(\mathbf{L})^{\top}\Big(\sum_{i,j}\frac{\partial
f(\mathbf{x}|\theta)}{\partial\mathbf{X}^{(k)}}(i,j)\cdot\mathbf{R}^{(k)}(i,j)\boldsymbol{\delta}_{i}\boldsymbol{\delta}_{j}^{\top}\Big)
\big[\mathbf{W}^{(k)}\big]^{\top}\\
  &=g(\mathbf{L})^{\top}\Big(\frac{\partial
f(\mathbf{x}|\theta)}{\partial\mathbf{X}^{(k)}}\odot\mathbf{R}^{(k)}\Big)\big[\mathbf{W}^{(k)}\big]^{\top}.
    \end{align*}
This completes the derivation of \eqref{equ:Gradient_W} and \eqref{equ:Gradient_X}.

Based on the above recursive formula, we prove the following lemma recursively.
\begin{lemma}\label{lem:bound_GCN}
  Let the assumptions made in Section \ref{Sec:Assumption} hold. Then, we have the following results for the GCNs model (\textcolor{red}{1}) during the training procedure.
  \begin{itemize}
  \item Hidden layer output $\mathbf{X}^{(k)}(k=1,2\dots,K)$ satisfies
  \begin{equation}\label{equ:bound_X}
    \|\mathbf{X}^{(k)}\|_{F}\leq B^{k}\alpha^{k}_{\sigma}C^{k}_{g}C_{\mathbf{X}}.
    \end{equation}
    \item The gradient of $f$ with respect to $\mathbf{X}^{(k)}~(k=1,2,\dots,K)$ satisfies
    \begin{equation}\label{equ:bound_df_dX}
  \|\frac{\partial f(\mathbf{x}|\theta)}{\partial\mathbf{X}^{(k)}}\|_{F}\leq B^{K+1-k}\alpha^{K+1-k}_{\sigma}C^{K+1-k}_{g}.
\end{equation}
    \item The gradient of $f$ with respect to $\mathbf{W}^{(k)}~(k=1,\dots,K+1)$ satisfies
      \begin{equation}\label{equ:bound_df_dW}
    \big\|\nabla_{\mathbf{W}^{(k)}}f(\mathbf{x}|\theta)\big\|_{F}\leq B^{K}\alpha_{\sigma}^{K+1}C^{K+1}_{g}C_{\mathbf{X}},
  \end{equation}
  where $\mathbf{W}^{(K+1)}:=\mathbf{w}$.
  \end{itemize}
\end{lemma}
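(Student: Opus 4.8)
The plan is to establish the three bounds sequentially, using the three elementary matrix inequalities collected at the start of the appendix together with the recursive gradient formulas \eqref{equ:Gradient_w}--\eqref{equ:Gradient_XK}. The estimates are coupled: the bound on $\mathbf{X}^{(k)}$ feeds the bound on $\nabla_{\mathbf{W}^{(k)}}f$, while the backward recursion for $\partial f/\partial\mathbf{X}^{(k)}$ feeds both remaining bounds. I would therefore prove \eqref{equ:bound_X} first by \emph{forward} induction in $k$, then \eqref{equ:bound_df_dX} by \emph{backward} induction in $k$, and finally combine the two to obtain \eqref{equ:bound_df_dW}.

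For \eqref{equ:bound_X}, I would induct forward starting from $\|\mathbf{X}^{(0)}\|_F=\|\mathbf{X}\|_F=C_{\mathbf{X}}$. Since $\sigma(0)=0$ and $\sigma$ is $\alpha_{\sigma}$-Lipschitz (Assumption 1), $\|\sigma(\mathbf{M})\|_F\le\alpha_{\sigma}\|\mathbf{M}\|_F$ for any matrix $\mathbf{M}$; applying this with $\mathbf{M}=g(\mathbf{L})\mathbf{X}^{(k-1)}\mathbf{W}^{(k)}$ and then the inequalities $\|\mathbf{A}_1\mathbf{A}_2\|_F\le\|\mathbf{A}_1\|_2\|\mathbf{A}_2\|_F$ (peeling off $g(\mathbf{L})$) and $\|\mathbf{A}\mathbf{B}\|_F\le\|\mathbf{A}\|_F\|\mathbf{B}\|_2$ (peeling off $\mathbf{W}^{(k)}$) gives the one-step contraction $\|\mathbf{X}^{(k)}\|_F\le B\alpha_{\sigma}C_{g}\|\mathbf{X}^{(k-1)}\|_F$, where $\|g(\mathbf{L})\|_2=C_{g}$ and $\|\mathbf{W}^{(k)}\|_2\le B$ by Assumption 3. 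Iterating yields \eqref{equ:bound_X}.

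For \eqref{equ:bound_df_dX}, I would induct backward with base case $k=K$ read off from \eqref{equ:Gradient_XK}: there $\nabla\sigma$ is evaluated at a scalar, so $|\nabla\sigma|\le\alpha_{\sigma}$, and the remaining outer product of the column vector $[\boldsymbol{\delta}_{\mathbf{x}}^{\top}g(\mathbf{L})]^{\top}$ (norm $\le C_{g}$) with the row vector $\mathbf{w}^{\top}$ (norm $\le B$) gives $\|\partial f/\partial\mathbf{X}^{(K)}\|_F\le B\alpha_{\sigma}C_{g}$. For the inductive step, the recursion \eqref{equ:Gradient_X} combined with the Hadamard bound $\|\mathbf{A}_1\odot\mathbf{A}_2\|_F\le\alpha\|\mathbf{A}_1\|_F$ (using that the entries of $\mathbf{R}^{(k)}=\nabla\sigma(\cdot)$ are at most $\alpha_{\sigma}$ in absolute value) and the two multiplicative inequalities produces $\|\partial f/\partial\mathbf{X}^{(k-1)}\|_F\le B\alpha_{\sigma}C_{g}\|\partial f/\partial\mathbf{X}^{(k)}\|_F$; iterating downward from $k=K$ delivers the exponent $K+1-k$.

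Finally, for \eqref{equ:bound_df_dW} with $k\le K$, I would use \eqref{equ:Gradient_W}: bound $\|[g(\mathbf{L})\mathbf{X}^{(k-1)}]^{\top}\|_2\le\|g(\mathbf{L})\mathbf{X}^{(k-1)}\|_F\le C_{g}\|\mathbf{X}^{(k-1)}\|_F$ via \eqref{equ:bound_X}, bound the Hadamard factor by $\alpha_{\sigma}\|\partial f/\partial\mathbf{X}^{(k)}\|_F$, and substitute \eqref{equ:bound_df_dX}; the exponents of $C_{g}$, $B$, $\alpha_{\sigma}$, $C_{\mathbf{X}}$ then collapse to $(K+1,K,K+1,1)$ independently of $k$. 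The case $k=K+1$ (the final layer $\mathbf{w}$) is handled directly from \eqref{equ:Gradient_w}, where the column vector $[\boldsymbol{\delta}_{\mathbf{x}}^{\top}g(\mathbf{L})\mathbf{X}^{(K)}]^{\top}$ has norm $\le C_{g}\|\mathbf{X}^{(K)}\|_F$, again via \eqref{equ:bound_X}, yielding the identical bound. I do not expect a genuine obstacle: the only care needed is the bookkeeping of exponents, the choice of the correct induction direction (forward for the features, backward for the gradients $\partial f/\partial\mathbf{X}^{(k)}$), and the mild point that in \eqref{equ:Gradient_XK} and \eqref{equ:Gradient_w} the $\nabla\sigma$ term is scalar-valued, so the uniform bound $|\nabla\sigma|\le\alpha_{\sigma}$ applies directly instead of the matrix Hadamard estimate.
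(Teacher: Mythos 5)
Your proposal is correct and follows essentially the same route as the paper's proof: forward induction for the feature bound \eqref{equ:bound_X}, backward induction from \eqref{equ:Gradient_XK} through the recursion \eqref{equ:Gradient_X} for \eqref{equ:bound_df_dX}, and a direct combination of the two via \eqref{equ:Gradient_w} and \eqref{equ:Gradient_W} for \eqref{equ:bound_df_dW}, with the same one-step contraction factor $B\alpha_{\sigma}C_{g}$ and the same use of the Hadamard and mixed-norm inequalities. No gaps; the exponent bookkeeping you describe matches the paper exactly.
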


{\emph{Proof}}.
Now, we give a complete proof for Lemma \ref{lem:bound_GCN}.
  \begin{itemize}
     \item Firstly, for $k=1,2,\dots,K$, since $\|\sigma(\mathbf{Z})\|_{F}\leq\alpha_{\sigma}\|\mathbf{Z}\|_{F}$ holds for any matrix $\mathbf{Z}$, we have $$\|\mathbf{X}^{(k)}\|_{F} =\|\sigma\big(g(\mathbf{L})\mathbf{X}^{(k-1)}\mathbf{W}^{(k)}\big)\|_{F}
    \leq\alpha_{\sigma}\|g(\mathbf{L})\mathbf{X}^{(k-1)}\mathbf{W}^{(k)}\|_{F}.$$
    Then, by applying the inequality $\|\mathbf{A}_{1}\mathbf{A}_{2}\|_{F}\leq\|\mathbf{A}_{1}\|_{2}\|\mathbf{A}_{2}\|_{F}$ twice, we obtain 
     $\|\mathbf{X}^{(k)}\|_{F} \leq B\alpha_{\sigma}C_{g}\|\mathbf{X}^{(k-1)}\|_{F}.$
     Note that $\|\mathbf{X}^{(1)}\|_{F} \leq B\alpha_{\sigma}C_{g}\|\mathbf{X}^{(0)}\|_{F}=B\alpha_{\sigma}C_{g}C_{\mathbf{X}}$, it further yields that $$
    \|\mathbf{X}^{(k)}\|_{F}\leq B^{k}\alpha^{k}_{\sigma}C^{k}_{g}C_{\mathbf{X}},~~~k=1,2,\dots,K,
    $$
    which completes the proof of \eqref{equ:bound_X}.
    
\item To show \eqref{equ:bound_df_dX}, note that for $k=1,2,\dots,K-1$, by applying $\|\mathbf{A}_{1}\mathbf{A}_{2}\|_{F}\leq\|\mathbf{A}_{1}\|_{2}\|\mathbf{A}_{2}\|_{F}$ twice, we obtain
    \begin{align*}
  \Big\|\frac{\partial
f(\mathbf{x}|\theta)}{\partial\mathbf{X}^{(k)}}\Big\|_{F}
 = \Big\|g(\mathbf{L})^{\top}\Big(\frac{\partial
f(\mathbf{x}|\theta)}{\partial\mathbf{X}^{(k+1)}}\odot\mathbf{R}^{(k+1)}\Big)\big[\mathbf{W}^{(k+1)}\big]^{\top}\Big\|_{F}
\leq\|g(\mathbf{L})\|_{2}\Big\|\Big(\frac{\partial
f(\mathbf{x}|\theta)}{\partial\mathbf{X}^{(k+1)}}\odot\mathbf{R}^{(k+1)}\Big)\Big\|_{F}\|\mathbf{W}^{(k+1)}\|_{2}.
  \end{align*}
  Since $C_g=\|g(\mathbf{L})\|_{2}$, $\|\mathbf{W}^{(k+1)}\|_{2}\leq B$ and the absolute value of the elements in $\mathbf{R}^{(k+1)}$ is less than $\alpha_{\sigma}$, we further have
  $\big\|\frac{\partial
f(\mathbf{x}|\theta)}{\partial\mathbf{X}^{(k)}}\big\|_{F}\leq B\alpha_{\sigma}C_{g}\|\frac{\partial
f(\mathbf{x}|\theta)}{\partial\mathbf{X}^{(k+1)}}\|_{F}$. Meanwhile, since $|\nabla\sigma\big(\boldsymbol{\delta}_{\mathbf{x}}^{\top}g(\mathbf{L})\mathbf{X}^{(K)}\mathbf{w}\big)|\leq\alpha_{\sigma}$, 
$$\big\|\frac{\partial f(\mathbf{x}|\theta)}{\partial\mathbf{X}^{(K)}}\big\|_{F}
   =\big\|\nabla\sigma\big(\boldsymbol{\delta}_{\mathbf{x}}^{\top}g(\mathbf{L})\mathbf{X}^{(K)}\mathbf{w}\big)
  \big[\boldsymbol{\delta}^{\top}_{\mathbf{x}}g(\mathbf{L})\big]^{\top}\mathbf{w}\big\|_{F}
            \leq B\alpha_{\sigma}C_{g}.$$
  Therefore, for $k=1,2,\dots,K$,
 $$
  \|\frac{\partial f(\mathbf{x}|\theta)}{\partial\mathbf{X}^{(k)}}\|_{F}\leq B^{K+1-k}\alpha^{K+1-k}_{\sigma}C^{K+1-k}_{g}.
$$
This completes the proof of \eqref{equ:bound_df_dX}.

   \item Now, let's prove \eqref{equ:bound_df_dW}. Firstly, note that $|\nabla\sigma\big(\boldsymbol{\delta}_{\mathbf{x}}^{\top}g(\mathbf{L})\mathbf{X}^{(K)}\mathbf{w}\big)|\leq\alpha_{\sigma}$, so $$\big\|\nabla_{\mathbf{w}}f(\mathbf{x}|\theta)\big\|_{F}
   =\big\|\nabla\sigma\big(\boldsymbol{\delta}_{\mathbf{x}}^{\top}g(\mathbf{L})\mathbf{X}^{(K)}\mathbf{w}\big)
  \big[g(\mathbf{L})\mathbf{X}^{(K)}\big]^{\top}\boldsymbol{\delta}_{\mathbf{x}}\big\|_{F}
  \leq\alpha_{\sigma}\|\mathbf{X}^{(K)}\|_{F}\|\boldsymbol{\delta}_{\mathbf{x}}^{\top}g(\mathbf{L})\|_{2}.$$
  Combining \eqref{equ:bound_X} and $\|\boldsymbol{\delta}_{\mathbf{x}}^{\top}g(\mathbf{L})\|_{2}\leq C_g$, we have
  $$\big\|\nabla_{\mathbf{w}}f(\mathbf{x}|\theta)\big\|_{F}\leq B^{K}\alpha_{\sigma}^{K+1}C^{K+1}_{g}C_{\mathbf{X}}.$$
  Furthermore, for $k=1,2,\dots,K$, by applying $\|\mathbf{A}_{1}\mathbf{A}_{2}\|_{F}\leq\|\mathbf{A}_{1}\|_{2}\|\mathbf{A}_{2}\|_{F}$ twice, it yields
  $$\|\nabla_{\mathbf{W}^{(k)}}f(\mathbf{x}|\theta)\|_{F}
   =\big\|\big[g(\mathbf{L})\mathbf{X}^{(k-1)}\big]^{\top}\Big(\frac{\partial
f(\mathbf{x}|\theta)}{\partial\mathbf{X}^{(k)}}\odot\mathbf{R}^{(k)}\Big)\big\|_{F}
 \leq\big\|g(\mathbf{L})\big\|_{2}\big\|\mathbf{X}^{(k-1)}\big\|_{F}\big\|\frac{\partial
f(\mathbf{x}|\theta)}{\partial\mathbf{X}^{(k)}}\odot\mathbf{R}^{(k)}\big\|_{F}.$$ 
Since the absolute value of the elements in $\mathbf{R}^{(k)}$ is less than $\alpha_{\sigma}$, we have
$$\|\nabla_{\mathbf{W}^{(k)}}f(\mathbf{x}|\theta)\|_{F}\leq \alpha_{\sigma}C_g\big\|\mathbf{X}^{(k-1)}\big\|_{F}\big\|\frac{\partial
f(\mathbf{x}|\theta)}{\partial\mathbf{X}^{(k)}}\big\|_{F}\leq B^{K}\alpha^{K+1}_{\sigma}C^{K+1}_{g}C_{\mathbf{X}},$$
which holds by combining \eqref{equ:bound_X} and \eqref{equ:bound_df_dX}. This completes the proof of \eqref{equ:bound_df_dW}.
  \end{itemize}

\section*{APPENDIX B:Proof of Lemma \ref{lemma:Step_1}} \label{proof:lem:step_1}
To prove Lemma \ref{lemma:Step_1}, we first provide the following lemma to show the variation of output in each layer for two GCNs with different learned parameters $\theta=\{\mathbf{W}^{(1)},\mathbf{W}^{(2)},\dots,\mathbf{W}^{(K)},\mathbf{w}\}$ and $\theta'=\{\mathbf{W}^{(1)'},\mathbf{W}^{(2)'},\dots,\mathbf{W}^{(K)'},\mathbf{w}'\}$. Let $\mathbf{X}^{(k)}$ and $\mathbf{X}^{(k)'}$ be their output of the hidden layer, as well as $f(\mathbf{x}|\theta)$ and $f(\mathbf{x}|\theta')$ the final output of node $\mathbf{x}$. The following lemma provides a bound of $\mathbf{X}^{(k)}-\mathbf{X}^{(k)'}$ and $f(\mathbf{x}|\theta)-f(\mathbf{x}|\theta')$ based on $\triangle\theta=\{\triangle\mathbf{W}^{(1)},\dots,\triangle\mathbf{W}^{(K)},\triangle\mathbf{w}\}$.

\begin{lemma}\label{lem:bound_variation}
  Consider two GCNs with parameters $\theta$ and $\theta'$, respectively. Then, we obtain the following results for their variations.
  \begin{itemize}
    \item Their variation of outputs in hidden layers  $\triangle\mathbf{X}^{(k)}:=\mathbf{X}^{(k)}-\mathbf{X}^{(k)'}~(k=1,2,\dots,K)$ satisfies
    \begin{equation}\label{equ:bound_variation_X}
  \|\triangle\mathbf{X}^{(k)}\|_{F}\leq B^{k-1}\alpha^{k}_{\sigma}C^{k}_{g}C_{\mathbf{X}}
  \Big(\sum_{j=1}^{k}\|\triangle\mathbf{W}^{(j)}\|_{2}\Big).
  \end{equation}
  \item Furthermore, for the final output of node $\mathbf{x}$,
  \begin{equation}\label{equ:bound_variation_f}
       |f(\mathbf{x}|\theta)-f(\mathbf{x}|\theta')|\leq B^{K}\alpha^{K+1}_{\sigma}C^{K+1}_{g}C_{\mathbf{X}}\|\triangle\theta\|_{*}.
      \end{equation}
  \end{itemize}
\end{lemma}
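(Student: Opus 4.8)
The plan is to prove Lemma~\ref{lem:bound_variation} by induction on the layer index $k$, establishing \eqref{equ:bound_variation_X} first and then using it to obtain \eqref{equ:bound_variation_f} for the final output. The core idea is that $\triangle\mathbf{X}^{(k)}$ arises from two sources: the change in the parameter $\mathbf{W}^{(k)}$ at layer $k$, and the propagated change $\triangle\mathbf{X}^{(k-1)}$ coming from the previous layers. I would decompose the difference accordingly and bound each contribution separately.

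First I would treat the base case. Writing $\mathbf{X}^{(1)}=\sigma(g(\mathbf{L})\mathbf{X}\mathbf{W}^{(1)})$ and $\mathbf{X}^{(1)'}=\sigma(g(\mathbf{L})\mathbf{X}\mathbf{W}^{(1)'})$, I would apply the $\alpha_{\sigma}$-Lipschitz property of $\sigma$ (Assumption~1) to get $\|\triangle\mathbf{X}^{(1)}\|_{F}\leq\alpha_{\sigma}\|g(\mathbf{L})\mathbf{X}\triangle\mathbf{W}^{(1)}\|_{F}$, then use the submultiplicative inequality $\|\mathbf{A}_{1}\mathbf{A}_{2}\|_{F}\leq\|\mathbf{A}_{1}\|_{2}\|\mathbf{A}_{2}\|_{F}$ from the appendix preamble (applied twice, with $C_{g}=\|g(\mathbf{L})\|_{2}$ and $C_{\mathbf{X}}=\|\mathbf{X}\|_{F}$) to reach $\alpha_{\sigma}C_{g}C_{\mathbf{X}}\|\triangle\mathbf{W}^{(1)}\|_{2}$, matching \eqref{equ:bound_variation_X} at $k=1$. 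For the inductive step, I would write
\begin{align*}
\triangle\mathbf{X}^{(k)} = \sigma\big(g(\mathbf{L})\mathbf{X}^{(k-1)}\mathbf{W}^{(k)}\big) - \sigma\big(g(\mathbf{L})\mathbf{X}^{(k-1)'}\mathbf{W}^{(k)'}\big),
\end{align*}
apply Lipschitz continuity of $\sigma$, and then split the argument difference via the product-difference inequality $\|\mathbf{A}_{1}\mathbf{A}_{2}-\mathbf{A}_{1}'\mathbf{A}_{2}'\|_{F}\leq\|\mathbf{A}_{1}-\mathbf{A}_{1}'\|_{F}\|\mathbf{A}_{2}\|_{2}+\|\mathbf{A}_{2}-\mathbf{A}_{2}'\|_{2}\|\mathbf{A}_{1}'\|_{F}$, grouping $g(\mathbf{L})\mathbf{X}^{(k-1)}$ against $\mathbf{W}^{(k)}$. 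This produces one term proportional to $\|\triangle\mathbf{X}^{(k-1)}\|_{F}$ (with factor $B\alpha_{\sigma}C_{g}$, using Assumption~3 to bound $\|\mathbf{W}^{(k)}\|_{2}\leq B$) and one term proportional to $\|\triangle\mathbf{W}^{(k)}\|_{2}$ (with factor $\alpha_{\sigma}C_{g}\|\mathbf{X}^{(k-1)}\|_{F}$, bounded using \eqref{equ:bound_X} from Lemma~\ref{lem:bound_GCN}). Substituting the inductive hypothesis into the first term and simplifying should reassemble the claimed sum $\sum_{j=1}^{k}\|\triangle\mathbf{W}^{(j)}\|_{2}$ with the prefactor $B^{k-1}\alpha_{\sigma}^{k}C_{g}^{k}C_{\mathbf{X}}$.

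For \eqref{equ:bound_variation_f}, I would treat the final output $f(\mathbf{x}|\theta)=\sigma(\boldsymbol{\delta}_{\mathbf{x}}^{\top}g(\mathbf{L})\mathbf{X}^{(K)}\mathbf{w})$ as essentially one more layer, applying the same decomposition: Lipschitz continuity of $\sigma$, then splitting the difference between $g(\mathbf{L})\mathbf{X}^{(K)}\mathbf{w}$ and $g(\mathbf{L})\mathbf{X}^{(K)'}\mathbf{w}'$ into a part driven by $\triangle\mathbf{X}^{(K)}$ and a part driven by $\triangle\mathbf{w}$. Invoking \eqref{equ:bound_variation_X} at $k=K$ for the former and \eqref{equ:bound_X} for $\|\mathbf{X}^{(K)}\|_{F}$ in the latter, both contributions collapse into the single prefactor $B^{K}\alpha_{\sigma}^{K+1}C_{g}^{K+1}C_{\mathbf{X}}$ multiplying $\|\triangle\mathbf{w}\|_{2}+\sum_{k=1}^{K}\|\triangle\mathbf{W}^{(k)}\|_{2}=\|\triangle\theta\|_{*}$ as defined in \eqref{equ:norm_theta}. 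The main obstacle I anticipate is the careful bookkeeping in the inductive step: tracking exactly how the powers of $B$, $\alpha_{\sigma}$, and $C_{g}$ accumulate so that the propagated term and the new-parameter term combine into a clean telescoped sum, and verifying that the exponent of $B$ is $k-1$ rather than $k$ (since the topmost weight contributes a $\|\triangle\mathbf{W}\|_{2}$ factor in place of a bounded $\|\mathbf{W}\|_{2}\leq B$ factor). Ensuring the bookkeeping is uniform across the base case, the generic hidden layer, and the final output layer is where the proof demands the most care.
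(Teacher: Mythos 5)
Your proposal is correct and follows essentially the same route as the paper's proof: a recursion on $k$ using the Lipschitz property of $\sigma$, the product-difference splitting of $\mathbf{X}^{(k-1)}\mathbf{W}^{(k)}-\mathbf{X}^{(k-1)'}\mathbf{W}^{(k)'}$, and the bounds $\|\mathbf{X}^{(k-1)}\|_{F}\leq B^{k-1}\alpha_{\sigma}^{k-1}C_{g}^{k-1}C_{\mathbf{X}}$ and $\|\mathbf{W}^{(k)'}\|_{2}\leq B$, with the output layer handled by the identical decomposition. The bookkeeping you flag (exponent $k-1$ on $B$ because the top perturbed weight contributes $\|\triangle\mathbf{W}^{(k)}\|_{2}$ rather than a factor $B$) works out exactly as in the paper.
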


{\emph{Proof}}:
To prove \eqref{equ:bound_variation_X}, we first have that for $k=1,2,\dots,K$,
 $$\|\triangle\mathbf{X}^{(k)}\|_{F} =\|\mathbf{X}^{(k)}-\mathbf{X}^{(k)'}\|_{F}
 =\|\sigma\big(g(\mathbf{L})\mathbf{X}^{(k-1)}\mathbf{W}^{(k)}\big)
     -\sigma\big(g(\mathbf{L})\mathbf{X}^{(k-1)'}\mathbf{W}^{(k)'}\big)\|_{F}.$$
Since $\|\sigma(\mathbf{Z})\|_{F}\leq\alpha_{\sigma}\|\mathbf{Z}\|_{F}$ holds for any matrix $\mathbf{Z}$,  we have
$$\|\triangle\mathbf{X}^{(k)}\|_{F}\leq\alpha_{\sigma}\|g(\mathbf{L})\big(\mathbf{X}^{(k-1)}\mathbf{W}^{(k)}
     -\mathbf{X}^{(k-1)'}\mathbf{W}^{(k)'}\big)\|_{F}\leq\alpha_{\sigma}\|g(\mathbf{L})\|_{2}\cdot\|\mathbf{X}^{(k-1)}\mathbf{W}^{(k)}
     -\mathbf{X}^{(k-1)'}\mathbf{W}^{(k)'}\|_{F}.$$
Note that
\begin{align*}
\|\mathbf{X}^{(k-1)}\mathbf{W}^{(k)}
     -\mathbf{X}^{(k-1)'}\mathbf{W}^{(k)'}\|_{F}&\leq\|\mathbf{X}^{(k-1)}\|_{F}\|\mathbf{W}^{(k)}-\mathbf{W}^{(k)'}\|_{2}
     +\|\mathbf{X}^{(k-1)}-\mathbf{X}^{(k-1)'}\|_{F}\|\mathbf{W}^{(k)'}\|_{2}\\
    &=\|\mathbf{X}^{(k-1)}\|_{F}\|\triangle\mathbf{W}^{(k)}\|_{2}
     +\|\triangle\mathbf{X}^{(k-1)}\|_{F}\|\mathbf{W}^{(k)'}\|_{2}.
\end{align*}
Then, combining \eqref{equ:bound_X} and $\|\mathbf{W}^{(k)'}\|_{2}\leq B$, we obtain $$\|\mathbf{X}^{(k-1)}\mathbf{W}^{(k)}
     -\mathbf{X}^{(k-1)'}\mathbf{W}^{(k)'}\|_{F}\leq B^{k-1}\alpha^{k-1}_{\sigma}C^{k-1}_{g}C_{\mathbf{X}}\|\triangle\mathbf{W}^{(k)}\|_{2}
     +B\|\triangle\mathbf{X}^{(k-1)}\|_{F}.$$
Thus,  
$$\|\triangle\mathbf{X}^{(k)}\|_{F}\leq\alpha_{\sigma}\|g(\mathbf{L})\|_{2}\cdot\|\mathbf{X}^{(k-1)}\mathbf{W}^{(k)}
     -\mathbf{X}^{(k-1)'}\mathbf{W}^{(k)'}\|_{F}\leq B^{k-1}\alpha^{k}_{\sigma}C^{k}_{g}C_{\mathbf{X}}\|\triangle\mathbf{W}^{(k)}\|_{2}
     +B\alpha_{\sigma}C_{g}\|\triangle\mathbf{X}^{(k-1)}\|_{F}.$$
Then, since $\|\triangle\mathbf{X}^{(1)}\|_{F}\leq\alpha_{\sigma}C_{g}C_{\mathbf{X}}\|\triangle\mathbf{W}^{(1)}\|_{2}$, we have 
      $$
  \|\triangle\mathbf{X}^{(k)}\|_{F}\leq B^{k-1}\alpha^{k}_{\sigma}C^{k}_{g}C_{\mathbf{X}}
  \Big(\sum_{j=1}^{k}\|\triangle\mathbf{W}^{(j)}\|_{2}\Big),
$$ holds for any $k=1,2,\dots,K$.
This completely proves \eqref{equ:bound_variation_X}.\\
Furthermore, for the final output, using the Lipschitz property of $\sigma(\cdot)$, we have
$$ |f(\mathbf{x}|\theta)-f(\mathbf{x}|\theta')|
      =|\sigma\big(\boldsymbol{\delta}_{\mathbf{x}}^{\top}g(\mathbf{L})\mathbf{X}^{(K)}\mathbf{w}\big)
      -\sigma\big(\boldsymbol{\delta}_{\mathbf{x}}^{\top}g(\mathbf{L})\mathbf{X}^{(K)'}\mathbf{w}'\big)| 
  \leq \alpha_{\sigma}|\boldsymbol{\delta}_{\mathbf{x}}^{\top}g(\mathbf{L})\big(
      \mathbf{X}^{(K)}\mathbf{w}-\mathbf{X}^{(K)'}\mathbf{w}'\big)|.$$
Note that 
$$|\boldsymbol{\delta}_{\mathbf{x}}^{\top}g(\mathbf{L})\big(
      \mathbf{X}^{(K)}\mathbf{w}-\mathbf{X}^{(K)'}\mathbf{w}'\big)|
\leq\|\boldsymbol{\delta}_{\mathbf{x}}^{\top}g(\mathbf{L})\|_{2}\cdot\|
      \mathbf{X}^{(K)}\mathbf{w}-\mathbf{X}^{(K)'}\mathbf{w}'\|_{F}\leq C_{g}
      \big(\|\mathbf{X}^{(K)}\|_{F}\|\triangle\mathbf{w}\|_{2}+\|\triangle\mathbf{X}^{(K)}\|_{F}\|\mathbf{w}'\|_{2}\big).$$
Combining \eqref{equ:bound_X} and \eqref{equ:bound_variation_X}, we further have
$$|\boldsymbol{\delta}_{\mathbf{x}}^{\top}g(\mathbf{L})\big(
      \mathbf{X}^{(K)}\mathbf{w}-\mathbf{X}^{(K)'}\mathbf{w}'\big)|
\leq B^{K}\alpha_{\sigma}^{K}C_{g}^{K+1}C_\mathbf{X}(\|\triangle\mathbf{w}\|_{2}+\sum_{j=1}^{K}\|\triangle\mathbf{W}^{(j)}\|_{2})
= B^{K}\alpha_{\sigma}^{K}C_{g}^{K+1}C_\mathbf{X}\|\triangle\theta\|_{*}.$$
Thus, 
$$|f(\mathbf{x}|\theta)-f(\mathbf{x}|\theta')|\leq \alpha_{\sigma}|\boldsymbol{\delta}_{\mathbf{x}}^{\top}g(\mathbf{L})\big(
      \mathbf{X}^{(K)}\mathbf{w}-\mathbf{X}^{(K)'}\mathbf{w}'\big)|
\leq B^{K}\alpha_{\sigma}^{K+}C_{g}^{K+1}C_\mathbf{X}\|\triangle\theta\|_{*},$$
which completes the proof of \eqref{equ:bound_variation_f}.

{\emph{Proof of Lemma \ref{lemma:Step_1}}}: Now, we are ready to prove Lemma \ref{lemma:Step_1} based on Lemma \ref{lem:bound_variation}. For any $\mathbf{z}=(\mathbf{x},y)$ taken from $\mathcal{D}$, we denote by $\hat{y}=f(\mathbf{x}|\theta_{T})$ and $\hat{y}'=f(\mathbf{x}|\theta'_{T})$. Firstly,
using the Lipschitz property of loss function $\ell(\cdot,\cdot)$, we have $$\sup_{\mathcal{S},\mathbf{z}}\Big|\mathbb{E}_{\mathcal{A}}\big[\ell(\hat{y},y)\big]
  -\mathbb{E}_{\mathcal{A}}\big[\ell(\hat{y}',y)\big]\Big|
 = \sup_{\mathcal{S},z}\Big|\mathbb{E}_{\mathcal{A}}\big[\ell\big(f(\mathbf{x}|\theta_{T}),y\big)-\ell\big(f(\mathbf{x}|\theta'_{T}),y\big)\big]\Big| \leq\alpha_{\ell}\sup_{\mathbf{x}}\mathbb{E}_{\mathcal{A}}\Big[\big|f(\mathbf{x}|\theta_{T})-f(\mathbf{x}|\theta'_{T})\big|\Big]$$
Then, according to \eqref{equ:bound_variation_f},
\begin{align*}
\sup_{\mathcal{S},\mathbf{z}}\Big|\mathbb{E}_{\mathcal{A}}\big[\ell(\hat{y},y)\big]
  -\mathbb{E}_{\mathcal{A}}\big[\ell(\hat{y}',y)\big]\Big|
\leq \alpha_{\ell}B^{K}\alpha_{\sigma}^{K+1}C_{g}^{K+1}C_{\mathbf{X}}\cdot\mathbb{E}_{\mathcal{A}}\big[\|\triangle\theta_{T}\|_{*}\big].
 \end{align*}
 This completes the proof of Lemma \ref{lemma:Step_1}.

\section*{APPENDIX C: Proof of Lemma \ref{lem:Same_sample} and Lemma \ref{lem:diff_sample}} \label{proof:lem:same_and_diff_sample}

To prove Lemma \ref{lem:Same_sample} and Lemma \ref{lem:diff_sample}, we should first prove the following lemma.
\begin{lemma}\label{lem:bound_variation_df_dW}
  Consider two GCNs with parameters $\theta$ and $\theta'$, respectively. Then, their variation of gradients of $f$ with respect to $\{\mathbf{W}^{(1)},\dots,\mathbf{W}^{(K)},\mathbf{w}\}$ satisfies
  \begin{align}
  \big\|\nabla_{\mathbf{w}}f(\mathbf{x}|\theta)-\nabla_{\mathbf{w}}f(\mathbf{x}|\theta')\big\|_{F}
 \leq \Big(\upsilon_{\sigma}B^{2K}\alpha^{2K}_{\sigma}C^{2K+2}_{g}C^{2}_{\mathbf{X}}
     +B^{K-1}\alpha^{K+1}_{\sigma}C^{K+1}_{g}C_{\mathbf{X}}\Big)\|\triangle\theta\|_{*}, \label{equ:bound_variation_df_dw}
  \end{align}
  and for $k=1,2,\dots,K$,
  \begin{align}
 \big\|\nabla_{\mathbf{W}^{(k)}}f(\mathbf{x}|\theta)-\nabla_{\mathbf{W}^{(k)}}f(\mathbf{x}|\theta')\big\|_{F} 
    \leq 
\Big(\nu_{\sigma}B^{2K}\alpha^{2K}_{\sigma}C^{2K+2}_{g}C^{2}_{\mathbf{X}}
  +B^{K-1}\alpha_{\sigma}^{K+1}C_{g}^{K+1}C_{\mathbf{X}}\Big)\|\triangle\theta\|_{*}
    +\rho_{k}\|\triangle\theta\|_{*},  \label{equ:bound_variation_df_dW}
  \end{align}
  where
  \begin{equation}\label{equ:rho_k}
    \rho_{k}:=\nu_{\sigma}(B\alpha_{\sigma}C_{g})^{K+k-1}C_{g}^{2}C^{2}_{\mathbf{X}}
\Big(\sum_{j=0}^{K-k}(B\alpha_{\sigma}C_{g})^{j}\Big).
  \end{equation}
\end{lemma}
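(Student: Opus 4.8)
\emph{Proof proposal.} The overall plan is to expand every gradient through the explicit back-propagation identities \eqref{equ:Gradient_w}--\eqref{equ:Gradient_XK}, write each difference of a matrix product (or Hadamard product) as a telescoping sum of single-factor perturbations using the two inequalities $\|\mathbf{A}_1\mathbf{A}_2-\mathbf{A}_1'\mathbf{A}_2'\|_F\le\|\mathbf{A}_1-\mathbf{A}_1'\|_F\|\mathbf{A}_2\|_2+\|\mathbf{A}_2-\mathbf{A}_2'\|_F\|\mathbf{A}_1'\|_2$ and $\|\mathbf{A}_1\odot\mathbf{A}_2\|_F\le\|\mathbf{A}_1\|_F\|\mathbf{A}_2\|_F$ from the appendix preamble, and then control each factor either by a magnitude bound from Lemma \ref{lem:bound_GCN} or by a perturbation bound from Lemma \ref{lem:bound_variation}. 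Throughout I abbreviate the back-propagated gradient by $\mathbf{G}^{(k)}:=\partial f(\mathbf{x}|\theta)/\partial\mathbf{X}^{(k)}$.

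I would first settle \eqref{equ:bound_variation_df_dw}. By \eqref{equ:Gradient_w}, $\nabla_{\mathbf{w}}f=\nabla\sigma(a)\,\mathbf{v}$ with $a:=\boldsymbol{\delta}_{\mathbf{x}}^{\top}g(\mathbf{L})\mathbf{X}^{(K)}\mathbf{w}$ and $\mathbf{v}:=[\boldsymbol{\delta}_{\mathbf{x}}^{\top}g(\mathbf{L})\mathbf{X}^{(K)}]^{\top}$, so the difference splits as $(\nabla\sigma(a)-\nabla\sigma(a'))\mathbf{v}+\nabla\sigma(a')(\mathbf{v}-\mathbf{v}')$. The first piece uses the $\nu_\sigma$-smoothness $|\nabla\sigma(a)-\nabla\sigma(a')|\le\nu_\sigma|a-a'|$, where $|a-a'|$ is the quantity already estimated inside the proof of \eqref{equ:bound_variation_f} (omitting the outer $\alpha_\sigma$) and $\|\mathbf{v}\|_F\le C_g\|\mathbf{X}^{(K)}\|_F$ is controlled by Lemma \ref{lem:bound_GCN}; together these give the $\nu_\sigma B^{2K}\alpha_\sigma^{2K}C_g^{2K+2}C_{\mathbf{X}}^2$ term. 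The second piece uses $|\nabla\sigma(a')|\le\alpha_\sigma$ and $\|\mathbf{v}-\mathbf{v}'\|_F\le C_g\|\triangle\mathbf{X}^{(K)}\|_F$ with \eqref{equ:bound_variation_X}, giving the $B^{K-1}\alpha_\sigma^{K+1}C_g^{K+1}C_{\mathbf{X}}$ term; the inequality $\sum_{j=1}^{K}\|\triangle\mathbf{W}^{(j)}\|_2\le\|\triangle\theta\|_*$ then converts the bound into the required form.

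For \eqref{equ:bound_variation_df_dW} I would start from \eqref{equ:Gradient_W}, $\nabla_{\mathbf{W}^{(k)}}f=[g(\mathbf{L})\mathbf{X}^{(k-1)}]^{\top}(\mathbf{G}^{(k)}\odot\mathbf{R}^{(k)})$, and apply the product-difference inequality with $\mathbf{A}_1=[g(\mathbf{L})\mathbf{X}^{(k-1)}]^{\top}$, $\mathbf{A}_2=\mathbf{G}^{(k)}\odot\mathbf{R}^{(k)}$. The $\mathbf{A}_1$-perturbation is $C_g\|\triangle\mathbf{X}^{(k-1)}\|_F$ by \eqref{equ:bound_variation_X}, and combined with the magnitude of $\mathbf{A}_2$ it feeds the base terms. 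For the $\mathbf{A}_2$-perturbation I split once more, $\mathbf{G}^{(k)}\odot\mathbf{R}^{(k)}-\mathbf{G}^{(k)'}\odot\mathbf{R}^{(k)'}=(\mathbf{G}^{(k)}-\mathbf{G}^{(k)'})\odot\mathbf{R}^{(k)}+\mathbf{G}^{(k)'}\odot(\mathbf{R}^{(k)}-\mathbf{R}^{(k)'})$. The activation-derivative variation is estimated by the $\nu_\sigma$-smoothness as $\|\mathbf{R}^{(k)}-\mathbf{R}^{(k)'}\|_F\le\nu_\sigma C_g(\|\mathbf{X}^{(k-1)}\|_F\|\triangle\mathbf{W}^{(k)}\|_2+\|\triangle\mathbf{X}^{(k-1)}\|_F\|\mathbf{W}^{(k)'}\|_2)$, which is the sole source of the factor $\nu_\sigma$ entering $\rho_k$; the $j=0$ contribution of the sum in \eqref{equ:rho_k} arises precisely here at the current layer.

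The crux, and the step I expect to be the main obstacle, is the remaining factor $\|\mathbf{G}^{(k)}-\mathbf{G}^{(k)'}\|_F$ — the variation of the back-propagated gradient, which is exactly the deep-GCN-specific difficulty flagged after Theorem \ref{thm:G_gap}. Here I would invoke the recursion \eqref{equ:Gradient_X}, $\mathbf{G}^{(k)}=g(\mathbf{L})^{\top}(\mathbf{G}^{(k+1)}\odot\mathbf{R}^{(k+1)})[\mathbf{W}^{(k+1)}]^{\top}$, with boundary $\mathbf{G}^{(K)}$ given by \eqref{equ:Gradient_XK}. Applying the product-difference and Hadamard inequalities to this identity turns it into a scalar linear recurrence $g_k\le(B\alpha_\sigma C_g)\,g_{k+1}+D_{k+1}$ for $g_k:=\|\mathbf{G}^{(k)}-\mathbf{G}^{(k)'}\|_F$, where the driving term $D_{k+1}$ collects the activation-derivative variation $\|\mathbf{R}^{(k+1)}-\mathbf{R}^{(k+1)'}\|_F$ (carrying $\nu_\sigma$) and a $\|\triangle\mathbf{W}^{(k+1)}\|_2$ term, each bounded through Lemmas \ref{lem:bound_GCN}--\ref{lem:bound_variation}. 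Unrolling from level $k$ up to the boundary $K$ makes the coefficient $B\alpha_\sigma C_g$ compound geometrically, which is what produces the sum $\sum_{j=0}^{K-k}(B\alpha_\sigma C_g)^{j}$ and the prefactor $(B\alpha_\sigma C_g)^{K+k-1}C_g^2C_{\mathbf{X}}^2$ defining $\rho_k$; the $\nabla\sigma$- and $\mathbf{w}$-variations sitting in the boundary $\mathbf{G}^{(K)}$, after being propagated down and multiplied by $\alpha_\sigma\|g(\mathbf{L})\mathbf{X}^{(k-1)}\|_2$, reproduce the two base terms shared with \eqref{equ:bound_variation_df_dw}. The real work, and where errors are easiest to make, is the index bookkeeping — keeping the layer-dependent exponents aligned through the recursion so that the scattered contributions assemble exactly into the two base terms plus $\rho_k$ — rather than in any individual inequality, each of which is routine.
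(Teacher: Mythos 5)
Your proposal is correct and follows essentially the same route as the paper: the same two-term splitting for \eqref{equ:bound_variation_df_dw}, the same product/Hadamard decomposition for \eqref{equ:bound_variation_df_dW}, and the same linear recurrence with coefficient $B\alpha_\sigma C_g$ unrolled from layer $k$ to the boundary at layer $K$ to produce $\rho_k$. The only cosmetic difference is that the paper runs the recursion on $\gamma_k=\|\mathbf{G}^{(k)}\odot\mathbf{R}^{(k)}-\mathbf{G}^{(k)'}\odot\mathbf{R}^{(k)'}\|_F$ rather than on $\|\mathbf{G}^{(k)}-\mathbf{G}^{(k)'}\|_F$ directly, which changes nothing of substance.
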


{\emph{Proof}}.
  First, according to the proof of \eqref{equ:bound_variation_X} and \eqref{equ:bound_variation_f}, the following holds true for $k=1,2,\dots,K+1$:
    \begin{align}
\|\mathbf{X}^{(k-1)}\mathbf{W}^{(k)}-\mathbf{X}^{(k-1)'}\mathbf{W}^{(k)'}\|_{F}
\nonumber   \leq & B^{k-1}\alpha^{k-1}_{\sigma}C^{k-1}_{g}C_{\mathbf{X}}\|\triangle\mathbf{W}^{(k)}\|_{2}
     +B\|\triangle\mathbf{X}^{(k-1)}\|_{F}\\
\leq & B^{k-1}\alpha^{k-1}_{\sigma}C^{k-1}_{g}C_{\mathbf{X}}
  \Big(\sum_{j=1}^{k}\|\triangle\mathbf{W}^{(j)}\|_{2}\Big), \label{equ:bound_variation_X_W}
\end{align}
where $\mathbf{W}^{(K+1)}=\mathbf{w}$. 

We now prove \eqref{equ:bound_variation_df_dw}. First, applying $\mathbf{A}_{1}\mathbf{A}_{2}-\mathbf{A}'_{1}\mathbf{A}'_{2}=
       (\mathbf{A}_{1}-\mathbf{A}_{1}')\mathbf{A}_{2}
       +\mathbf{A}'_{1}(\mathbf{A}_{2}-\mathbf{A}_{2}')$, we have
       \begin{align*}
         \|\nabla_{\mathbf{w}}f(\mathbf{x}|\theta)-\nabla_{\mathbf{w}}f(\mathbf{x}|\theta')\|_{F}
      = &\Big\|\nabla\sigma\big(\boldsymbol{\delta}_{\mathbf{x}}^{\top}g(\mathbf{L})\mathbf{X}^{(K)}\mathbf{w}\big)
   [g(\mathbf{L})\mathbf{X}^{(K)}]^{\top}\boldsymbol{\delta}_{\mathbf{x}}
      -\nabla\sigma\big(\boldsymbol{\delta}_{\mathbf{x}}^{\top}g(\mathbf{L})\mathbf{X}^{(K)'}\mathbf{w}'\big)
     [g(\mathbf{L})\mathbf{X}^{(K)'}]^{\top}\boldsymbol{\delta}_{\mathbf{x}}\Big\|_{F} \\
       \leq & \Big\|\Big(\nabla\sigma\big(\boldsymbol{\delta}_{\mathbf{x}}^{\top}g(\mathbf{L})\mathbf{X}^{(K)}\mathbf{w}\big)
           -\nabla\sigma\big(\boldsymbol{\delta}_{\mathbf{x}}^{\top}g(\mathbf{L})\mathbf{X}^{(K)'}\mathbf{w}'\big)\Big)
     [g(\mathbf{L})\mathbf{X}^{(K)}]^{\top}\boldsymbol{\delta}_{\mathbf{x}}\Big\|_{F} \\
       &  +\Big\|\nabla\sigma\big(\boldsymbol{\delta}_{\mathbf{x}}^{\top}g(\mathbf{L})\mathbf{X}^{(K)'}\mathbf{w}'\big)
     [g(\mathbf{L})\triangle\mathbf{X}^{(K)}]^{\top}\boldsymbol{\delta}_{\mathbf{x}}\Big\|_{F}.
     \end{align*}
Using the $\nu_{\sigma}$-smooth property of $\sigma(\cdot)$ and applying $\|\mathbf{A}_{1}\mathbf{A}_{2}\|_{F}\leq\|\mathbf{A}_{1}\|_{2}\|\mathbf{A}_{2}\|_{F}$, we have
\begin{align*}
&\Big\|\Big(\nabla\sigma\big(\boldsymbol{\delta}_{\mathbf{x}}^{\top}g(\mathbf{L})\mathbf{X}^{(K)}\mathbf{w}\big)
           -\nabla\sigma\big(\boldsymbol{\delta}_{\mathbf{x}}^{\top}g(\mathbf{L})\mathbf{X}^{(K)'}\mathbf{w}'\big)\Big)
     [g(\mathbf{L})\mathbf{X}^{(K)}]^{\top}\boldsymbol{\delta}_{\mathbf{x}}\Big\|_{F}\\
\leq & \Big|\nabla\sigma\big(\boldsymbol{\delta}_{\mathbf{x}}^{\top}g(\mathbf{L})\mathbf{X}^{(K)}\mathbf{w}\big)
           -\nabla\sigma\big(\boldsymbol{\delta}_{\mathbf{x}}^{\top}g(\mathbf{L})\mathbf{X}^{(K)'}\mathbf{w}'\big)\Big|\cdot
     \Big\|[g(\mathbf{L})\mathbf{X}^{(K)}]^{\top}\boldsymbol{\delta}_{\mathbf{x}}\Big\|_{F}\\
 \leq & \upsilon_{\sigma}|\boldsymbol{\delta}_{\mathbf{x}}^{\top}g(\mathbf{L})\mathbf{X}^{(K)}\mathbf{w}
      -\boldsymbol{\delta}_{\mathbf{x}}^{\top}g(\mathbf{L})\mathbf{X}^{(K)'}\mathbf{w}'|\cdot
    \|\mathbf{X}^{(K)}\|_{F}\|\boldsymbol{\delta}_{\mathbf{x}}^{\top}g(\mathbf{L})\|_{2} \\
    \leq & \upsilon_{\sigma}C_{g}\|\mathbf{X}^{(K)}\mathbf{w}
      -\mathbf{X}^{(K)'}\mathbf{w}'\|_{F}\cdot
     \|\mathbf{X}^{(K)}\|_{F}\cdot C_{g},
\end{align*}
and since $|\nabla\sigma(\cdot)|\leq\alpha_{\sigma}$, $\Big\|\nabla\sigma\big(\boldsymbol{\delta}_{\mathbf{x}}^{\top}g(\mathbf{L})\mathbf{X}^{(K)'}\mathbf{w}'\big)
     [g(\mathbf{L})\triangle\mathbf{X}^{(K)}]^{\top}\boldsymbol{\delta}_{\mathbf{x}}\Big\|_{F}
     \leq\alpha_{\sigma}C_{g}\|\triangle\mathbf{X}^{(K)}\|_{F}$. Then, combining \eqref{equ:bound_X}, \eqref{equ:bound_variation_X} and \eqref{equ:bound_variation_X_W}, we have
     \begin{align*}
     &\|\nabla_{\mathbf{w}}f(\mathbf{x}|\theta)-\nabla_{\mathbf{w}}f(\mathbf{x}|\theta')\|_{F}
\leq \Big(\upsilon_{\sigma}
   B^{2K}\alpha^{2K}_{\sigma}C^{2K+2}_{g}C^{2}_{\mathbf{X}}
   +B^{K-1}\alpha^{K+1}_{\sigma}C^{K+1}_{g}C_{\mathbf{X}}\Big)\|\triangle\theta\|_{*},
   \end{align*}
   which completes the proof of \eqref{equ:bound_variation_df_dw}.
   
   Next, we turn to prove \eqref{equ:bound_variation_df_dW}. First, for $k=1,2,\dots,K$,
\begin{align}
      \nonumber  &\big\|\nabla_{\mathbf{W}^{(k)}}f(\mathbf{x}|\theta)-\nabla_{\mathbf{W}^{(k)}}f(\mathbf{x}|\theta')\big\|_{F}\\
    \nonumber=&\big\|\big[g(\mathbf{L})\mathbf{X}^{(k-1)}\big]^{\top}\Big(\frac{\partial
f(\mathbf{x}|\theta)}{\partial\mathbf{X}^{(k)}}\odot\mathbf{R}^{(k)}\Big)
     -\big[g(\mathbf{L})\mathbf{X}^{(k-1)'}\big]^{\top}\Big(\frac{\partial
f(\mathbf{x}|\theta')}{\partial\mathbf{X}^{(k)}}\odot\mathbf{R}^{(k)'}\Big)\big\|_{F} \\
 \nonumber\leq&\big\|g(\mathbf{L})\triangle\mathbf{X}^{(k-1)}\big\|_{F} \big\|\frac{\partial
f(\mathbf{x}|\theta)}{\partial\mathbf{X}^{(k)}}\odot\mathbf{R}^{(k)}\big\|_{F}
    +\big\|g(\mathbf{L})\mathbf{X}^{(k-1)'}\big\|_{F} \big\|\frac{\partial
f(\mathbf{x}|\theta)}{\partial\mathbf{X}^{(k)}}\odot\mathbf{R}^{(k)}-\frac{\partial
f(\mathbf{x}|\theta')}{\partial\mathbf{X}^{(k)}}\odot\mathbf{R}^{(k)'}\big\|_{F} \\
\nonumber\leq&C_{g}\|\triangle\mathbf{X}^{(k-1)}\|_{F}\cdot\alpha_{\sigma}\big\|\frac{\partial
f(\mathbf{x}|\theta)}{\partial\mathbf{X}^{(k)}}\big\|_{F}
     +C_{g}\|\mathbf{X}^{(k-1)'}\|_{F}\Big\|\frac{\partial
f(\mathbf{x}|\theta)}{\partial\mathbf{X}^{(k)}}\odot\mathbf{R}^{(k)}-\frac{\partial
f(\mathbf{x}|\theta')}{\partial\mathbf{X}^{(k)}}\odot\mathbf{R}^{(k)'}\big\|_{F}.
\end{align}

Let
\begin{equation}\label{equ:define_gammak}
\gamma_{k}:=\big\|\frac{\partial
f(\mathbf{x}|\theta)}{\partial\mathbf{X}^{(k)}}\odot\mathbf{R}^{(k)}-\frac{\partial
f(\mathbf{x}|\theta')}{\partial\mathbf{X}^{(k)}}\odot\mathbf{R}^{(k)'}\big\|_{F}.
\end{equation}

Then, combining \eqref{equ:bound_X}, \eqref{equ:bound_df_dX}  and \eqref{equ:bound_variation_X}, we have
\begin{equation}
\big\|\nabla_{\mathbf{W}^{(k)}}f(\mathbf{x}|\theta)-\nabla_{\mathbf{W}^{(k)}}f(\mathbf{x}|\theta')\big\|_{F}
 \leq B^{K-1}\alpha_{\sigma}^{K+1}C_{g}^{K+1}C_{\mathbf{X}}\Big(\sum_{j=1}^{k-1}\|\triangle\mathbf{W}^{(j)}\|_{2}\Big)+
 B^{k-1}\alpha_{\sigma}^{k-1}C_{g}^{k}C_{\mathbf{X}}\cdot\gamma_{k}, \label{equ:bound_of_var_df_dWk}
 \end{equation}
Next, we need to bound $\gamma_{k}$.
\begin{align}
\nonumber \gamma_{k}\leq&\Big\|\frac{\partial
f(\mathbf{x}|\theta)}{\partial\mathbf{X}^{(k)}}\odot\big(\mathbf{R}^{(k)}-\mathbf{R}^{(k)'}\big)\Big\|_{F}
   +\Big\|\big(\frac{\partial
f(\mathbf{x}|\theta)}{\partial\mathbf{X}^{(k)}}-\frac{\partial
f(\mathbf{x}|\theta')}{\partial\mathbf{X}^{(k)}}\big)\odot\mathbf{R}^{(k)'}\Big\|_{F} \\
\nonumber \leq&h_{k}+\alpha_{\sigma}\Big\|\frac{\partial
f(\mathbf{x}|\theta)}{\partial\mathbf{X}^{(k)}}-\frac{\partial
f(\mathbf{x}|\theta')}{\partial\mathbf{X}^{(k)}}\Big\|_{F} \\
\nonumber \leq&h_{k}+\alpha_{\sigma}\Big\|g(\mathbf{L})^{\top}\Big(\frac{\partial
f(\mathbf{x}|\theta)}{\partial\mathbf{X}^{(k+1)}}\odot\mathbf{R}^{(k+1)}\Big)\big[\mathbf{W}^{(k+1)}\big]^{\top}
     -g(\mathbf{L})^{\top}\Big(\frac{\partial
f(\mathbf{x}|\theta')}{\partial\mathbf{X}^{(k)}}\odot\mathbf{R}^{(k+1)'}\Big)\big[\mathbf{W}^{(k+1)'}\big]^{\top}\Big\|_{F}\\
\nonumber \leq&h_{k}+\alpha_{\sigma}\|g(\mathbf{L})\|_{2}\Big\|\frac{\partial
f(\mathbf{x}|\theta)}{\partial\mathbf{X}^{(k+1)}}\odot\mathbf{R}^{(k+1)}\Big\|_{F}\|\triangle\mathbf{W}^{(k+1)}\|_{2}
     +\alpha_{\sigma}\|g(\mathbf{L})\|_{2}\|\mathbf{W}^{(k+1)'}\|_{2}\gamma_{k+1}\\
\nonumber   \leq &h_{k}+\alpha^{2}_{\sigma}C_{g}(B\alpha_{\sigma}C_{g})^{K-k}\|\triangle\mathbf{W}^{(k+1)}\|_{2}
+B\alpha_{\sigma}C_{g}\gamma_{k+1},
\end{align}
where $h_{k}:=\big\|\frac{\partial f(\mathbf{x}|\theta)}{\partial\mathbf{X}^{(k)}}\odot\big(\mathbf{R}^{(k)}-\mathbf{R}^{(k)'}\big)\big\|_{F}$.
By \eqref{equ:bound_variation_X_W},
\begin{align}\label{equ:bound_var_Rk}
\nonumber   \|\mathbf{R}^{(k)}-\mathbf{R}^{(k)'}\|_{F}
  =&\big\|\nabla\sigma\big(g(\mathbf{L})\mathbf{X}^{(k-1)}\mathbf{W}^{(k)}\big)
-\nabla\sigma\big(g(\mathbf{L})\mathbf{X}^{(k-1)'}\mathbf{W}^{(k)'}\big)\big\|_{F}\\
\nonumber   \leq&\nu_{\sigma}C_{g}\big\|\mathbf{X}^{(k-1)}\mathbf{W}^{(k)}-\mathbf{X}^{(k-1)'}\mathbf{W}^{(k)'}\big\|_{F}\\
   \leq &\nu_{\sigma}B^{k-1}\alpha^{k-1}_{\sigma}C^{k}_{g}C_{\mathbf{X}}\Big(\sum_{j=1}^{k}\|\triangle\mathbf{W}^{(j)}\|_{2}\Big).
\end{align}
Combining \eqref{equ:bound_df_dX}, we have
\begin{align} \label{equ:bound_hk}
\nonumber  h_{k}&=\big\|\frac{\partial f(\mathbf{x}|\theta)}{\partial\mathbf{X}^{(k)}}\odot\big(\mathbf{R}^{(k)}-\mathbf{R}^{(k)'}\big)\big\|_{F}
       \leq \big\|\frac{\partial f(\mathbf{x}|\theta)}{\partial\mathbf{X}^{(k)}}\big\|_{F}\cdot\|\mathbf{R}^{(k)}-\mathbf{R}^{(k)'}\|_{F}\\
       &\leq \nu_{\sigma}B^{K}\alpha^{K}_{\sigma}C^{K+1}_{g}C_{\mathbf{X}}\Big(\sum_{j=1}^{k}\|\triangle\mathbf{W}^{(j)}\|_{2}\Big).
\end{align}
Let $h_{\max}=\nu_{\sigma}B^{K}\alpha^{K}_{\sigma}C^{K+1}_{g}C_{\mathbf{X}}\|\triangle\theta\|_{*}$. Then, it is easy to see that
\begin{equation}\label{equ:bound_hk_by_hmax}
h_{k}\leq h_{\max}~\textrm{holds for all}~k=1,2,\dots,K.
\end{equation}
Therefore, $$\gamma_{k}\leq h_{\max}+\alpha^{2}_{\sigma}C_{g}(B\alpha_{\sigma}C_{g})^{K-k}\|\triangle\mathbf{W}^{(k+1)}\|_{2}
+B\alpha_{\sigma}C_{g}\cdot\gamma_{k+1}.$$
Furthermore,
since
\begin{align*}
   &\|\frac{\partial
f(\mathbf{x}|\theta)}{\partial\mathbf{X}^{(K)}}-\frac{\partial
f(\mathbf{x}|\theta')}{\partial\mathbf{X}^{(K)}}\|_{F} \\
   =&\|\nabla\sigma\big(\boldsymbol{\delta}_{\mathbf{x}}^{\top}g(\mathbf{L})\mathbf{X}^{(K)}\mathbf{w}\big)
\big[\boldsymbol{\delta}^{\top}_{\mathbf{x}}g(\mathbf{L})\big]^{\top}\mathbf{w}^{\top}
   -\nabla\sigma\big(\boldsymbol{\delta}_{\mathbf{x}}^{\top}g(\mathbf{L})\mathbf{X}^{(K)'}\mathbf{w}'\big)
\big[\boldsymbol{\delta}^{\top}_{\mathbf{x}}g(\mathbf{L})\big]^{\top}\mathbf{w}'^{\top}\|_{F} \\
   \leq &
BC_{g}\|\nabla\sigma\big(\boldsymbol{\delta}_{\mathbf{x}}^{\top}g(\mathbf{L})\mathbf{X}^{(K)}\mathbf{w} \big)-\nabla\sigma\big(\boldsymbol{\delta}_{\mathbf{x}}^{\top}g(\mathbf{L})\mathbf{X}^{(K)'}\mathbf{w}'\big)\|_{F}
   +\|\nabla\sigma\big(\boldsymbol{\delta}_{\mathbf{x}}^{\top}g(\mathbf{L})\mathbf{X}^{(K)'}\mathbf{w}'\big)
\big[\boldsymbol{\delta}^{\top}_{\mathbf{x}}g(\mathbf{L})\big]^{\top}\triangle\mathbf{w}^{\top}\|_{F}
\\
\leq &  \alpha_{\sigma}C_{g}\|\triangle\mathbf{w}\|_{F}
+\nu_{\sigma}BC^{2}_{g}\big\|\mathbf{X}^{(K)}\mathbf{w}
-\mathbf{X}^{(K)'}\mathbf{w}'\big\|_{F} \\
    \leq&  \alpha_{\sigma}C_{g}\|\triangle\mathbf{w}\|_{2}
+\nu_{\sigma}B^{K+1}\alpha^{K}_{\sigma}C^{K+2}_{g}C_{\mathbf{X}}\|\triangle\theta\|_{*},
\end{align*}
we have
\begin{align*}
    \gamma_{K} =& \|\frac{\partial
f(\mathbf{x}|\theta)}{\partial\mathbf{X}^{(K)}}\odot\mathbf{R}^{(K)}-\frac{\partial
f(\mathbf{x}|\theta')}{\partial\mathbf{X}^{(K)}}\odot\mathbf{R}^{(K)'}\|_{F} \\
    \leq &  \|\frac{\partial
f(\mathbf{x}|\theta)}{\partial\mathbf{X}^{(K)}}\odot(\mathbf{R}^{(K)}-\mathbf{R}^{(K)'})\|_{F}+
    \|(\frac{\partial
f(\mathbf{x}|\theta)}{\partial\mathbf{X}^{(K)}}-\frac{\partial
f(\mathbf{x}|\theta')}{\partial\mathbf{X}^{(K)}})\odot\mathbf{R}^{(K)'}\|_{F}\\
    \leq &  h_{K}+\alpha_{\sigma}\|\frac{\partial
f(\mathbf{x}|\theta)}{\partial\mathbf{X}^{(K)}}-\frac{\partial
f(\mathbf{x}|\theta')}{\partial\mathbf{X}^{(K)}}\|_{F} \\
    \leq&  h_{\max}+\alpha^{2}_{\sigma}C_{g}\|\triangle\mathbf{w}\|_{2}
+\nu_{\sigma}B^{K+1}\alpha^{K+1}_{\sigma}C^{K+2}_{g}C_{\mathbf{X}}\|\triangle\theta\|_{*}.
\end{align*}
Finally, based on the above recursive formula of $\gamma_{k}$, we have
\begin{align}
\nonumber  \gamma_{k} \leq & h_{\max}\Big(\sum_{j=0}^{K-k}(B\alpha_{\sigma}C_{g})^{j}\Big)
       +\alpha^{2}_{\sigma}C_{g}(B\alpha_{\sigma}C_{g})^{K-k}\Big(\sum_{j=k+1}^{K+1}\|\triangle\mathbf{W}^{(j)}\|_{2}\Big)\\
\nonumber  & +\nu_{\sigma}B^{K+1}\alpha^{K+1}_{\sigma}C^{K+2}_{g}C_{\mathbf{X}}(B\alpha_{\sigma}C_{g})^{K-k}\|\triangle\theta\|_{*}\\
\nonumber  \leq & h_{\max}\Big(\sum_{j=0}^{K-k}(B\alpha_{\sigma}C_{g})^{j}\Big)
     +\alpha^{2}_{\sigma}C_{g}(B\alpha_{\sigma}C_{g})^{K-k}\Big(\sum_{j=k+1}^{K+1}\|\triangle\mathbf{W}^{(j)}\|_{2}\Big)\\
   &+\nu_{\sigma}B^{2K+1-k}\alpha^{2K+1-k}_{\sigma}C^{2K+2-k}_{g}C_{\mathbf{X}}\|\triangle\theta\|_{*}, \label{equ:bound_gamma_k}
\end{align}
where $\triangle\mathbf{W}^{(K+1)}=\triangle\mathbf{w}$.
Finally, substituting \eqref{equ:bound_gamma_k} into \eqref{equ:bound_of_var_df_dWk},
\begin{align*}
\|\nabla_{\mathbf{W}^{(k)}}f(\mathbf{x}|\theta)-\nabla_{\mathbf{W}^{(k)}}f(\mathbf{x}|\theta')\|_{F}
   \leq&B^{K-1}\alpha_{\sigma}^{K+1}C_{g}^{K+1}C_{\mathbf{X}}\Big(\sum_{j=1}^{k-1}\|\triangle\mathbf{W}^{(j)}\|_{2}\Big)
 +B^{k-1}\alpha_{\sigma}^{k-1}C_{g}^{k}C_{\mathbf{X}}\cdot\gamma_{k}\\
  \nonumber\leq&\Big(\nu_{\sigma}B^{2K}\alpha^{2K}_{\sigma}C^{2K+2}_{g}C^{2}_{\mathbf{X}}
  +B^{K-1}\alpha_{\sigma}^{K+1}C_{g}^{K+1}C_{\mathbf{X}}\Big)\|\triangle\theta\|_{*}\\
  &   +\nu_{\sigma}B^{K+k-1}\alpha_{\sigma}^{K+k-1}C_{g}^{K+k+1}C^{2}_{\mathbf{X}}
\Big(\sum_{j=0}^{K-k}(B\alpha_{\sigma}C_{g})^{j}\Big)\|\triangle\theta\|_{*}\\
  \leq& (\kappa_{1}+\rho_{k})\|\triangle\theta\|_{*},
      \end{align*}
      which completes the proof of \eqref{equ:bound_variation_df_dW}.

      Up to now, the proof of Lemma \ref{lem:bound_variation_df_dW} is complete. Then, we prepare to prove Lemma \textcolor{red}{3} and Lemma \textcolor{red}{4}.

\subsection*{Proof of Lemma \ref{lem:Same_sample}.}
Now, we are ready to prove Eq. (\ref{equ:bound_variation_dl_dw_same_sample}). Firstly, note that
\begin{align*}
      & \|\nabla_{\mathbf{w}}\ell(f(\mathbf{x}_{t}|\theta_{t-1}),y_{t})-\nabla_{\mathbf{w}}\ell(f(\mathbf{x}_{t}|\theta'_{t-1}),y_{t})\|_{F}
     =\big\|\frac{\partial\ell(\hat{y},y_{t})}{\partial\hat{y}}\nabla_{\mathbf{w}}f(\mathbf{x}_{t}|\theta_{t-1})
-\frac{\partial\ell(\hat{y}',y_{t})}{\partial\hat{y}}\nabla_{\mathbf{w}}f(\mathbf{x}_{t}|\theta'_{t-1})\big\|_{F}\\
   \leq&\Big\|\Big(\frac{\partial\ell(\hat{y},y_{t})}{\partial\hat{y}}-\frac{\partial\ell(\hat{y}',y_{t})}{\partial\hat{y}}\Big)
\nabla_{\mathbf{w}}f(\mathbf{x}_{t}|\theta_{t-1})
      +\frac{\partial\ell(\hat{y}',y_{t})}{\partial\hat{y}}\Big(\nabla_{\mathbf{w}}f(\mathbf{x}_{t}|\theta_{t-1})-
 \nabla_{\mathbf{w}}f(\mathbf{x}_{t}|\theta'_{t-1})\Big)\Big\|_{F}\\
   \leq&\big|\frac{\partial\ell(\hat{y},y_{t})}{\partial\hat{y}}-\frac{\partial\ell(\hat{y}',y_{t})}{\partial\hat{y}}\big|\cdot
 \|\nabla_{\mathbf{w}}f(\mathbf{x}_{t}|\theta_{t-1})\|_{F}
      +\big|\frac{\partial\ell(\hat{y}',y_{t})}{\partial\hat{y}}\big|\cdot\|\nabla_{\mathbf{w}}f(\mathbf{x}_{t}|\theta_{t-1})-
 \nabla_{\mathbf{w}}f(\mathbf{x}|\theta'_{t-1})\|_{F}\\
    \leq&\upsilon_{\ell}\big|f(\mathbf{x}_{t}|\theta_{t-1})-f(\mathbf{x}|\theta'_{t-1})\big|\cdot
 \|\nabla_{\mathbf{w}}f(\mathbf{x}_{t}|\theta_{t-1})\|_{F}
+\alpha_{\ell}\|\nabla_{\mathbf{w}}f(\mathbf{x}_{t}|\theta_{t-1})-
 \nabla_{\mathbf{w}}f(\mathbf{x}_{t}|\theta'_{t-1})\|_{F},
\end{align*}
where $\hat{y}=f(\mathbf{x}_{t}|\theta_{t-1})$ and $\hat{y}'=f(\mathbf{x}_{t}|\theta'_{t-1})$. Then, according to \eqref{equ:bound_df_dW}, \eqref{equ:bound_variation_f} and \eqref{equ:bound_variation_df_dw}, we have
\begin{align*}
       & \|\nabla_{\mathbf{w}}\ell(f(\mathbf{x}_{t}|\theta_{t-1}),y_{t})-\nabla_{\mathbf{w}}\ell(f(\mathbf{x}_{t}|\theta'_{t-1}),y_{t})\|_{F}\\
    \leq &\bigg\{\upsilon_{\ell}B^{2K}\alpha_{\sigma}^{2K+2}C^{2K+2}_{g}C^{2}_{\mathbf{X}}
+\alpha_{\ell}\Big(\upsilon_{\sigma}B^{2K}\alpha^{2K}_{\sigma}C^{2K+2}_{g}C^{2}_{\mathbf{X}}
    +B^{K-1}\alpha^{K+1}_{\sigma}C^{K+1}_{g}C_{\mathbf{X}}\Big)\bigg\}\|\triangle\theta_{t-1}\|_{*}.
\end{align*}
This proves Eq. (\ref{equ:bound_variation_dl_dw_same_sample}).

Similarly, for $k=1,2,\dots,K$,
\begin{align*}
    &\|\nabla_{\mathbf{W}^{(k)}}\ell(f(\mathbf{x}_{t}|\theta_{t-1}),y_{t})-\nabla_{\mathbf{W}^{(k)}}\ell(f(\mathbf{x}_{t}|\theta'_{t-1}),y_{t})\|_{F}\\
   \leq&\upsilon_{\ell}\big|f(\mathbf{x}_{t}|\theta_{t-1})-f(\mathbf{x}_{t}|\theta'_{t-1})\big|\cdot
\|\nabla_{\mathbf{W}^{(k)}}f(\mathbf{x}_{t}|\theta_{t-1})\|_{F}
+\alpha_{\ell}\|\nabla_{\mathbf{W}^{(k)}}f(\mathbf{x}_{t}|\theta_{t-1})-\nabla_{\mathbf{W}^{(k)}}f(\mathbf{x}_{t}|\theta'_{t-1})\|_{F}.
\end{align*}
Then, according to \eqref{equ:bound_df_dW}, \eqref{equ:bound_variation_f} and \eqref{equ:bound_variation_df_dW},
\begin{align*}
   &\|\nabla_{\mathbf{W}^{(k)}}\ell(f(\mathbf{x}_{t}|\theta_{t-1}),y_{t})-\nabla_{\mathbf{W}^{(k)}}\ell(f(\mathbf{x}_{t}|\theta'_{t-1}),y_{t})\|_{F}\\
   \leq &\bigg\{\upsilon_{\ell}B^{2K}\alpha_{\sigma}^{2K+2}C^{2K+2}_{g}C^{2}_{\mathbf{X}}+
\alpha_{\ell}\Big\{\Big(\nu_{\sigma}B^{2K}\alpha^{2K}_{\sigma}C^{2K+2}_{g}C^{2}_{\mathbf{X}}\\
   &+B^{K-1}\alpha_{\sigma}^{K+1}C_{g}^{K+1}C_{\mathbf{X}}\Big)+ \nu_{\sigma}B^{K+k-1}\alpha_{\sigma}^{K+k-1}C_{g}^{K+k+1}
    C^{2}_{\mathbf{X}}\Big(\sum_{j=0}^{K-k}(B\alpha_{\sigma}C_{g})^{j}\Big)\Big\}\bigg\}\|\triangle\theta_{t-1}\|_{*},
\end{align*}
which competes the proof of Eq. (\ref{equ:bound_variation_dl_dW_same_sample}).

\subsection*{Proof of Lemma \ref{lem:diff_sample}.}
Since $|\frac{\partial\ell(\hat{y},y)}{\partial\hat{y}}|\leq\alpha_{\ell}$ for any $\hat{y}$ and $y$, we first have that for $k=1,2,\dots,K+1$,
\begin{align*}
  \|\nabla_{\mathbf{W}^{(k)}}\ell(f(\mathbf{x}_{t}|\theta_{t-1}),y_{t})-\nabla_{\mathbf{W}^{(k)}}\ell(f(\mathbf{x}'_{t}|\theta'_{t-1}),y'_{t})\|_{F}
 =&\Big\|\frac{\partial\ell(\hat{y},y_{t})}{\partial\hat{y}}\nabla_{\mathbf{W}^{(k)}}f(\mathbf{x}_{t}|\theta_{t-1})
 -\frac{\partial\ell(\hat{y}',y'_{t})}{\partial\hat{y}}\nabla_{\mathbf{W}^{(k)}}f(\mathbf{x}'_{t}|\theta'_{t-1})\Big\|_{F}\\
\leq&\alpha_{\ell}\Big(\|\nabla_{\mathbf{W}^{(k)}}f(\mathbf{x}_{t}|\theta_{t-1})\|_{F}
 +\|\nabla_{\mathbf{W}^{(k)}}f(\mathbf{x}'_{t}|\theta'_{t-1})\|_{F}\Big),
\end{align*}
where $\hat{y}=f(\mathbf{x}_{t}|\theta_{t-1})$ and $\hat{y}'=f(\mathbf{x}'_{t}|\theta'_{t-1})$ and $\mathbf{W}^{(K+1)}=\mathbf{w}$.
Finally, according to \eqref{equ:bound_df_dW}, 
\begin{align*}
  \|\nabla_{\mathbf{W}^{(k)}}\ell(f(\mathbf{x}_{t}|\theta_{t-1}),y_{t})-\nabla_{\mathbf{W}^{(k)}}\ell(f(\mathbf{x}'_{t}|\theta'_{t-1}),y'_{t})\|_{F}
\leq&\alpha_{\ell}\Big(\|\nabla_{\mathbf{W}^{(k)}}f(\mathbf{x}|\theta_{t-1})\|_{F}
 +\|\nabla_{\mathbf{W}^{(k)}}f(\mathbf{x}|\theta'_{t-1})\|_{F}\Big)\\
 \leq & 2\alpha_{\ell}B^{K}\alpha_{\sigma}^{K+1}C^{K+1}_{g}C_{\mathbf{X}},
 \end{align*}
 holds for $k=1,2,\dots,K+1$.

\section*{APPENDIX D:Proof of Theorem \ref{thm:bound_variation_theta_T}} \label{proof:thm:bound_variation_theta_T}
Based on Lemma \ref{lem:Same_sample} and Lemma \ref{lem:diff_sample}, we detail the proof of Theorem \ref{thm:bound_variation_theta_T} as follows.

Note that $(\mathbf{x}_{t},y_{t})=(\mathbf{x}'_{t},y'_{t})$ with probability $1-\frac{1}{m}$ and $(\mathbf{x}_{t},y_{t})\neq(\mathbf{x}'_{t},y'_{t})$ with probability $\frac{1}{m}$. By considering Eq. (\ref{equ:update_W_and_w}) (in Section \ref{sec:3.2})
 and incorporating the probability of the two scenarios presented in Lemma \ref{lem:Same_sample} and Lemma \ref{lem:diff_sample}, using $\mathcal{F}$ and $\mathcal{F}^{'}$ to denote $f(\mathbf{x}_{t}|\theta_{t-1})$ and $f(\mathbf{x}_{t}|\theta'_{t-1})$, respectively, we have:
 \begin{align*}
      \mathbb{E}_{\mathcal{A}}\big[\|\triangle\mathbf{w}_{t}\|_{2}\big]
    =& (1-\frac{1}{m})\mathbb{E}_{\mathcal{A}}\Big[\|\triangle\mathbf{w}_{t-1}-\eta\big(\nabla_{\mathbf{w}}\ell(\mathcal{F},y_{t})
-\nabla_{\mathbf{w}}\ell(\mathcal{F}^{'},y_{t})\big)\|_{2}\Big] \\
     & +\frac{1}{m}\mathbb{E}_{\mathcal{A}}\Big[\|\triangle\mathbf{w}_{t-1} -\eta\big(\nabla_{\mathbf{w}}\ell(\mathcal{F},y_{t})
-\nabla_{\mathbf{w}}\ell(\mathcal{F}^{'},y'_{t})\big)\|_{2}\Big]\\
     \leq&(1-\frac{1}{m})\mathbb{E}_{\mathcal{A}}\Big[\|\triangle\mathbf{w}_{t-1}\|_{2} +\eta\|\nabla_{\mathbf{w}}\ell(\mathcal{F},y_{t})
-\nabla_{\mathbf{w}}\ell(\mathcal{F}^{'},y_{t})\|_{2}\Big]\\
     & +\frac{1}{m}\mathbb{E}_{\mathcal{A}}\Big[\|\triangle\mathbf{w}_{t-1}\|_{2} +\eta\|\nabla_{\mathbf{w}}\ell(\mathcal{F},y_{t})
-\nabla_{\mathbf{w}}\ell(\mathcal{F}^{'},y'_{t})\|_{2}\Big]\\
     \leq&(1-\frac{1}{m})\mathbb{E}_{\mathcal{A}}\Big[\|\triangle\mathbf{w}_{t-1}\|_{2}+\eta\|\nabla_{\mathbf{w}}\ell(\mathcal{F},y_{t})
-\nabla_{\mathbf{w}}\ell(\mathcal{F}^{'},y_{t})\|_{F}\Big]\\
    & +\frac{1}{m}\mathbb{E}_{\mathcal{A}}\big[\|\triangle\mathbf{w}_{t-1}\|_{2}+\eta\|\nabla_{\mathbf{w}}\ell(\mathcal{F},y_{t})
-\nabla_{\mathbf{w}}\ell(\mathcal{F}^{'},y'_{t})\|_{F}\big].
\end{align*}

Based on Lemma \ref{lem:Same_sample} and Lemma \ref{lem:diff_sample},
\begin{align*}
  \mathbb{E}_{\mathcal{A}}\big[\|\triangle\mathbf{w}_{t}\|_{2}\big]\leq&\mathbb{E}_{\mathcal{A}}\big[\|\triangle\mathbf{w}_{t-1}\|_{2}\big]
  +\eta\kappa_{1}\mathbb{E}_{\mathcal{A}}\big[\|\triangle\theta_{t-1}\|_{*}\big]
  +\frac{2\eta\alpha_{\ell}B^{K}\alpha_{\sigma}^{K+1}C^{K+1}_{g}C_{\mathbf{X}}}{m}.
 \end{align*}
 Similarly, for $k=1,2,\dots,K$,
\begin{align*}
     \mathbb{E}_{\mathcal{A}}\big[\|\triangle\mathbf{W}^{(k)}_{t}\|_{2}\big]
  \leq&\mathbb{E}_{\mathcal{A}}\big[\|\triangle\mathbf{W}^{(k)}_{t-1}\|_{2}\big]
+\eta(\kappa_{1}+\rho_{k})\mathbb{E}_{\mathcal{A}}\big[\|\triangle\theta_{t-1}\|_{*}\big]
  +\frac{2\eta\alpha_{\ell}B^{K}\alpha_{\sigma}^{K+1}C^{K+1}_{g}C_{\mathbf{X}}}{m}.
\end{align*}
 Then,
 \begin{align*}
    \mathbb{E}_{\mathcal{A}}\big[\|\triangle\theta_{t}\|_{*}\big]
    = &\mathbb{E}_{\mathcal{A}}\big[\|\triangle\mathbf{w}_{t}\|_{2}\big]+\sum_{k=1}^{K}\mathbb{E}_{\mathcal{A}}\big[\|\triangle\mathbf{W}^{(k)}_{t}\|_{2}\big] \\
   \leq & \mathbb{E}_{\mathcal{A}}\big[\|\triangle\mathbf{w}_{t-1}\|_{2}\big]
+\eta\kappa_{1}\mathbb{E}_{\mathcal{A}}\big[\|\triangle\theta_{t-1}\|_{*}\big]
  +\frac{2\eta\alpha_{\ell}B^{K}\alpha_{\sigma}^{K+1}C^{K+1}_{g}C_{\mathbf{X}}}{m}\\
    & +\sum_{k}^{K}\mathbb{E}_{\mathcal{A}}\big[\|\triangle\mathbf{W}^{(k)}_{t-1}\|_{2}\big]
+\eta(\kappa_{1}+\rho_{k})\mathbb{E}_{\mathcal{A}}\big[\|\triangle\theta_{t-1}\|_{*}\big]
    +\frac{2\eta\alpha_{\ell}B^{K}\alpha_{\sigma}^{K+1}C^{K+1}_{g}C_{\mathbf{X}}}{m}\\
  =& \big(1+(K+1)\eta\kappa_{1}+\eta\kappa_{2}\big)\mathbb{E}_{\mathcal{A}}\big[\|\triangle\theta_{t-1}\|_{*}\big]
  +\frac{2(K+1)\eta\alpha_{\ell}B^{K}\alpha_{\sigma}^{K+1}C^{K+1}_{g}C_{\mathbf{X}}}{m}.
 \end{align*}
  where $\kappa_{2}=\sum_{k=1}^{K}\rho_{k}$. By \eqref{equ:rho_k}, we have $\kappa_{2}=\nu_{\sigma}\big(B\alpha_{\sigma}C_{g}\big)^{K}C_{g}^{2}C^{2}_{\mathbf{X}}
\Big(\sum_{j=0}^{K-1}(j+1)(B\alpha_{\sigma}C_{g})^{j}\Big)$, as defined in (\textcolor{red}{8})).
Finally, since $\|\triangle\theta_{0}\|_{*}=\|\theta_{0}-\theta'_{0}\|_{*}=0$
$$\mathbb{E}_{\mathcal{A}}\big[\|\triangle\theta_{T}\|_{*}\big]
\leq \frac{c}{m}
  \sum_{t=1}^{T}\Big(1+(K+1)\eta\kappa_{1}+\eta\kappa_{2}\Big)^{t-1}.
$$
This completes the proof of Theorem \ref{thm:bound_variation_theta_T}.

\section*{\textcolor{black}{APPENDIX E: Proof for Section \ref{sec:61}}} \label{proof:section6_1}
\textcolor{black}{
Recall to the GCNII,
$$\left\{
  \begin{array}{cc}
    \mathbf{X}^{(k)}=\sigma\Big(\big((1-a_{k})g(\mathbf{L})\mathbf{X}^{(k-1)}
                  +a_{k}\mathbf{X}^{(0)}\big)
                  \big((1-b_{k})\mathbf{I}_{d}+b_{k}\mathbf{W}^{(k)}\big)\Big), & k=1,2,\dots,K; \\
   f(\mathbf{x} | \theta) = \sigma\Big( \boldsymbol{\delta}_\mathbf{x}^\top
                  \big( (1 - a_{K+1}) g(\mathbf{L}) \mathbf{X}^{(K)} + a_{K+1} \mathbf{X}^{(0)} \big) \mathbf{w} \Big)  &  \\
  \end{array}
\right.
$$
}
\newline
\textcolor{black}{
{\bf Proof of Eq. (\ref{equ:GCNII_iterative_deltaXk}) and Eq. (\ref{equ:GCNII_var_boundof_deltaXk}):}}\\ 

\textcolor{black}{
We first bound the output of each layer, i.e., bound $\|\mathbf{X}^{(k)}\|_{F}$. Applying $\|\sigma(\mathbf{Z})\|_{F}\leq\alpha_{\sigma}\|\mathbf{Z}\|_{F}$ holds for any matrix $\mathbf{Z}$ and  $\|\mathbf{A}_{1}\mathbf{A}_{2}\|_{F}\leq\|\mathbf{A}_{1}\|_{F}\|\mathbf{A}_{2}\|_{2}$, we have
\begin{align*}
\|\mathbf{X}^{(k)}\|_{F}=&\sigma\Big(\big((1-a_{k})g(\mathbf{L})\mathbf{X}^{(k-1)}
                  +a_{k}\mathbf{X}^{(0)}\big)
                  \big((1-b_{k})\mathbf{I}_{d}+b_{k}\mathbf{W}^{(k)}\big)\Big)\\
\leq &\alpha_{\sigma}\Big\|\big((1-a_{k})g(\mathbf{L})\mathbf{X}^{(k-1)}
                  +a_{k}\mathbf{X}^{(0)}\big)
                  \big((1-b_{k})\mathbf{I}_{d}+b_{k}\mathbf{W}^{(k)}\big)\Big\|_{F}\\
\leq & \alpha_{\sigma}\big\|(1-a_{k})g(\mathbf{L})\mathbf{X}^{(k-1)}
                  +a_{k}\mathbf{X}^{(0)}\big\|_{F}\cdot\big\|(1-b_{k})\mathbf{I}_{d}+b_{k}\mathbf{W}^{(k)}\big\|_{2}.
\end{align*}
Furthermore, since $C_{\mathbf{X}}=\|\mathbf{X}\|_{F}=\|\mathbf{X}^{(0)}\|_{F}$, $C_{g}=\|g(\mathbf{L})\|_{2}$, and $\|\mathbf{A}_{1}\mathbf{A}_{2}\|_{F}\leq\|\mathbf{A}_{1}\|_{2}\|\mathbf{A}_{2}\|_{F}$,
\begin{align*}
\big\|(1-a_{k})g(\mathbf{L})\mathbf{X}^{(k-1)}
                  +a_{k}\mathbf{X}^{(0)}\big\|_{F}&\leq \big\|(1-a_{k})g(\mathbf{L})\mathbf{X}^{(k-1)}\big\|_{F}
                  +\big\|a_{k}\mathbf{X}^{(0)}\big\|_{F}\\
                  &\leq(1-a_{k})C_g\|\mathbf{X}^{(k-1)}\|_{F}+a_{k}C_{g},
\end{align*}
and since $\|\mathbf{W}^{(k)}\|_{2}\leq B$, $\big\|(1-b_{k})\mathbf{I}_{d}+b_{k}\mathbf{W}^{(k)}\big\|_{2}\leq 1-b_{k}+b_{k}B$. Therefore,
\begin{align*}
\|\mathbf{X}^{(k)}\|_{F}&\leq \alpha_{\sigma}\big((1-a_{k})C_g\|\mathbf{X}^{(k-1)}\|_{F}+a_{k}C_{g}\big)(1-b_{k}+b_{k}B)\\
                        &=(1-a_{k})(1-b_{k}+b_{k}B)\alpha_{\sigma}C_g\|\mathbf{X}^{(k-1)}\|_{F}+(1-b_{k}+b_{k}B)a_{k}\alpha_{\sigma}C_{g}.
\end{align*}
Note that $\|\mathbf{X}^{(0)}\|_{F}=C_{\mathbf{X}}$, we thus have that for $k=1,2,\dots,K$,
\begin{equation}\label{equ:GCNII_bound_Xk}
\|\mathbf{X}^{(k)}\|_{F}\leq\Big(\prod_{i=1}^{k}(1-a_{i})(1-b_{i}+b_{i}B)\alpha_{\sigma}C_g\Big)C_{\mathbf{X}}
+\sum_{j=1}^{k}\Big(\prod_{i=j+1}^{k}(1-a_{i})(1-b_{i}+b_{i}B)C_g\Big)\Big((1-b_{j}+b_{j}B)a_{j}\alpha_{\sigma}C_{g}\Big).
\end{equation}
}
\textcolor{black}{
For convenience, in the following text we denote
\begin{equation}\label{equ:GCNII_define_bound_Xk}
B_{\mathbf{X}}^{(k)}:= \Big(\prod_{i=1}^{k}(1-a_{i})(1-b_{i}+b_{i}B)\alpha_{\sigma}C_g\Big)C_{\mathbf{X}}
+\sum_{j=1}^{k}\Big(\prod_{i=j+1}^{k}(1-a_{i})(1-b_{i}+b_{i}B)C_g\Big)\Big((1-b_{j}+b_{j}B)a_{j}\alpha_{\sigma}C_{g}\Big),
\end{equation}
and thus $\|\mathbf{X}^{(k)}\|_{F}\leq B_{\mathbf{X}}^{(k)}$. When $a_{k}=0,b_{k}=1$ for all $k$, GCNII degenerates into the traditional GCN, and $B_{\mathbf{X}}^{(k)}=B^{k}\alpha_{\sigma}^{k}C_{g}^{k}C_{\mathbf{X}}$, which is the same as shown in \eqref{equ:bound_X}. The bound of $\|\mathbf{X}^{(k)}\|_{F}$ implies
$$\|\mathbf{H}^{(k)}\|_{F}=\|(1-a_{k+1})g(\mathbf{L})\mathbf{X}^{(k)}
                  +a_{k+1}\mathbf{X}^{(0)}\|_{F}\leq(1-a_{k+1})C_{g}B_{\mathbf{X}}^{(k)}+a_{k+1}C_{\mathbf{X}}.$$
Then, we bound the perturbation of the output of each layer, i.e., bound $\|\triangle\mathbf{X}^{(k)}\|_{F}$.
Note that $$\triangle\mathbf{X}^{(k)}=\mathbf{X}^{(k)}-\mathbf{X}^{(k)'}=
\sigma\Big(\mathbf{H}^{(k-1)}\big((1-b_{k})\mathbf{I}_{d}+b_{k}\mathbf{W}^{(k)}\big)\Big)
        -\sigma\Big(\mathbf{H}^{(k-1)'}\big((1-b_{k})\mathbf{I}_{d}+b_{k}\mathbf{W}^{(k)'}\big)\Big).$$ Thus, following a calculation similar to Lemma \ref{lem:bound_GCN}, we have
       \begin{align*}
\|\triangle\mathbf{X}^{(k)}\|_{F}=&\Big\|\sigma\Big(\mathbf{H}^{(k-1)}\big((1-b_{k})\mathbf{I}_{d}+b_{k}\mathbf{W}^{(k)}\big)\Big)
        -\sigma\Big(\mathbf{H}^{(k-1)'}\big((1-b_{k})\mathbf{I}_{d}+b_{k}\mathbf{W}^{(k)'}\big)\Big)\Big\|_{F}\\
    \leq&\alpha_{\sigma}\big\|\mathbf{H}^{(k-1)}\big((1-b_{k})\mathbf{I}_{d}+b_{k}\mathbf{W}^{(k)}\big)
       -\mathbf{H}^{(k-1)'}\big((1-b_{k})\mathbf{I}_{d}+b_{k}\mathbf{W}^{(k)'}\big)\big\|_{F}\\
       =&\alpha_{\sigma}\bigg(\big\|\mathbf{H}^{(k-1)}-\mathbf{H}^{(k-1)'}\big\|_{F}\cdot
       \big\|(1-b_{k})\mathbf{I}_{d}+b_{k}\mathbf{W}^{(k)}\big\|_{2}
        +\big\|\mathbf{H}^{(k-1)'}\big\|_{F}\cdot\big\|b_{k}(\mathbf{W}^{(k)}-\mathbf{W}^{(k)'})\big\|_{2}\bigg).
                  \end{align*}
Since $\mathbf{H}^{(k-1)}-\mathbf{H}^{(k-1)'}=(1-a_{k})g(\mathbf{L})\triangle\mathbf{X}^{(k-1)}$, $\|\mathbf{H}^{(k-1)}-\mathbf{H}^{(k-1)'}\|_{F}=(1-a_{k})C_g\|\triangle\mathbf{X}^{(k-1)}\|_{F}$. Combining
$\|\mathbf{H}^{(k-1)'}\|_{F}\leq (1-a_{k})C_{g}B_{\mathbf{X}}^{(k-1)}+a_{k}C_{\mathbf{X}}$, $\triangle\mathbf{W}^{(k)}=\mathbf{W}^{(k)}-\mathbf{W}^{(k)'}$, and $\|(1-b_{k})\mathbf{I}_{d}+b_{k}\mathbf{W}^{(k)}\|_{2}\leq(1-b_{k})+b_{k}B$, we have
\begin{equation}\label{equ:GCNII_iterative_bound_var_Xk}
  \|\triangle\mathbf{X}^{(k)}\|_{F}\leq c_{1}^{(k)}\|\triangle\mathbf{X}^{(k-1)}\|_{F}+c_{2}^{(k)}\|\triangle\mathbf{W}^{(k)}\|_{2},
\end{equation}
where $c_{1}^{(k)}=(1-a_{k})(1-b_{k}+b_{k}B)\alpha_{\sigma}C_{g}$ and $c_{2}^{(k)}=\alpha_{\sigma}b_{k}\big((1-a_{k})C_{g}B_{\mathbf{X}}^{(k-1)}+a_{k}C_{\mathbf{X}}\big)$. This completes the proof of Eq. (\ref{equ:GCNII_iterative_deltaXk}).\newline
Furthermore, since
 $\|\triangle\mathbf{X}^{(1)}\|_{F}\leq c_{2}^{(1)}\|\triangle\mathbf{W}^{(1)}\|_{2}$, we further have
 \begin{equation}\label{equ:GCNII_boundof_deltaXk}
  \|\triangle\mathbf{X}^{(k)}\|_{F}\leq e^{(k)}\cdot\big(\sum_{j=1}^{k}\|\triangle\mathbf{W}^{(j)}\|_{2}\big),
 \end{equation}
 where $e^{(k)}=\max\{c_{1}^{(k)}e^{(k-1)},c_{2}^{(k)}\}$ with $e^{(0)}=0$. When $a_{k}=0,b_{k}=1$ for all $k$, GCNII degenerates into the traditional GCN, we have $c_{1}^{(k)}=B\alpha_{\sigma}C_{g},c_{2}^{(k)}=B^{k-1}\alpha_{\sigma}^{k}C_{g}^{k}C_{\mathbf{X}}$, and thus $e^{(k)}=B^{k-1}\alpha_{\sigma}^{k}C_{g}^{k}C_{\mathbf{X}}$, which is the same as shown in \eqref{equ:bound_variation_X}. This conclusively proves Eq. (\ref{equ:GCNII_var_boundof_deltaXk}).}
\newline

 \textcolor{black}{
{\bf Proof of Eq. (\ref{equ:GCNII_bound_var_finaloutput}):}}\\

\textcolor{black}{
To bound $|f(\mathbf{x}|\theta)-f(\mathbf{x}|\theta')|$, we apply the $\alpha_{\sigma}$-Lipschitz property of $\sigma(\cdot)$ and then have
\begin{align*}
  |f(\mathbf{x}|\theta)-f(\mathbf{x}|\theta')|=&|\sigma(\boldsymbol{\delta}_{\mathbf{x}}^{\top}\mathbf{H}^{(K)}\mathbf{w})-
  \sigma(\boldsymbol{\delta}_{\mathbf{x}}^{\top}\mathbf{H}^{(K)'}\mathbf{w}')|
  \leq\alpha_{\sigma}\cdot|\boldsymbol{\delta}_{\mathbf{x}}^{\top}\mathbf{H}^{(K)}\mathbf{w}-
\boldsymbol{\delta}_{\mathbf{x}}^{\top}\mathbf{H}^{(K)'}\mathbf{w}'|,
 \end{align*}
 that is, we need to bound $|\boldsymbol{\delta}_{\mathbf{x}}^{\top}\mathbf{H}^{(K)}\mathbf{w}-
\boldsymbol{\delta}_{\mathbf{x}}^{\top}\mathbf{H}^{(K)'}\mathbf{w}'|$. }
\newline

\textcolor{black}{
Since $\|\mathbf{A}_{1}\mathbf{A}_{2}-\mathbf{A}'_{1}\mathbf{A}'_{2}\|_{F}
       \leq\|\mathbf{A}_{1}-\mathbf{A}_{1}'\|_{F}\|\mathbf{A}_{2}\|_{2}
       +\|\mathbf{A}'_{1}\|_{F}\|\mathbf{A}_{2}-\mathbf{A}_{2}'\|_{2}$,
 \begin{align*}
 |\boldsymbol{\delta}_{\mathbf{x}}^{\top}\mathbf{H}^{(K)}\mathbf{w}-
\boldsymbol{\delta}_{\mathbf{x}}^{\top}\mathbf{H}^{(K)'}\mathbf{w}'|
 \leq \|\boldsymbol{\delta}_{\mathbf{x}}^{\top}(\mathbf{H}^{(K)}-\mathbf{H}^{(K)'})\|_{F}\cdot\|\mathbf{w}\|_{2}
 +\|\boldsymbol{\delta}_{\mathbf{x}}^{\top}\mathbf{H}^{(K)'}\|_{F}\cdot\|\mathbf{w}-\mathbf{w}'\|_{2}.
 \end{align*}
 Since
 $\|\mathbf{H}^{(K)}-\mathbf{H}^{(K)'}\|_{F}\leq(1-a_{K+1})C_g\|\triangle\mathbf{X}^{(K)}\|_{F}$ and $\|\mathbf{w}\|_{2}\leq B$,
$$
\|\boldsymbol{\delta}_{\mathbf{x}}^{\top}(\mathbf{H}^{(K)}-\mathbf{H}^{(K)'})\|_{F}\cdot\|\mathbf{w}\|_{2}
 \leq(1-a_{K+1})BC_g\|\triangle\mathbf{X}^{(K)}\|_{F},
$$
 and
 \begin{align*}
 \|\boldsymbol{\delta}_{\mathbf{x}}^{\top}\mathbf{H}^{(K)'}\|_{F}\cdot\|(\mathbf{w}-\mathbf{w}')\|_{2}
 \leq \Big((1-a_{K+1})C_{g}B_{\mathbf{X}}^{(K)}+a_{K+1}C_{\mathbf{X}}\Big)\|\triangle\mathbf{w}\|_{2},
 \end{align*}
which holds true because  $\|\boldsymbol{\delta}_{\mathbf{x}}^{\top}\mathbf{H}^{(K)'}\|_{F}\leq\|\mathbf{H}^{(K)'}\|_{F}\leq
 (1-a_{K+1})C_{g}B_{\mathbf{X}}^{(K)}+a_{K+1}C_{\mathbf{X}}$. That is,
  \begin{align*}
 |\boldsymbol{\delta}_{\mathbf{x}}^{\top}\mathbf{H}^{(K)}\mathbf{w}-
\boldsymbol{\delta}_{\mathbf{x}}^{\top}\mathbf{H}^{(K)'}\mathbf{w}'|
 \leq (1-a_{K+1})BC_g\|\triangle\mathbf{X}^{(K)}\|_{F}+\big((1-a_{K+1})C_{g}B_{\mathbf{X}}^{(K)}
 +a_{K+1}C_{\mathbf{X}}\big)\|\triangle\mathbf{w}\|_{2}.
 \end{align*}
 By \eqref{equ:GCNII_boundof_deltaXk}, we further have
 \begin{align}\label{equ:GCNII_bound_var_HK_w}
\nonumber  |\boldsymbol{\delta}_{\mathbf{x}}^{\top}\mathbf{H}^{(K)}\mathbf{w}-
\boldsymbol{\delta}_{\mathbf{x}}^{\top}\mathbf{H}^{(K)'}\mathbf{w}'|
 \leq &(1-a_{K+1})BC_g\sum_{j=1}^{K}\|\triangle\mathbf{W}^{(j)}\|_{2}+\big((1-a_{K+1})C_{g}B_{\mathbf{X}}^{(K)}
 +a_{K+1}C_{\mathbf{X}}\big)\|\triangle\mathbf{w}\|_{2}\\
 \leq & \varrho\cdot\Big(\sum_{j=1}^{K}\|\triangle\mathbf{W}^{(j)}\|_{2}+\|\mathbf{w}\|_{2}\Big)
 =\varrho\cdot\|\triangle\theta\|_{*},
 \end{align}
  where $\varrho=\max\Big\{(1-a_{K+1})BC_g\cdot e^{(K)}, (1-a_{K+1})C_{g}B_{\mathbf{X}}^{(K)}+a_{K+1}C_{\mathbf{X}}\Big\}$. Therefore,
 \begin{align}
 |f(\mathbf{x}|\theta)-f(\mathbf{x}|\theta')|\leq\alpha_{\sigma}\cdot|\boldsymbol{\delta}_{\mathbf{x}}^{\top}\mathbf{H}^{(K)}\mathbf{w}-
\boldsymbol{\delta}_{\mathbf{x}}^{\top}\mathbf{H}^{(K)'}\mathbf{w}'|
 \leq\alpha_{\sigma}\varrho\cdot\|\triangle\theta\|_{*}.
 \label{equ:GCNII_var_boundof_finalOutput}
 \end{align}
 Note that when $a_{k}=0,b_{k}=1$ for all $k$, we have $e^{(K)}=B^{K-1}\alpha_{\sigma}^{K}C_{g}^{K}C_{\mathbf{X}}$ and $B_{\mathbf{X}}^{(K)}=B^{K}\alpha_{\sigma}^{K}C_{g}^{K}C_{\mathbf{X}}$, then at this point, $\varrho=B^{K}\alpha_{\sigma}^{K}C_g^{K+1}C_{\mathbf{X}}$, and $ |f(\mathbf{x}|\theta)-f(\mathbf{x}|\theta')|\leq\alpha_{\sigma}\varrho\cdot\|\triangle\theta\|_{*}=
 B^{K}\alpha_{\sigma}^{K+1}C_g^{K+1}C_{\mathbf{X}}\cdot\|\triangle\theta\|_{*}$, which is consistent with \eqref{equ:bound_variation_f}.}

\textcolor{black}{ Thus, we complete the proof of Eq. (\ref{equ:GCNII_bound_var_finaloutput}).}
\newline

 \textcolor{black}{
 {\bf Proof of Eq. (\ref{equ:GCNII_bound_var_df_dw_sec6_1}):}}\\

\textcolor{black}{
 To bound the perturbation of the gradient, we first follow the calculation technique used in Appendix \textcolor{red}{A} to obtain the gradient of $f(\mathbf{x}|\theta)$ as follow:
 \begin{itemize}
 \item [i)] For the final layer,
  \begin{align*}
  \nabla_{\mathbf{w}}f(\mathbf{x}|\theta)=\nabla\sigma(\boldsymbol{\delta}_{\mathbf{x}}^{\top}\mathbf{H}^{(K)}\mathbf{w})
  [\boldsymbol{\delta}_{\mathbf{x}}^{\top}\mathbf{H}^{(K)}]^{\top}.
  \end{align*}
 \item [ii)] For the hidden layer $k=1,2,\dots,K$,
 \begin{align*}
  \nabla_{\mathbf{W}^{(k)}}f(\mathbf{x}|\theta)
  =\sum_{i,j}\frac{\partial
f(\mathbf{x}|\theta)}{\partial\mathbf{X}^{(k)}_{ij}}\cdot\frac{\partial\mathbf{X}^{(k)}_{ij}}{\partial\mathbf{W}^{(k)}}
  =b_{k}[\mathbf{H}^{(k-1)}]^{\top}(\frac{\partial
f(\mathbf{x}|\theta)}{\partial\mathbf{X}^{(k)}}\odot\mathbf{R}^{(k)}),
  \end{align*}
  where
$
 \mathbf{R}^{(k)}=\nabla\sigma\Big(\mathbf{H}^{(k-1)}\big((1-b_{k})\mathbf{I}_{d}+b_{k}\mathbf{W}^{(k)}\big)\Big).
$
  Furthermore,
   \begin{align*}
  \frac{\partial
f(\mathbf{x}|\theta)}{\partial\mathbf{X}^{(k-1)}}
  =\sum_{i,j}\frac{\partial
f(\mathbf{x}|\theta)}{\partial\mathbf{X}^{(k)}_{ij}}\cdot\frac{\partial\mathbf{X}^{(k)}_{ij}}{\partial\mathbf{X}^{(k-1)}}
  =(1-a_{k})[g(\mathbf{L})]^{\top}(\frac{\partial
f(\mathbf{x}|\theta)}{\partial\mathbf{X}^{(k)}}\odot\mathbf{R}^{(k)})
[(1-b_{k})\mathbf{I}_{d}+b_{k}\mathbf{W}^{(k)}]^{\top},
  \end{align*}
  with
  \begin{align*}
  \frac{\partial
f(\mathbf{x}|\theta)}{\partial\mathbf{X}^{(K)}}
  =(1-a_{K+1})\nabla\sigma(\boldsymbol{\delta}_{\mathbf{x}}^{\top}\mathbf{H}^{(K)}\mathbf{w})
  [\boldsymbol{\delta}_{\mathbf{x}}^{\top}g(\mathbf{L})]^{\top}\mathbf{w}^{\top}.
  \end{align*}
\end{itemize}
}

\textcolor{black}{
We now bound $\|\nabla_{\mathbf{w}}f(\mathbf{x}|\theta)-\nabla_{\mathbf{w}}f(\mathbf{x}|\theta')\|_{F}$. Note that $\nabla_{\mathbf{w}}f(\mathbf{x}|\theta)=\nabla\sigma(\boldsymbol{\delta}_{\mathbf{x}}^{\top}\mathbf{H}^{(K)}\mathbf{w})
  [\boldsymbol{\delta}_{\mathbf{x}}^{\top}\mathbf{H}^{(K)}]^{\top}$, we apply $\|\mathbf{A}_{1}\mathbf{A}_{2}-\mathbf{A}'_{1}\mathbf{A}'_{2}\|_{F}
\leq\|\mathbf{A}_{1}-\mathbf{A}_{1}'\|_{F}\cdot\|\mathbf{A}_{2}\|_{F}+\|\mathbf{A}_{1}'\|_{F}\cdot\|\mathbf{A}_{1}-\mathbf{A}'_{2}\|_{F}$ and  have
  \begin{align*}
  &\|\nabla_{\mathbf{w}}f(\mathbf{x}|\theta)-\nabla_{\mathbf{w}}f(\mathbf{x}|\theta')\|_{F}
=\big\|\nabla\sigma(\boldsymbol{\delta}_{\mathbf{x}}^{\top}\mathbf{H}^{(K)}\mathbf{w})
  [\boldsymbol{\delta}_{\mathbf{x}}^{\top}\mathbf{H}^{(K)}]^{\top}
  -\nabla\sigma(\boldsymbol{\delta}_{\mathbf{x}}^{\top}\mathbf{H}^{(K)'}\mathbf{w}')
  [\boldsymbol{\delta}_{\mathbf{x}}^{\top}\mathbf{H}^{(K)'}]^{\top}\big\|_{F}\\
\leq& \big|\nabla\sigma(\boldsymbol{\delta}_{\mathbf{x}}^{\top}\mathbf{H}^{(K)}\mathbf{w})
-\nabla\sigma(\boldsymbol{\delta}_{\mathbf{x}}^{\top}\mathbf{H}^{(K)'}\mathbf{w}')\big|\cdot
  \big\|[\boldsymbol{\delta}_{\mathbf{x}}^{\top}\mathbf{H}^{(K)}]^{\top}\|_{F}+
  \big|\nabla\sigma(\boldsymbol{\delta}_{\mathbf{x}}^{\top}\mathbf{H}^{(K)'}\mathbf{w}')\big|\cdot
  \big\|[\boldsymbol{\delta}_{\mathbf{x}}^{\top}\mathbf{H}^{(K)}]^{\top}
  -[\boldsymbol{\delta}_{\mathbf{x}}^{\top}\mathbf{H}^{(K)'}]^{\top}\|_{F}.
  \end{align*}
  We further apply the property of $\sigma(\cdot)$ and have
 \begin{align*}
 &\|\nabla_{\mathbf{w}}f(\mathbf{x}|\theta)-\nabla_{\mathbf{w}}f(\mathbf{x}|\theta')\|_{F}
 \leq \nu_{\sigma}\cdot\big|\boldsymbol{\delta}_{\mathbf{x}}^{\top}\mathbf{H}^{(K)}\mathbf{w}
-\boldsymbol{\delta}_{\mathbf{x}}^{\top}\mathbf{H}^{(K)'}\mathbf{w}'\big|\cdot
\big\|[\boldsymbol{\delta}_{\mathbf{x}}^{\top}\mathbf{H}^{(K)}]^{\top}\big\|_{F}+
\alpha_{\sigma}\cdot\big\|[\boldsymbol{\delta}_{\mathbf{x}}^{\top}\mathbf{H}^{(K)}]^{\top}
  -[\boldsymbol{\delta}_{\mathbf{x}}^{\top}\mathbf{H}^{(K)'}]^{\top}\big\|_{F}\\
\leq& \nu_{\sigma}\cdot\big|\boldsymbol{\delta}_{\mathbf{x}}^{\top}\mathbf{H}^{(K)}\mathbf{w}
-\boldsymbol{\delta}_{\mathbf{x}}^{\top}\mathbf{H}^{(K)'}\mathbf{w}'\big|\cdot
\big((1-a_{K+1})C_{g}B_{\mathbf{X}}^{(K)}+a_{K+1}C_{\mathbf{X}}\big)+
\alpha_{\sigma}\cdot(1-a_{K+1})C_g\|\triangle\mathbf{X}^{(K)}\|_{F}.
 \end{align*}
Finally, combining \eqref{equ:GCNII_boundof_deltaXk} and \eqref{equ:GCNII_bound_var_HK_w}, we have
\begin{align*}
 \|\nabla_{\mathbf{w}}f(\mathbf{x}|\theta)-\nabla_{\mathbf{w}}f(\mathbf{x}|\theta')\|_{F}
\leq &\nu_{\sigma}\varrho\cdot\|\triangle\theta\|_{*}\cdot
\big((1-a_{K+1})C_{g}B_{\mathbf{X}}^{(K)}+a_{K+1}C_{\mathbf{X}}\big)+
\alpha_{\sigma}\cdot(1-a_{K+1})C_ge^{(K)}\cdot(\sum_{j=1}^{K}\|\triangle\mathbf{W}^{(j)}\|_{2})\\
\leq & \Big(\nu_{\sigma}\varrho\cdot
\big((1-a_{K+1})C_{g}B_{\mathbf{X}}^{(K)}+a_{K+1}C_{\mathbf{X}}\big)+\alpha_{\sigma}\cdot(1-a_{K+1})C_ge^{(K)}\Big)
\cdot\|\triangle\theta\|_{*}.
 \end{align*}
 Thus, we complete the proof of Eq. (\ref{equ:GCNII_bound_var_df_dw_sec6_1}).
}

\textcolor{black}{
{\bf Proof of bounding $\|\nabla_{\mathbf{W}^{(k)}} f(\mathbf{x} | \theta) - \nabla_{\mathbf{W}^{(k)}} f(\mathbf{x} | \theta')\|_{F}$.}}\\

\textcolor{black}{
Next, we bound $\|\nabla_{\mathbf{W}^{(k)}} f(\mathbf{x} | \theta) - \nabla_{\mathbf{W}^{(k)}} f(\mathbf{x} | \theta')\|_{F}$. First, }\\
\textcolor{black}{
\begin{align*}
 &\|\nabla_{\mathbf{W}^{(k)}}f(\mathbf{x}|\theta)-\nabla_{\mathbf{W}^{(k)}}f(\mathbf{x}|\theta')\|_{F}
  =\Big\|b_{k}[\mathbf{H}^{(k-1)}]^{\top}(\frac{\partial
f(\mathbf{x}|\theta)}{\partial\mathbf{X}^{(k)}}\odot\mathbf{R}^{(k)})-b_{k}[\mathbf{H}^{(k-1)'}]^{\top}(\frac{\partial
f(\mathbf{x}|\theta')}{\partial\mathbf{X}^{(k)}}\odot\mathbf{R}^{(k)'})\Big\|_{F}\\
\leq &b_{k}\Big(\big\|\mathbf{H}^{(k-1)}-\mathbf{H}^{(k-1)'}\|_{F}\cdot\big\|\frac{\partial
f(\mathbf{x}|\theta)}{\partial\mathbf{X}^{(k)}}\odot\mathbf{R}^{(k)}\big\|_{F}+\|\mathbf{H}^{(k-1)'}\|_{F}\cdot\big\|\frac{\partial
f(\mathbf{x}|\theta)}{\partial\mathbf{X}^{(k)}}\odot\mathbf{R}^{(k)}-\frac{\partial
f(\mathbf{x}|\theta')}{\partial\mathbf{X}^{(k)}}\odot\mathbf{R}^{(k)'}\big\|_{F}\Big).
\end{align*}
Since $\big\|\frac{\partial
f(\mathbf{x}|\theta)}{\partial\mathbf{X}^{(k)}}\odot\mathbf{R}^{(k)}\big\|_{F}\leq\alpha_{\sigma}\|\frac{\partial
f(\mathbf{x}|\theta)}{\partial\mathbf{X}^{(k)}}\|_{F}$ and $\|\mathbf{H}^{(k-1)}-\mathbf{H}^{(k-1)'}\|_{F}\leq(1-a_{k})C_g\|\triangle\mathbf{X}^{(k-1)}\|_{F}$,
$$\big\|\mathbf{H}^{(k-1)}-\mathbf{H}^{(k-1)'}\|_{F}\cdot\big\|\frac{\partial
f(\mathbf{x}|\theta)}{\partial\mathbf{X}^{(k)}}\odot\mathbf{R}^{(k)}\big\|_{F}\leq(1-a_{k})C_g\|\triangle\mathbf{X}^{(k-1)}\|_{F}\cdot
\alpha_{\sigma}\|\frac{\partial
f(\mathbf{x}|\theta)}{\partial\mathbf{X}^{(k)}}\|_{F}.$$
Following \eqref{equ:define_gammak}, we denote
\begin{equation}\label{equ:GCNII_define_gammak}
  \gamma_{k}:=\big\|\frac{\partial
f(\mathbf{x}|\theta)}{\partial\mathbf{X}^{(k)}}\odot\mathbf{R}^{(k)}-\frac{\partial
f(\mathbf{x}|\theta')}{\partial\mathbf{X}^{(k)}}\odot\mathbf{R}^{(k)'}\big\|_{F}.
\end{equation}
Since $\|\mathbf{H}^{(k-1)'}\|_{F}\leq (1-a_{k})C_{g}B_{\mathbf{X}}^{(k-1)}+a_{k}C_{\mathbf{X}}$, we further apply \eqref{equ:GCNII_boundof_deltaXk} and \eqref{equ:GCNII_bound_df_dXk} to obtain
\begin{align}
 \nonumber &\|\nabla_{\mathbf{W}^{(k)}}f(\mathbf{x}|\theta)-\nabla_{\mathbf{W}^{(k)}}f(\mathbf{x}|\theta')\|_{F}\\
 \leq &b_{k}\bigg\{(1-a_{k})\alpha_{\sigma}C_ge^{(k-1)}\cdot\big(\sum_{j=1}^{k-1}\|\triangle\mathbf{W}^{(j)}\|_{2}\big)\cdot\|\frac{\partial
f(\mathbf{x}|\theta)}{\partial\mathbf{X}^{(k)}}\|_{F}
+\big((1-a_{k})C_{g}B_{\mathbf{X}}^{(k-1)}+a_{k}C_{\mathbf{X}}\big)\cdot\gamma_{k}\bigg\}. \label{equ:GCNII_bound_var_df_dWk0}
\end{align}
That is, to bound $\|\nabla_{\mathbf{W}^{(k)}}f(\mathbf{x}|\theta)-\nabla_{\mathbf{W}^{(k)}}f(\mathbf{x}|\theta')\|_{F}$, we need to bound
$\|\frac{\partial
f(\mathbf{x}|\theta)}{\partial\mathbf{X}^{(k)}}\|_{F}$ and $\gamma_{k}$. We provide the following steps to the bound of $\|\frac{\partial
f(\mathbf{x}|\theta)}{\partial\mathbf{X}^{(k)}}\|_{F}$ and $\gamma_{k}$. Using these two bounds, we finally obtain the upper bound of $\|\nabla_{\mathbf{W}^{(k)}}f(\mathbf{x}|\theta)-\nabla_{\mathbf{W}^{(k)}}f(\mathbf{x}|\theta')\|_{F}$ by applying them to \eqref{equ:GCNII_bound_var_df_dWk0}.
}\\

\textcolor{black}{
{\bf Step 1: we first bound $\big\|\frac{\partial
f(\mathbf{x}|\theta)}{\partial\mathbf{X}^{(k-1)}}\big\|_{F}$.}}

\textcolor{black}{
 According to the iterative formula of $\frac{\partial
f(\mathbf{x}|\theta)}{\partial\mathbf{X}^{(k-1)}}$, we have
  \begin{align*}
  \|\frac{\partial
f(\mathbf{x}|\theta)}{\partial\mathbf{X}^{(k-1)}}\|_{F}
  =&\|(1-a_{k})[g(\mathbf{L})]^{\top}(\frac{\partial
f(\mathbf{x}|\theta)}{\partial\mathbf{X}^{(k)}}\odot\mathbf{R}^{(k)})
[(1-b_{k})\mathbf{I}_{d}+b_{k}\mathbf{W}^{(k)}]^{\top}\|_{F}\\
 \leq&(1-a_{k})\|g(\mathbf{L})\|_{2}\cdot\|\frac{\partial
f(\mathbf{x}|\theta)}{\partial\mathbf{X}^{(k)}}\odot\mathbf{R}^{(k)}\|_{F}\cdot\|(1-b_{k})\mathbf{I}_{d}+b_{k}\mathbf{W}^{(k)}\|_{2}.
  \end{align*}
  Since the absolute value of the elements in $\mathbf{R}^{(k)}$ is less than $\alpha_{\sigma}$, $\|\frac{\partial
f(\mathbf{x}|\theta)}{\partial\mathbf{X}^{(k)}}\odot\mathbf{R}^{(k)}\|_{F}\leq\alpha_{\sigma}\|\frac{\partial
f(\mathbf{x}|\theta)}{\partial\mathbf{X}^{(k)}}\|_{F}$. Then, combining $\|g(\mathbf{L})\|_{2}=C_{g}$ and $\|(1-b_{k})\mathbf{I}_{d}+b_{k}\mathbf{W}^{(k)}\|_{2}\leq(1-b_{k})+b_{k}B$, we obtain the following iterative formula
$$\|\frac{\partial
f(\mathbf{x}|\theta)}{\partial\mathbf{X}^{(k-1)}}\|_{F}\leq(1-a_{k})(1-b_{k}+b_{k}B)\alpha_{\sigma}C_{g}\cdot\|\frac{\partial
f(\mathbf{x}|\theta)}{\partial\mathbf{X}^{(k)}}\|_{F}.$$
Note that since $|\nabla\sigma(\boldsymbol{\delta}_{\mathbf{x}}^{\top}\mathbf{H}^{(K)}\mathbf{w})|\leq\alpha_{\sigma}$ and $\|\mathbf{w}\|_{2}\leq B$, $$\|\frac{\partial
f(\mathbf{x}|\theta)}{\partial\mathbf{X}^{(K)}}\|_{F}
  =\|(1-a_{K+1})\nabla\sigma(\boldsymbol{\delta}_{\mathbf{x}}^{\top}\mathbf{H}^{(K)}\mathbf{w})
  [\boldsymbol{\delta}_{\mathbf{x}}^{\top}g(\mathbf{L})]^{\top}\mathbf{w}^{\top}\|_{F}\leq
  (1-a_{K+1})B\alpha_{\sigma}C_{g}.$$
  Thus,
$$
  \|\frac{\partial
f(\mathbf{x}|\theta)}{\partial\mathbf{X}^{(k)}}\|_{F}\leq\Big(\prod_{j=k+1}^{K}(1-a_{j})(1-b_{j}+b_{j}B)\Big)
(1-a_{K+1})B\alpha_{\sigma}^{K+1-k}C^{K+1-k}_{g}.
$$
For simplicity, we denote $B^{(k)}_{\partial\mathbf{X}}=\Big(\prod\limits_{j=k+1}^{K}(1-a_{j})(1-b_{j}+b_{j}B)\Big)
(1-a_{K+1})B\alpha_{\sigma}^{K+1-k}C^{K+1-k}_{g}$, and then
\begin{equation}\label{equ:GCNII_bound_df_dXk}
  \|\frac{\partial
f(\mathbf{x}|\theta)}{\partial\mathbf{X}^{(k)}}\|_{F}\leq B^{(k)}_{\partial\mathbf{X}},~k=1,2,\dots,K.
  \end{equation}
When $a_{k}=0,b_{k}=1$ for all $k$, we have $B^{(k)}_{\partial\mathbf{X}}=B^{K+1-k}\alpha_{\sigma}^{K+1-k}C_{g}^{K+1-k}$, which is the same as shown in \eqref{equ:bound_df_dX}.}\\

\textcolor{black}{
{\bf Step 2: We next bound $\gamma_{k}$.}}

\textcolor{black}{
Following the proof of Lemma \ref{lem:bound_variation_df_dW}, we have by \eqref{equ:GCNII_define_gammak} that
\begin{align} \label{equ:GCNII_bound_gammak0}
 \gamma_{k}\leq\big\|\frac{\partial
f(\mathbf{x}|\theta)}{\partial\mathbf{X}^{(k)}}\odot(\mathbf{R}^{(k)}-\mathbf{R}^{(k)'})\big\|_{F}+\alpha_{\sigma}\cdot\big\|\big(\frac{\partial
f(\mathbf{x}|\theta)}{\partial\mathbf{X}^{(k)}}-\frac{\partial
f(\mathbf{x}|\theta')}{\partial\mathbf{X}^{(k)}}\big)\big\|_{F}.
\end{align}
Similarly, let $h_{k}:=\big\|\frac{\partial
f(\mathbf{x}|\theta)}{\partial\mathbf{X}^{(k)}}\odot(\mathbf{R}^{(k)}-\mathbf{R}^{(k)'})\big\|_{F}$. Then,
applying $\|\mathbf{A}_{1}\odot\mathbf{A}_{2}\|_{F}\leq\|\mathbf{A}_{1}\|_{F}\|\mathbf{A}_{2}\|_{F}$, we have
\begin{equation}\label{equ:GCNII_bound_of_hk0}
h_{k}=\big\|\frac{\partial
f(\mathbf{x}|\theta)}{\partial\mathbf{X}^{(k)}}\odot(\mathbf{R}^{(k)}-\mathbf{R}^{(k)'})\big\|_{F}\leq\big\|\frac{\partial
f(\mathbf{x}|\theta)}{\partial\mathbf{X}^{(k)}}\big\|_{F}\cdot\big\|\mathbf{R}^{(k)}-\mathbf{R}^{(k)'}\big\|_{F}.
\end{equation}
Note that $\mathbf{R}^{(k)}=\nabla\sigma\Big(\mathbf{H}^{(k-1)}\big((1-b_{k})\mathbf{I}_{d}+b_{k}\mathbf{W}^{(k)}\big)\Big)$, then the $\nu_{\sigma}$-smooth property of $\sigma(\cdot)$ implies
\begin{align*}
\big\|\mathbf{R}^{(k)}-\mathbf{R}^{(k)'}\big\|_{F}
=&\Big\|\nabla\sigma\Big(\mathbf{H}^{(k-1)}\big((1-b_{k})\mathbf{I}_{d}+b_{k}\mathbf{W}^{(k)}\big)\Big)
-\nabla\sigma\Big(\mathbf{H}^{(k-1)'}\big((1-b_{k})\mathbf{I}_{d}+b_{k}\mathbf{W}^{(k)'}\big)\Big)\Big\|_{F}\\
\leq&\nu_{\sigma}\cdot\big\|\mathbf{H}^{(k-1)}\big((1-b_{k})\mathbf{I}_{d}+b_{k}\mathbf{W}^{(k)}\big)
-\mathbf{H}^{(k-1)'}\big((1-b_{k})\mathbf{I}_{d}+b_{k}\mathbf{W}^{(k)'}\big)\big\|_{F}.
\end{align*}
Applying $\|\mathbf{A}_{1}\mathbf{A}_{2}-\mathbf{A}'_{1}\mathbf{A}'_{2}\|_{F}
\leq\|\mathbf{A}_{1}-\mathbf{A}_{1}'\|_{F}\cdot\|\mathbf{A}_{2}\|_{2}+\|\mathbf{A}_{1}'\|_{F}\cdot\|\mathbf{A}_{2}-\mathbf{A}'_{2}\|_{2}$, we further have
\begin{align*}
\big\|\mathbf{R}^{(k)}-\mathbf{R}^{(k)'}\big\|_{F}
\leq&\nu_{\sigma}\cdot\Big(\big\|(\mathbf{H}^{(k-1)}-\mathbf{H}^{(k-1)'}\big\|_{F}\cdot\big\|(1-b_{k})\mathbf{I}_{d}+b_{k}\mathbf{W}^{(k)}\big\|_{2}
+\big\|\mathbf{H}^{(k-1)'}\big\|_{F}\cdot\big\|b_{k}(\mathbf{W}^{(k)}-\mathbf{W}^{(k)'})\big\|_{2}\Big).
\end{align*}
Note that  $\|\mathbf{H}^{(k-1)}-\mathbf{H}^{(k-1)'}\|_{F}\leq(1-a_{k})C_g\|\triangle\mathbf{X}^{(k-1)}\|_{F}
\leq(1-a_{k})C_g\cdot e^{(k-1)}\big(\sum\limits_{j=1}^{k-1}\|\triangle\mathbf{W}^{(j)}\|_{2}\big)$ (see \eqref{equ:GCNII_boundof_deltaXk}), $\|\mathbf{H}^{(k-1)'}\|_{F}\leq (1-a_{k})C_{g}B_{\mathbf{X}}^{(k-1)}+a_{k}C_{\mathbf{X}}$, and $\big\|(1-b_{k})\mathbf{I}_{d}+b_{k}\mathbf{W}^{(k)}\big\|_{2}\leq(1-b_{k})+b_{k}B$. Thus,
\begin{align}
\nonumber\big\|\mathbf{R}^{(k)}-\mathbf{R}^{(k)'}\big\|_{F}
&\leq\nu_{\sigma}\cdot\Big((1-a_{k})(1-b_{k}+b_{k}B)C_g\cdot e^{(k-1)}\big(\sum\limits_{j=1}^{k-1}\|\triangle\mathbf{W}^{(j)}\|_{2}\big)
+\big((1-a_{k})C_{g}B_{\mathbf{X}}^{(k-1)}+a_{k}C_{\mathbf{X}}\big)b_{k}\|\triangle\mathbf{W}^{(k)}\|_{2}\Big)\\
&\leq\nu_{\sigma}r_{k}\cdot\big(\sum\limits_{j=1}^{k}\|\triangle\mathbf{W}^{(j)}\|_{2}\big), \label{equ:GCNII_bound_var_Rk}
\end{align}
where $r_{k}:=\max\Big\{(1-a_{k})(1-b_{k}+b_{k}B)C_g\cdot e^{(k-1)},\big((1-a_{k})C_{g}B_{\mathbf{X}}^{(k-1)}+a_{k}C_{\mathbf{X}}\big)b_{k}\Big\}$. When $a_{k}=0,b_{k}=1$ for all $k$, $e^{(k-1)}=B^{k-2}\alpha_{\sigma}^{k-1}C_{g}^{k-1}C_{\mathbf{X}}$ and $B_{\mathbf{X}}^{(k-1)}=B^{k-1}\alpha_{\sigma}^{k-1}C_{g}^{k-1}C_{\mathbf{X}}$, then at this point, $r_{k}=B^{k-1}\alpha_{\sigma}^{k-1}C_{g}^{k}C_{\mathbf{X}}$, and thus $\big\|\mathbf{R}^{(k)}-\mathbf{R}^{(k)'}\big\|_{F}
\leq\nu_{\sigma}B^{k-1}\alpha_{\sigma}^{k-1}C_{g}^{k}C_{\mathbf{X}}\big(\sum\limits_{j=1}^{k}\|\triangle\mathbf{W}^{(j)}\|_{2}\big)$, which is the same as shown in \eqref{equ:bound_var_Rk}.}

\textcolor{black}{Combining \eqref{equ:GCNII_bound_df_dXk}, \eqref{equ:GCNII_bound_of_hk0}, and \eqref{equ:GCNII_bound_var_Rk}, we have
\begin{equation}\label{equ:GCNII_bound_of_hk}
h_{k}\leq B_{\partial\mathbf{X}}^{(k)}\cdot\nu_{\sigma}r_{k}\cdot\big(\sum\limits_{j=1}^{k}\|\triangle\mathbf{W}^{(j)}\|_{2}\big).
\end{equation}
Next, we use the same technique as in \eqref{equ:bound_hk_by_hmax} that uses an $h_{\max}$ to bound all $h_k$. Specifically, let
$$h_{\max}=\max_{k=1,2,...,K}\Big\{B_{\partial\mathbf{X}}^{(k)}\cdot\nu_{\sigma}r_{k}\Big\}\cdot\|\triangle\theta\|_{*}.$$
Then,
\begin{equation}\label{equ:GCNII_bound_hk_by_hmax}
h_{k}\leq h_{\max}~\textrm{holds for all}~k=1,2,\dots,K.
\end{equation}
One can prove that when $a_{k}=0,~b_{k}=1$ for all $k$, then $h_{\max}=\nu_{\sigma}B^{K}\alpha^{K}_{\sigma}C^{K+1}_{g}C_{\mathbf{X}}\|\triangle\theta\|_{*}$, which is the same as in the case of traditional GCN.}

\textcolor{black}{
Applying \eqref{equ:GCNII_bound_hk_by_hmax} to \eqref{equ:GCNII_bound_gammak0}, we have
\begin{equation}\label{equ:GCNII_bound_gammak1}
\gamma_{k}\leq h_{\max}+\alpha_{\sigma}\cdot\big\|\big(\frac{\partial
f(\mathbf{x}|\theta)}{\partial\mathbf{X}^{(k)}}-\frac{\partial
f(\mathbf{x}|\theta')}{\partial\mathbf{X}^{(k)}}\big)\big\|_{F},
\end{equation}
and we can derive the iterative formula for the bound of $\gamma_{k}$. To do this,  we utilize the iterative formula of $\frac{\partial
f(\mathbf{x}|\theta)}{\partial\mathbf{X}^{(k-1)}}$ and obtain
\begin{align*}
\big\|\frac{\partial
f(\mathbf{x}|\theta)}{\partial\mathbf{X}^{(k)}}-\frac{\partial
f(\mathbf{x}|\theta')}{\partial\mathbf{X}^{(k)}}\big\|_{F}
\leq &\big\|(1-a_{k+1})[g(\mathbf{L})]^{\top}(\frac{\partial
f(\mathbf{x}|\theta)}{\partial\mathbf{X}^{(k+1)}}\odot\mathbf{R}^{(k+1)})
[(1-b_{k+1})\mathbf{I}_{d}+b_{k+1}\mathbf{W}^{(k+1)}]^{\top}\\
&-(1-a_{k+1})[g(\mathbf{L})]^{\top}(\frac{\partial
f(\mathbf{x}|\theta')}{\partial\mathbf{X}^{(k+1)}}\odot\mathbf{R}^{(k+1)'})
[(1-b_{k+1})\mathbf{I}_{d}+b_{k+1}\mathbf{W}^{(k+1)'}]^{\top}\big\|_{F}\\
\leq &(1-a_{k+1})C_g\cdot\bigg(\Big\|(\frac{\partial
f(\mathbf{x}|\theta)}{\partial\mathbf{X}^{(k+1)}}\odot\mathbf{R}^{(k+1)})
[(1-b_{k+1})\mathbf{I}_{d}+b_{k+1}\mathbf{W}^{(k+1)}]^{\top}\\
&-(\frac{\partial
f(\mathbf{x}|\theta')}{\partial\mathbf{X}^{(k+1)}}\odot\mathbf{R}^{(k+1)'})
[(1-b_{k+1})\mathbf{I}_{d}+b_{k+1}\mathbf{W}^{(k+1)'}]^{\top}\Big\|_{F}\bigg).
\end{align*}
Applying $\|\mathbf{A}_{1}\mathbf{A}_{2}-\mathbf{A}'_{1}\mathbf{A}'_{2}\|_{F}\leq
\|\mathbf{A}_{1}-\mathbf{A}_{1}'\|_{F}\|\mathbf{A}_{2}\|_{2}+\|\mathbf{A}_{1}'\|_{F}\|\mathbf{A}_{2}-\mathbf{A}'_{2}\|_{2}$ and $\|(1-b_{k+1})\mathbf{I}_{d}+b_{k+1}\mathbf{W}^{(k+1)}\|_{2}\leq 1-b_{k+1}+b_{k+1}B$,
we have
\begin{align*}
\big\|\frac{\partial
f(\mathbf{x}|\theta)}{\partial\mathbf{X}^{(k)}}-\frac{\partial
f(\mathbf{x}|\theta')}{\partial\mathbf{X}^{(k)}}\big\|_{F}
\leq(1-a_{k+1})C_g\cdot\bigg(&(1-b_{k+1}+b_{k+1}B)\Big\|\frac{\partial
f(\mathbf{x}|\theta)}{\partial\mathbf{X}^{(k+1)}}\odot\mathbf{R}^{(k+1)}-\frac{\partial
f(\mathbf{x}|\theta')}{\partial\mathbf{X}^{(k+1)}}\odot\mathbf{R}^{(k+1)'}\Big\|_{F}\\
&+\Big\|\frac{\partial
f(\mathbf{x}|\theta')}{\partial\mathbf{X}^{(k+1)}}\odot\mathbf{R}^{(k+1)'}\Big\|_{F}\cdot
\big\|b_{k+1}(\mathbf{W}^{(k+1)}-\mathbf{W}^{(k+1)'})\big\|_{2}\bigg).
\end{align*}
Since $\big\|\frac{\partial
f(\mathbf{x}|\theta')}{\partial\mathbf{X}^{(k+1)}}\odot\mathbf{R}^{(k+1)'}\big\|_{F}
\leq\alpha_{\sigma}\big\|\frac{\partial
f(\mathbf{x}|\theta')}{\partial\mathbf{X}^{(k+1)}}\big\|_{F}\leq\alpha_{\sigma} B_{\partial\mathbf{X}}^{(k+1)}$ and $\gamma_{k+1}=\Big\|\frac{\partial
f(\mathbf{x}|\theta)}{\partial\mathbf{X}^{(k+1)}}\odot\mathbf{R}^{(k+1)}-\frac{\partial
f(\mathbf{x}|\theta')}{\partial\mathbf{X}^{(k+1)}}\odot\mathbf{R}^{(k+1)'}\Big\|_{F}$, thus
\begin{equation}\label{equ:GCNII_bound_var_df_dXk}
\big\|\frac{\partial
f(\mathbf{x}|\theta)}{\partial\mathbf{X}^{(k)}}-\frac{\partial
f(\mathbf{x}|\theta')}{\partial\mathbf{X}^{(k)}}\big\|_{F}
\leq(1-a_{k+1})C_g\cdot\bigg((1-b_{k+1}+b_{k+1}B)\gamma_{k+1}
+b_{k+1}\alpha_{\sigma}B_{\partial\mathbf{X}}^{(k+1)}\cdot
\big\|\triangle\mathbf{W}^{(k+1)}\big\|_{2}\bigg).
\end{equation}
Combining \eqref{equ:GCNII_bound_gammak1} and \eqref{equ:GCNII_bound_var_df_dXk}, we obtain the iterative formula for the bound of $\gamma_{k}$ as
\begin{align}
 \gamma_{k}\leq& h_{\max}+(1-a_{k+1})\alpha_{\sigma}C_{g}\cdot\bigg((1-b_{k+1}+b_{k+1}B)\cdot\gamma_{k+1}
+b_{k+1}\alpha_{\sigma}\cdot B_{\partial\mathbf{X}}^{(k+1)}\cdot
\big\|\triangle\mathbf{W}^{(k+1)}\big\|_{2}\bigg).   \label{equ:GCNII_bound_of_gammak}
\end{align}
}

\textcolor{black}{
Furthermore, since $\frac{\partial
f(\mathbf{x}|\theta)}{\partial\mathbf{X}^{(K)}}
  =(1-a_{K+1})\nabla\sigma(\boldsymbol{\delta}_{\mathbf{x}}^{\top}\mathbf{H}^{(K)}\mathbf{w})
  [\boldsymbol{\delta}_{\mathbf{x}}^{\top}g(\mathbf{L})]^{\top}\mathbf{w}^{\top}$,
\begin{align*}
\big\|\frac{\partial
f(\mathbf{x}|\theta)}{\partial\mathbf{X}^{(K)}}-\frac{\partial
f(\mathbf{x}|\theta')}{\partial\mathbf{X}^{(K)}}\big\|_{F}
= &\big\|(1-a_{K+1})\nabla\sigma(\boldsymbol{\delta}_{\mathbf{x}}^{\top}\mathbf{H}^{(K)}\mathbf{w})
  [\boldsymbol{\delta}_{\mathbf{x}}^{\top}g(\mathbf{L})]^{\top}\mathbf{w}^{\top}
-(1-a_{K+1})\nabla\sigma(\boldsymbol{\delta}_{\mathbf{x}}^{\top}\mathbf{H}^{(K)'}\mathbf{w}')
  [\boldsymbol{\delta}_{\mathbf{x}}^{\top}g(\mathbf{L})]^{\top}\mathbf{w}'^{\top}\big\|_{F}\\
  =&\big(1-a_{K+1})\|\nabla\sigma(\boldsymbol{\delta}_{\mathbf{x}}^{\top}\mathbf{H}^{(K)}\mathbf{w})
  [\boldsymbol{\delta}_{\mathbf{x}}^{\top}g(\mathbf{L})]^{\top}\mathbf{w}^{\top}
-\nabla\sigma(\boldsymbol{\delta}_{\mathbf{x}}^{\top}\mathbf{H}^{(K)'}\mathbf{w}')
  [\boldsymbol{\delta}_{\mathbf{x}}^{\top}g(\mathbf{L})]^{\top}\mathbf{w}'^{\top}\big\|_{F}.
\end{align*}
The inequation $\|\mathbf{A}_{1}\mathbf{A}_{2}-\mathbf{A}'_{1}\mathbf{A}'_{2}\|_{F}\leq
\|\mathbf{A}_{1}-\mathbf{A}_{1}'\|_{F}\|\mathbf{A}_{2}\|_{2}+\|\mathbf{A}_{1}'\|_{F}\|\mathbf{A}_{2}-\mathbf{A}'_{2}\|_{2}$ further derives
\begin{align*}
\big\|\frac{\partial
f(\mathbf{x}|\theta)}{\partial\mathbf{X}^{(K)}}-\frac{\partial
f(\mathbf{x}|\theta')}{\partial\mathbf{X}^{(K)}}\big\|_{F}
\leq&\big(1-a_{K+1})\Big(\big|\nabla\sigma(\boldsymbol{\delta}_{\mathbf{x}}^{\top}\mathbf{H}^{(K)}\mathbf{w})
                         -\nabla\sigma(\boldsymbol{\delta}_{\mathbf{x}}^{\top}\mathbf{H}^{(K)'}\mathbf{w}')\big|\cdot
\big\|[\boldsymbol{\delta}_{\mathbf{x}}^{\top}g(\mathbf{L})]^{\top}\mathbf{w}^{\top}\big\|_{2}\\
&+\big|\nabla\sigma(\boldsymbol{\delta}_{\mathbf{x}}^{\top}\mathbf{H}^{(K)'}\mathbf{w}')\big|\cdot
  \big\|[\boldsymbol{\delta}_{\mathbf{x}}^{\top}g(\mathbf{L})]^{\top}(\mathbf{w}-\mathbf{w}')^{\top}\big\|_{2}\Big)\\
\leq &(1-a_{K+1})\Big(\nu_{\sigma}\cdot\big|\boldsymbol{\delta}_{\mathbf{x}}^{\top}\mathbf{H}^{(K)}\mathbf{w}
                         -\boldsymbol{\delta}_{\mathbf{x}}^{\top}\mathbf{H}^{(K)'}\mathbf{w}'\big|\cdot BC_{g}+\alpha_{\sigma}C_{g}\|\triangle\mathbf{w}\|_{2}\Big),
\end{align*}
where the last inequation holds true because of the $\nu_{\sigma}$-smooth property of $\sigma(\cdot)$ and $|\nabla\sigma|\leq\alpha_{\sigma}$. Then, we apply \eqref{equ:GCNII_bound_var_HK_w} to obtain
\begin{equation}\label{equ:GCNII_bound_of_var_df_dXK}
\big\|\frac{\partial
f(\mathbf{x}|\theta)}{\partial\mathbf{X}^{(K)}}-\frac{\partial
f(\mathbf{x}|\theta')}{\partial\mathbf{X}^{(K)}}\big\|_{F}
\leq(1-a_{K+1})\Big(\nu_{\sigma}BC_{g}\cdot\varrho\|\triangle\theta\|_{*}+\alpha_{\sigma}C_{g}\|\triangle\mathbf{w}\|_{2}\Big),
\end{equation}
where $\varrho=\max\Big\{(1-a_{K+1})BC_g\cdot e^{(K)},~~(1-a_{K+1})C_{g}B_{\mathbf{X}}^{(K)}+a_{K+1}C_{\mathbf{X}}\Big\}$.  Substituting \eqref{equ:GCNII_bound_of_var_df_dXK} into \eqref{equ:GCNII_bound_gammak1},
\begin{align}\label{equ:GCNII_bound_of_gammaK}
 \gamma_{K}\leq
h_{\max}
+(1-a_{K+1})\alpha_{\sigma}\cdot\Big(\nu_{\sigma}BC_{g}\cdot\varrho\|\triangle\theta\|_{*}+\alpha_{\sigma}C_{g}\|\triangle\mathbf{w}\|_{2}\Big).
\end{align}
Combining \eqref{equ:GCNII_bound_of_gammak} and \eqref{equ:GCNII_bound_of_gammaK}, we can further obtain the bound of $\gamma_{k}$.
}

\section*{\textcolor{black}{APPENDIX F:Proof for Section \ref{sec:62}}} \label{proof:section6_2}
\textcolor{black}{
{\bf Proof of Eq. (\ref{softmax-2}):}} 
\textcolor{black}{
For vectors $ \mathbf{z} = (z_1, z_2, \dots, z_p)$ and $ \mathbf{z}' = (z'_1, z'_2, \dots, z'_p) $
(with $\|\mathbf{z}-\mathbf{z}'\|_{\infty}<1$), the softmax function is defined as:
$$
\operatorname{softmax}(\mathbf{z})=(Z_1,Z_2,\dots,Z_p),\quad \operatorname{softmax}(\mathbf{z}')=(Z'_1,Z'_2,\dots,Z'_p),$$
where
$$
  Z_k = \frac{e^{z_k}}{\sum_{j=1}^p e^{z_j}}, \quad Z'_k = \frac{e^{z'_k}}{\sum_{j=1}^p e^{z'_j}} \quad \forall k=1,2,\dots,p.
 $$
For each $ k $, rewrite $ |Z_k - Z'_k| $ using the softmax definition:
$$
|Z_k - Z'_k| = \left| \frac{e^{z_k}}{S} - \frac{e^{z'_k}}{S'} \right| = \left| \frac{e^{z_k} S' - e^{z'_k} S}{S S'} \right|,
$$
where $ S = \sum_{j=1}^p e^{z_j} $ and $ S' = \sum_{j=1}^p e^{z'_j} $. By the triangle inequality in the numerator:
$$
|e^{z_k} S' - e^{z'_k} S| \leq e^{z_k} |S' - S| + S |e^{z_k} - e^{z'_k}|.
$$
Summing $ |Z_k - Z'_k| $ over $ k $ gives the 1-norm:
$$
\|\operatorname{softmax}(\mathbf{z}) - \operatorname{softmax}(\mathbf{z}')\|_1 = \sum_{k=1}^p |Z_k - Z'_k| \leq \sum_{k=1}^p \left( \frac{e^{z_k}}{S S'} |S' - S| + \frac{1}{S'} |e^{z_k} - e^{z'_k}| \right).
$$
Since $ \sum\limits_{k=1}^p \frac{e^{z_k}}{S} = 1 $ , this simplifies to:
\begin{equation}\label{softmax-1}
\|\operatorname{softmax}(\mathbf{z}) - \operatorname{softmax}(\mathbf{z}')\|_1  \leq \frac{|S' - S|}{S'} + \frac{1}{S'} \sum_{k=1}^p |e^{z_k} - e^{z'_k}|.
\end{equation}
Notice that for any $ k $, by the mean value theorem,
$$
|e^{z_k} - e^{z'_k}| \leq e^{z'_k}|e^{z_k-z'_k}-1| \leq e^{z'_k}\cdot e\cdot |z_k - z'_k|,
$$
where the last inequation holds true because $\|\mathbf{z}-\mathbf{z}'\|_{\infty}<1$,
and, by the triangle inequality, $$ |S' - S| = \left| \sum_{j=1}^p (e^{z'_j} - e^{z_j}) \right| \leq \sum_{j=1}^p |e^{z'_j} - e^{z_j}|. $$ Substituting into \eqref{softmax-1} gives:
$$
\|\operatorname{softmax}(\mathbf{z}) - \operatorname{softmax}(\mathbf{z}')\|_1 \leq \frac{2}{S'} \sum_{j=1}^p |e^{z'_j} - e^{z_j}|\leq 2e\cdot\max|z_k-z'_k|=2e\cdot\|\mathbf{z}-\mathbf{z}'\|_{\infty},
$$
where the second inequation holds true because $\sum\limits_{k=1}^p \frac{e^{z'_k}}{S'} = 1 $. Thus, we complete the proof of Eq. (\ref{softmax-2}).
}\\

\textcolor{black}{
 {\bf Proof of Eq. (\ref{attention-1}):}}\\

  \textcolor{black}{
  Recall that we denote  $ B $ a constant which bounds all original and perturbed parameters, i.e, $$ \|\mathbf{W}_K\|_{2}, \|\mathbf{W}'_K\|_{2}, \|\mathbf{W}_Q\|_{2}, \|\mathbf{W}'_Q\|_{2}, \|\mathbf{W}_V\|_{2}, \|\mathbf{W}'_V\|_{2}, \|\mathbf{W}_O\|_{2}, \|\mathbf{W}'_O\|_{2} \leq B, $$ and $ \|\mathbf{a}\|_{2}, \|\mathbf{a}'\|_{2} \leq B $ (output vector norm). And for $ \theta = \{\mathbf{W}_K, \mathbf{W}_Q, \mathbf{W}_V, \mathbf{W}_O, \mathbf{a}\} $,
\begin{equation}\label{equ:GraTrans_Grad}
 \nabla_{\mathbf{W}_K} F(\mathbf{x}_n|\theta)= \mathbf{a}^T \mathbb{I}_{\geq 0}(\mathbf{Z}_\theta) \cdot \mathbf{W}_O \cdot \mathbf{W}_V \cdot \mathbf{M}_\theta \cdot (\mathbf{W}_Q \mathbf{x}_n)^{\top},
\end{equation}
where: $ \mathbf{Z}_\theta = \mathbf{W}_O \cdot \text{Attn}(\mathbf{x}_n; \theta) $, $ \text{Attn}(\mathbf{x}_n; \theta) = \sum_{s \in \mathcal{T}^n} A_{s,n}(\theta) (\mathbf{W}_V \mathbf{x}_s) $,
$ A_{s,n}(\theta) = \operatorname{softmax}(S_{\cdot,n}(\theta))_s $ with $ S_{s,n}(\theta) = (\mathbf{W}_K \mathbf{x}_s)^{\top} (\mathbf{W}_Q \mathbf{x}_n) $,
$ \mathbf{M}_\theta = \sum_{s \in \mathcal{T}^n} A_{s,n}(\theta) (\mathbf{x}_s - \bar{\mathbf{x}}_n) \mathbf{x}_s^{\top} $ (aggregate neighbor term).\newline
Then the gradient perturbation $\nabla_{\mathbf{W}_{K}}F(\mathbf{x}_{n}|\theta)-\nabla_{\mathbf{W}_{K}}F(\mathbf{x}_{n}|\theta')$ arises from differences in $ \theta $ and $ \theta' $. According to \eqref{equ:GraTrans_Grad}, we apply the triangle inequality  and Lipschitz continuity of matrix multiplication/activation functions and then decompose the perturbation into contributions from each parameter:
\begin{equation}\label{equ:GraTrans_per}
\|\nabla_{\mathbf{W}_{K}}F(\mathbf{x}_{n}|\theta)-\nabla_{\mathbf{W}_{K}}F(\mathbf{x}_{n}|\theta')\|_{2}\leq \sum_{\phi \in \{\theta\}} \|\nabla_{\mathbf{W}_{K}}F(\mathbf{x}_{n}|\theta) - \nabla_{\mathbf{W}_{K}}F(\mathbf{x}_{n}|\theta_{\phi \to \phi'})\|_{2}
\end{equation}
where $ \theta_{\phi \to \phi'} $ replaces parameter $ \phi $ with $ \phi' $ while keeping others fixed.}

\begin{itemize}
    \item  \textcolor{black}{Contribution from $ \mathbf{a} - \mathbf{a}' $:
The term $ \mathbf{a}^{\top} $ in \eqref{equ:GraTrans_per} introduces a perturbation bounded by: 
\[
\|\nabla_{\mathbf{W}_{K}}F(\mathbf{x}_{n}|\theta) - \nabla_{\mathbf{W}_{K}}F(\mathbf{x}_{n}|\theta_{\mathbf{a} \to \mathbf{a}'})\|_{2} \leq \|\mathbf{a} - \mathbf{a}'\|_{2} \cdot \|\mathbb{I}_{\geq 0}(\mathbf{Z}_{\theta})\|_{2} \cdot \|\mathbf{W}_O\|_{2} \cdot \|\mathbf{W}_V\|_{2} \cdot \|\mathbf{M}_{\theta}\|_{2} \cdot \|\mathbf{W}_Q \mathbf{x}_n\|_{2},
\]
Using
\begin{equation}\label{equ:GraTrans_bound_assump}
\|\mathbb{I}_{\geq 0}(\mathbf{Z}_{\theta})\|_{2} \leq 1, \quad \|\mathbf{M}_{\theta}\|_{2} \leq K_{\max} C_\mathbf{x}^2, \quad \|\mathbf{W}_Q \mathbf{x}_n\|_{2} \leq B C_\mathbf{x},
\end{equation}
where $K_{\max}$ is the maximum neighborhood size, we have
\[
\|\nabla_{\mathbf{W}_{K}}F(\mathbf{x}_{n}|\theta) - \nabla_{\mathbf{W}_{K}}F(\mathbf{x}_{n}|\theta_{\mathbf{a} \to \mathbf{a}'})\|_{2} \leq \|\mathbf{a} - \mathbf{a}'\|_{2} \cdot 1 \cdot B \cdot B \cdot K_{\max} C_\mathbf{x}^2 \cdot B C_\mathbf{x} = \|\mathbf{a} - \mathbf{a}'\|_{2} \cdot K_{\max} B^3 C_\mathbf{x}^3.
\]
}
\item \textcolor{black}{Contribution from $ \mathbf{W}_O - \mathbf{W}'_O $:
The term $\mathbf{ W}_O $ affects both $ \mathbf{Z}_\theta $ and the gradient product. By Lipschitz continuity of matrix multiplication:
\[
\|\nabla_{\mathbf{W}_{K}}F(\mathbf{x}_{n}|\theta) - \nabla_{\mathbf{W}_{K}}F(\mathbf{x}_{n}|\theta_{\mathbf{W}_O \to \mathbf{W}'_O})\|_{2}
\leq \|\mathbf{W}_O - \mathbf{W}'_O\|_{2} \cdot \|\mathbf{a}\|_{2} \cdot \|\mathbb{I}_{\geq 0}(\mathbf{Z}_{\theta})\|_{2} \cdot \|\mathbf{W}_V\|_{2} \cdot \|\mathbf{M}_{\theta}\|_{2} \cdot \|\mathbf{W}_Q \mathbf{x}_n\|_{2}.
\]
Substituting \eqref{equ:GraTrans_bound_assump}:
\[
\|\nabla_{\mathbf{W}_{K}}F(\mathbf{x}_{n}|\theta) - \nabla_{\mathbf{W}_{K}}F(\mathbf{x}_{n}|\theta_{\mathbf{W}_O \to \mathbf{W}'_O})\|_{2} \leq \|\mathbf{W}_O - \mathbf{W}'_O\| \cdot B \cdot 1 \cdot B \cdot K_{\max} C_\mathbf{x}^2 \cdot B C_\mathbf{x} = \|\mathbf{W}_O - \mathbf{W}'_O\|_{2} \cdot K_{\max} B^3 C_\mathbf{x}^3.
\]
}

\item \textcolor{black}{Contribution from $ \mathbf{W}_V - \mathbf{W}'_V $:
$ \mathbf{W}_V $ is independent of $ \text{Attn}(\mathbf{x}_n) $ and $ \mathbf{M}_\theta$. The perturbation bound is:
\[
\|\nabla_{\mathbf{W}_{K}}F(\mathbf{x}_{n}|\theta) - \nabla_{\mathbf{W}_{K}}F(\mathbf{x}_{n}|\theta_{\mathbf{W}_V \to \mathbf{W}'_V})\|_{2} \leq \|\mathbf{W}_V - \mathbf{W}'_V\|_{2} \cdot \|\mathbf{a}\|_{2} \cdot \|\mathbb{I}_{\geq 0}(\mathbf{Z}_{\theta})\|_{2} \cdot \|\mathbf{W}_O\|_{2} \cdot \|\mathbf{M}_{\theta}\|_{2} \cdot \|\mathbf{W}_Q \mathbf{x}_n\|_{2}.
\]
By symmetry with $ \mathbf{W}_O $:
\[
\|\nabla_{\mathbf{W}_{K}}F(\mathbf{x}_{n}|\theta) - \nabla_{\mathbf{W}_{K}}F(\mathbf{x}_{n}|\theta_{\mathbf{W}_V \to \mathbf{W}'_V})\|_{2} \leq \|\mathbf{W}_V - \mathbf{W}'_V\|_{2} \cdot K_{\max} B^3 C_\mathbf{x}^3.
\]
}

\item \textcolor{black}{Contribution from $ \mathbf{W}_Q - \mathbf{W}'_Q $:
$ \mathbf{W}_Q $ affects attention scores $ S_{s,n} $ and thus $ A_{s,n} $ and $ \mathbf{M}_\theta$. Using Lipschitzness of softmax and matrix multiplication:
\[
\|\nabla_{\mathbf{W}_{K}}F(\mathbf{x}_{n}|\theta) - \nabla_{\mathbf{W}_{K}}F(\mathbf{x}_{n}|\theta_{\mathbf{W}_Q \to \mathbf{W}'_Q})\|_{2} \leq 2e \|\mathbf{W}_Q - \mathbf{W}'_Q\|_{2} \cdot \|\mathbf{a}\|_{2} \cdot \|\mathbb{I}_{\geq 0}(\mathbf{Z}_{\theta})\|_{2} \cdot \|\mathbf{W}_O\|_{2} \cdot \|\mathbf{W}_V\|_{2} \cdot \|\mathbf{M}_{\theta}\|_{2} \cdot C_\mathbf{x}.
\]
Substituting bounds:
\[
\|\nabla_{\mathbf{W}_{K}}F(\mathbf{x}_{n}|\theta) - \nabla_{\mathbf{W}_{K}}F(\mathbf{x}_{n}|\theta_{\mathbf{W}_Q \to \mathbf{W}'_Q})\|_{2} \leq 2e\|\mathbf{W}_Q - \mathbf{W}'_Q\|_{2} \cdot K_{\max} B^3 C_\mathbf{x}^3.
\]
}

\item \textcolor{black}{Contribution from $ \mathbf{W}_K - \mathbf{W}'_K $:
$ \mathbf{W}_K $ directly impacts $ S_{s,n} $, $ A_{s,n} $, and $ \mathbf{M}_\theta$. By analogous reasoning:
\[
\|\nabla_{\mathbf{W}_{K}}F(\mathbf{x}_{n}|\theta) - \nabla_{\mathbf{W}_{K}}F(\mathbf{x}_{n}|\theta_{\mathbf{W}_K \to \mathbf{W}'_K})\|_{2} \leq 2e
\|\mathbf{W}_K - \mathbf{W}'_K\|_{2} \cdot K_{\max} B^3 C_\mathbf{x}^3.
\]
}

\textcolor{black}{
     According to \eqref{equ:GraTrans_per}, total gradient perturbation bound
summing all contributions, and we get:
\[
\|\nabla_{\mathbf{W}_{K}}F(\mathbf{x}_{n}|\theta)-\nabla_{\mathbf{W}_{K}}F(\mathbf{x}_{n}|\theta')\|_{2} \leq 2e K_{\max} B^3 C_\mathbf{x}^3  \|\triangle\theta\|_{*},
\]
where $ \|\triangle\theta\|_{*} = \|\mathbf{W}_K - \mathbf{W}'_K\|_{2} + \|\mathbf{W}_V - \mathbf{W}'_V\|_{2} + \|\mathbf{W}_O - \mathbf{W}'_O\|_{2} + \|\mathbf{W}_Q - \mathbf{W}'_Q\|_{2} + \|\mathbf{a} - \mathbf{a}'\|_{2}$.
}
\end{itemize}

\bibliographystyle{elsarticle-num}
\bibliography{references}

\end{document}